\documentclass{article} 

\usepackage{graphicx}
\usepackage{caption}
\graphicspath{
    {figures/}
}

\usepackage{amsmath,amssymb,enumerate}

\usepackage{amsthm}
\usepackage{setspace}
\usepackage{booktabs}
\usepackage[usenames,dvipsnames,svgnames,table]{xcolor}
\usepackage{mathtools}
\usepackage{blindtext}
\usepackage{gensymb}
\usepackage{xparse}
\usepackage{lipsum}
\usepackage{mathrsfs}
\usepackage[mathscr]{euscript}
\usepackage{times}
\usepackage[numbers,sort&compress]{natbib}
\usepackage{multicol}
\definecolor{darkgreen}{rgb}{0,0.6,0}
\definecolor{cvprblue}{rgb}{0.21,0.49,0.74}
\definecolor{linkcolor}{rgb}{1.0, 0.0 ,0.0}
\usepackage[pagebackref=false,breaklinks=true, colorlinks, citecolor=cvprblue, linkcolor=linkcolor, bookmarks=false]{hyperref}
\usepackage{amsfonts}
\usepackage{cleveref}
\usepackage[utf8]{inputenc}
\usepackage[T1]{fontenc}
\usepackage{textcomp}
\usepackage{arydshln}
\usepackage{tabu}
\usepackage{cuted}
\usepackage{flushend}
\usepackage{balance}

\usepackage{thmtools,thm-restate}

\newtheorem{construction}{Construction}
\newtheorem{problem}{Problem}
\newtheorem{theorem}{Theorem}
\newtheorem{lemma}{Lemma}

\newtheorem{remark}{Remark}

\usepackage{enumitem}

\definecolor{note}{rgb}{0.1,0.1,1}
\definecolor{rephase}{rgb}{0.15,0.7,0.15}
\definecolor{bag}{rgb}{0.6,0.6,0.2}

\makeatletter
\renewcommand*\env@matrix[1][c]{\hskip -\arraycolsep
  \let\@ifnextchar\new@ifnextchar
  \array{*\c@MaxMatrixCols #1}}
\makeatother

\makeatletter
\newcommand{\mathleft}{\@fleqntrue\@mathmargin0pt}
\newcommand{\mathcenter}{\@fleqnfalse}
\makeatother

\usepackage{bm}
\usepackage{cleveref}

\usepackage{amsfonts}
\usepackage{amssymb}
\usepackage{amsbsy}
\usepackage{amsmath}
\usepackage{graphicx}
\usepackage{subcaption}
\usepackage{algorithm}
\usepackage{tablefootnote}
\usepackage{algorithm}
\usepackage{multirow}
\usepackage{algpseudocode}
\usepackage{soul}

\usepackage{wrapfig}
\usepackage{diagbox}
\usepackage{appendix}
\usepackage{graphicx}
\usepackage{subcaption}  

\usepackage{wrapfig}
\usepackage{caption} 

\usepackage{adjustbox}
\usepackage{iclr2025_conference,times}
\usepackage[bottom]{footmisc}

\usepackage{amsmath,amsfonts,bm}

\def\eqref#1{equation~\ref{#1}}

\def\1{\bm{1}}

\DeclareMathAlphabet{\mathsfit}{\encodingdefault}{\sfdefault}{m}{sl}
\SetMathAlphabet{\mathsfit}{bold}{\encodingdefault}{\sfdefault}{bx}{n}

\usepackage{url}

\title{Learning Continuous Neural Representation of Stochastic Hybrid Systems} 

\author{Sangli Teng, Hang Liu, Koushil Sreenath}

\renewcommand{\eqref}[1]{\textup{(\ref{#1})}}

\iclrfinalcopy 
\begin{document}
\newcommand{\TableBaselineMeanHorizonOneSecond}{%
\begin{table*}
\centering
\small
\setlength{\tabcolsep}{6pt}
\renewcommand{\arraystretch}{1.1}
\caption{
Baseline comparison of conditional (unconditional) path distribution loss across 11 benchmark problems.
Per-case means and sample standard deviations are reported in \Cref{tab:baseline-conditional-energy-1s,tab:baseline-trajectory-energy-1s}. $\star$ denotes using $\mathcal{L}_x$ and $\mathcal{L}_\mathrm{x}^{\mathrm{traj}}$.
Horizon: $1\,\mathrm{s}$. }
\label{tab:baseline-mean-1s}
\resizebox{\textwidth}{!}{%
\begin{tabular}{l*{6}{c}}
\toprule
 & Ours & \citep{teng2026embedding} & \citep{li2020scalable} & \citep{li2020scalable}$\star$ & \citep{bartosh2025sde} & \citep{kiyohara2025neural} \\
\midrule
b-ball (GMM)
& $\mathbf{0.257}\;(\mathbf{0.214})$ & Diverged & $0.369\;(1.12)$ & $0.338\;(0.879)$ & $0.620\;(4.14)$ & $0.594\;(4.22)$ \\
b-ball (Uniform)
& $\mathbf{0.289}\;(\mathbf{0.249})$ & Diverged & $0.404\;(1.03)$ & $0.380\;(0.873)$ & $0.732\;(4.84)$ & $0.542\;(3.23)$ \\
Torus (GMM)
& $\mathbf{0.603}\;(\mathbf{0.333})$ & Diverged & $1.84\;(6.17)$ & $4.32\;(11.3)$ & Diverged & $0.909\;(6.27)$ \\
Torus (Uniform)
& $\mathbf{0.560}\;(\mathbf{0.304})$ & Diverged & $2.67\;(40.5)$ & $1.51\;(10.3)$ & Diverged & $0.944\;(7.19)$ \\
Klein (GMM)
& $\mathbf{0.612}\;(\mathbf{0.341})$ & Diverged & Diverged & Diverged & $10.2\;(256)$ & $0.903\;(6.05)$ \\
Klein (Uniform)
& $\mathbf{0.560}\;(\mathbf{0.298})$ & Diverged & $1.33\;(15.1)$ & $1.96\;(29.2)$ & $7.84\;(193)$ & $0.919\;(6.66)$ \\
Klein--Torus
& $\mathbf{0.455}\;(\mathbf{0.257})$ & Diverged & $2.67\;(44.3)$ & $2.97\;(52.3)$ & Diverged & $0.989\;(8.25)$ \\
1-ball (GMM)
& $\mathbf{0.569}\;(\mathbf{0.570})$ & Diverged & $0.770\;(1.68)$ & $0.978\;(4.43)$ & $5.69\;(140)$ & Diverged \\
1-ball (Uniform)
& $\mathbf{0.569}\;(\mathbf{0.525})$ & Diverged & $0.781\;(1.46)$ & $0.751\;(1.76)$ & $6.83\;(192)$ & Diverged \\
2-balls (GMM)
& $\mathbf{0.722}\;(\mathbf{1.04})$ & $1.03\;(2.07)$ & $0.869\;(2.22)$ & $0.883\;(2.70)$ & Diverged & Diverged \\
2-balls (Uniform)
& $\mathbf{0.760}\;(\mathbf{1.07})$ & $1.06\;(2.00)$ & $0.985\;(3.39)$ & $0.984\;(4.23)$ & Diverged & Diverged \\
\bottomrule
\end{tabular}%
}
\vspace{-3mm}
\end{table*}
}

\newcommand{\TableAblationSummaryHorizonOneSecond}{%
\begin{table}
\centering
\footnotesize
\setlength{\tabcolsep}{4pt}
\renewcommand{\arraystretch}{1.15}
\caption{
Ablation summary across 11 benchmark settings on the conditional (unconditional) trajectory distribution loss.
Mean change is the average of per-setting percentage changes relative to the default model;
positive values indicate degradation.
Worse cases count the settings with a higher mean test loss than the default model
(out of 11), using the means over five training seeds.
MLP replaces the diffusion sampler with a deterministic MLP encoder;
loss labels identify the removed objectives.
Detailed results are presented in \Cref{tab:ablation-split-latent-conditional-energy-loss-1s,tab:ablation-split-sampler-conditional-energy-loss-1s,tab:ablation-split-latent-x-trajectory-loss-1s,tab:ablation-split-weights-conditional-energy-loss-1s,tab:ablation-split-weights-x-trajectory-loss-1s}.
}
\label{tab:ablation-summary-1s}
\resizebox{\textwidth}{!}{%
\begin{tabular}{l*{11}{c}}
\toprule
Setting
& $d_z=d_x$
& $d_z=2d_x$
& $d_z=3d_x$
& MLP
& $\mathcal L_{\mathrm{x}}^{\mathrm{traj,c}}$
& $\mathcal L_z$
& $\mathcal L_x$
& $\mathcal L_{\mathrm{x}}^{\mathrm{traj}}$
& $\mathcal L_{\mathrm{x}}^{\mathrm{traj,c}},\ \mathcal L_{\mathrm{x}}^{\mathrm{traj}}$
& $\mathcal L_x,\ \mathcal L_{\mathrm{x}}^{\mathrm{traj}}$
& $\mathcal L_{\mathrm{x}}^{\mathrm{traj,c}},\ \mathcal L_x$ \\
\midrule
Mean change (\%)
& $+14.1\;(+71.4)$
& $+2.9\;(+12.2)$
& $+0.3\;(+4.7)$
& $+3.6\;(+14.7)$
& $+6.2\;(+6.6)$
& $+19.5\;(+246.6)$
& $+1.2\;(+8.8)$
& $+0.0\;(+2.1)$
& $+23.3\;(+54.7)$
& $-0.7\;(+3.8)$
& $+5.7\;(+4.4)$ \\
Worse cases (/11)
& $11\;(11)$
& $8\;(6)$
& $5\;(5)$
& $8\;(8)$
& $7\;(7)$
& $11\;(11)$
& $7\;(9)$
& $6\;(5)$
& $11\;(11)$
& $5\;(7)$
& $6\;(5)$ \\
\bottomrule
\end{tabular}%
}
\vspace{-2mm}
\end{table}
}

\newcommand{\TableBaselineConditionalEnergyHorizonOneSecond}{%
\begin{table*}[!b]
\centering
\small
\setlength{\tabcolsep}{6pt}
\renewcommand{\arraystretch}{1.1}
\caption{
Baseline comparison measured by test-set conditional path distribution loss $\mathcal L_{\mathrm{x}}^{\mathrm{traj,c}}$ ($\downarrow$).
Horizon: $1\,\mathrm{s}$.
}
\label{tab:baseline-conditional-energy-1s}
\resizebox{\textwidth}{!}{%
\begin{tabular}{l*{6}{r@{$\,\pm\,$}l}}
\toprule
Case & \multicolumn{2}{c}{Ours}
& \multicolumn{2}{c}{\citep{teng2026embedding}}
& \multicolumn{2}{c}{\citep{li2020scalable}}
& \multicolumn{2}{c}{\citep{li2020scalable}$\star$}
& \multicolumn{2}{c}{\citep{bartosh2025sde}}
& \multicolumn{2}{c}{\citep{kiyohara2025neural}} \\
\midrule
b-ball (GMM)
& $\mathbf{0.257}$ & $0.0158$
& \multicolumn{2}{c}{Diverged}
& $0.369$ & $0.0464$
& $0.338$ & $0.0269$
& $0.620$ & $0.265$
& $0.594$ & $0.00740$ \\
b-ball (Uniform)
& $\mathbf{0.289}$ & $0.0317$
& \multicolumn{2}{c}{Diverged}
& $0.404$ & $0.0856$
& $0.380$ & $0.0984$
& $0.732$ & $0.295$
& $0.542$ & $0.00676$ \\
Torus (GMM)
& $\mathbf{0.603}$ & $0.00347$
& \multicolumn{2}{c}{Diverged}
& $1.84$ & $2.48$
& $4.32$ & $7.81$
& \multicolumn{2}{c}{Diverged}
& $0.909$ & $0.0214$ \\
Torus (Uniform)
& $\mathbf{0.560}$ & $0.00756$
& \multicolumn{2}{c}{Diverged}
& $2.67$ & $3.93$
& $1.51$ & $1.60$
& \multicolumn{2}{c}{Diverged}
& $0.944$ & $0.0184$ \\
Klein (GMM)
& $\mathbf{0.612}$ & $0.0124$
& \multicolumn{2}{c}{Diverged}
& \multicolumn{2}{c}{Diverged}
& \multicolumn{2}{c}{Diverged}
& $10.2$ & $8.15$
& $0.903$ & $0.0120$ \\
Klein (Uniform)
& $\mathbf{0.560}$ & $0.0178$
& \multicolumn{2}{c}{Diverged}
& $1.33$ & $0.835$
& $1.96$ & $2.27$
& $7.84$ & $6.17$
& $0.919$ & $0.0175$ \\
Klein--Torus
& $\mathbf{0.455}$ & $0.0149$
& \multicolumn{2}{c}{Diverged}
& $2.67$ & $1.86$
& $2.97$ & $2.15$
& \multicolumn{2}{c}{Diverged}
& $0.989$ & $0.0386$ \\
1-ball (GMM)
& $\mathbf{0.569}$ & $0.00536$
& \multicolumn{2}{c}{Diverged}
& $0.770$ & $0.0625$
& $0.978$ & $0.250$
& $5.69$ & $3.34$
& \multicolumn{2}{c}{Diverged} \\
1-ball (Uniform)
& $\mathbf{0.569}$ & $0.0188$
& \multicolumn{2}{c}{Diverged}
& $0.781$ & $0.0768$
& $0.751$ & $0.0752$
& $6.83$ & $2.73$
& \multicolumn{2}{c}{Diverged} \\
2-balls (GMM)
& $\mathbf{0.722}$ & $0.0185$
& $1.03$ & $0.0155$
& $0.869$ & $0.0177$
& $0.883$ & $0.0251$
& \multicolumn{2}{c}{Diverged}
& \multicolumn{2}{c}{Diverged} \\
2-balls (Uniform)
& $\mathbf{0.760}$ & $0.0183$
& $1.06$ & $0.00993$
& $0.985$ & $0.0590$
& $0.984$ & $0.0490$
& \multicolumn{2}{c}{Diverged}
& \multicolumn{2}{c}{Diverged} \\
\bottomrule
\end{tabular}%
}
\end{table*}
}

\newcommand{\TableBaselineTrajectoryEnergyHorizonOneSecond}{%
\begin{table*}
\centering
\scriptsize
\setlength{\tabcolsep}{6pt}
\renewcommand{\arraystretch}{1.1}
\caption{
Baseline comparison measured by test-set unconditional path distribution loss $\mathcal L_{\mathrm{x}}^{\mathrm{traj}}$ ($\downarrow$).
Horizon: $1\,\mathrm{s}$.
}
\label{tab:baseline-trajectory-energy-1s}
\resizebox{\textwidth}{!}{%
\begin{tabular}{l*{6}{r@{$\,\pm\,$}l}}
\toprule
Case & \multicolumn{2}{c}{Ours}
& \multicolumn{2}{c}{\citep{teng2026embedding}}
& \multicolumn{2}{c}{\citep{li2020scalable}}
& \multicolumn{2}{c}{\citep{li2020scalable}$\star$}
& \multicolumn{2}{c}{\citep{bartosh2025sde}}
& \multicolumn{2}{c}{\citep{kiyohara2025neural}} \\
\midrule
b-ball (GMM)
& $\mathbf{0.214}$ & $0.0558$
& \multicolumn{2}{c}{Diverged}
& $1.12$ & $0.571$
& $0.879$ & $0.293$
& $4.14$ & $3.75$
& $4.22$ & $0.0888$ \\
b-ball (Uniform)
& $\mathbf{0.249}$ & $0.0819$
& \multicolumn{2}{c}{Diverged}
& $1.03$ & $0.708$
& $0.873$ & $0.673$
& $4.84$ & $4.54$
& $3.23$ & $0.0527$ \\
Torus (GMM)
& $\mathbf{0.333}$ & $0.0133$
& \multicolumn{2}{c}{Diverged}
& $6.17$ & $9.07$
& $11.3$ & $16.3$
& \multicolumn{2}{c}{Diverged}
& $6.27$ & $0.551$ \\
Torus (Uniform)
& $\mathbf{0.304}$ & $0.0319$
& \multicolumn{2}{c}{Diverged}
& $40.5$ & $76.1$
& $10.3$ & $15.1$
& \multicolumn{2}{c}{Diverged}
& $7.19$ & $0.492$ \\
Klein (GMM)
& $\mathbf{0.341}$ & $0.0201$
& \multicolumn{2}{c}{Diverged}
& \multicolumn{2}{c}{Diverged}
& \multicolumn{2}{c}{Diverged}
& $256$ & $231$
& $6.05$ & $0.330$ \\
Klein (Uniform)
& $\mathbf{0.298}$ & $0.0317$
& \multicolumn{2}{c}{Diverged}
& $15.1$ & $17.5$
& $29.2$ & $49.4$
& $193$ & $171$
& $6.66$ & $0.451$ \\
Klein--Torus
& $\mathbf{0.257}$ & $0.0206$
& \multicolumn{2}{c}{Diverged}
& $44.3$ & $41.8$
& $52.3$ & $48.7$
& \multicolumn{2}{c}{Diverged}
& $8.25$ & $1.08$ \\
1-ball (GMM)
& $\mathbf{0.570}$ & $0.0518$
& \multicolumn{2}{c}{Diverged}
& $1.68$ & $0.441$
& $4.43$ & $2.87$
& $140$ & $125$
& \multicolumn{2}{c}{Diverged} \\
1-ball (Uniform)
& $\mathbf{0.525}$ & $0.0579$
& \multicolumn{2}{c}{Diverged}
& $1.46$ & $0.277$
& $1.76$ & $0.894$
& $192$ & $96.4$
& \multicolumn{2}{c}{Diverged} \\
2-balls (GMM)
& $\mathbf{1.04}$ & $0.162$
& $2.07$ & $0.0647$
& $2.22$ & $0.314$
& $2.70$ & $0.834$
& \multicolumn{2}{c}{Diverged}
& \multicolumn{2}{c}{Diverged} \\
2-balls (Uniform)
& $\mathbf{1.07}$ & $0.0852$
& $2.00$ & $0.115$
& $3.39$ & $1.47$
& $4.23$ & $1.42$
& \multicolumn{2}{c}{Diverged}
& \multicolumn{2}{c}{Diverged} \\
\bottomrule
\end{tabular}%
}
\end{table*}
}

\newcommand{\TableLatentConditionalEnergyHorizonOneSecond}{%
\begin{table*}
\centering
\scriptsize
\setlength{\tabcolsep}{4pt}
\renewcommand{\arraystretch}{1.1}
\caption{
Latent-dimension ablation measured by test-set conditional path distribution loss $\mathcal L_{\mathrm{x}}^{\mathrm{traj,c}}$ ($\downarrow$).
Horizon: $1\,\mathrm{s}$.
}
\label{tab:ablation-split-latent-conditional-energy-loss-1s}
\begin{tabular}{l*{4}{r@{$\,\pm\,$}l}}
\toprule
 & \multicolumn{2}{c}{$d_z=d_x$} & \multicolumn{2}{c}{$d_z=2d_x$} & \multicolumn{2}{c}{$d_z=3d_x$} & \multicolumn{2}{c}{$d_z=4d_x$ (default)} \\
\midrule
b-ball (GMM)
& $0.259$ & $0.0172$
& $0.275$ & $0.0247$
& $0.270$ & $0.0227$
& $\mathbf{0.257}$ & $0.0158$ \\
b-ball (Uniform)
& $0.340$ & $0.0712$
& $0.338$ & $0.0656$
& $0.300$ & $0.0212$
& $\mathbf{0.289}$ & $0.0317$ \\
Torus (GMM)
& $0.696$ & $0.0440$
& $0.614$ & $0.0143$
& $0.608$ & $0.00535$
& $\mathbf{0.603}$ & $0.00347$ \\
Torus (Uniform)
& $0.685$ & $0.0171$
& $0.563$ & $0.0121$
& $\mathbf{0.553}$ & $0.00406$
& $0.560$ & $0.00756$ \\
Klein (GMM)
& $0.689$ & $0.0206$
& $0.630$ & $0.0189$
& $\mathbf{0.611}$ & $0.00336$
& $0.612$ & $0.0124$ \\
Klein (Uniform)
& $0.640$ & $0.0354$
& $0.562$ & $0.00371$
& $\mathbf{0.559}$ & $0.00286$
& $0.560$ & $0.0178$ \\
Klein--Torus
& $0.670$ & $0.0568$
& $0.479$ & $0.0384$
& $\mathbf{0.441}$ & $0.0155$
& $0.455$ & $0.0149$ \\
1-ball (GMM)
& $0.628$ & $0.0163$
& $0.571$ & $0.0156$
& $0.571$ & $0.00761$
& $\mathbf{0.569}$ & $0.00536$ \\
1-ball (Uniform)
& $0.604$ & $0.0210$
& $0.561$ & $0.0158$
& $\mathbf{0.560}$ & $0.00796$
& $0.569$ & $0.0188$ \\
2-balls (GMM)
& $0.753$ & $0.0193$
& $\mathbf{0.710}$ & $0.0105$
& $0.713$ & $0.00721$
& $0.722$ & $0.0185$ \\
2-balls (Uniform)
& $0.796$ & $0.0182$
& $\mathbf{0.759}$ & $0.0209$
& $0.766$ & $0.0217$
& $0.760$ & $0.0183$ \\
\bottomrule
\end{tabular}%
\end{table*}
}

\newcommand{\TableLatentTrajectoryEnergyHorizonOneSecond}{%
\begin{table*}
\centering
\scriptsize
\setlength{\tabcolsep}{4pt}
\renewcommand{\arraystretch}{1.1}
\caption{
Latent-dimension ablation measured by test-set unconditional path distribution loss $\mathcal L_{\mathrm{x}}^{\mathrm{traj}}$ ($\downarrow$).
Horizon: $1\,\mathrm{s}$.
}
\label{tab:ablation-split-latent-x-trajectory-loss-1s}
\begin{tabular}{l*{4}{r@{$\,\pm\,$}l}}
\toprule
 & \multicolumn{2}{c}{$d_z=d_x$} & \multicolumn{2}{c}{$d_z=2d_x$} & \multicolumn{2}{c}{$d_z=3d_x$} & \multicolumn{2}{c}{$d_z=4d_x$ (default)} \\
\midrule
b-ball (GMM)
& $0.274$ & $0.0452$
& $0.311$ & $0.107$
& $0.336$ & $0.147$
& $\mathbf{0.214}$ & $0.0558$ \\
b-ball (Uniform)
& $0.680$ & $0.524$
& $0.460$ & $0.290$
& $0.274$ & $0.0762$
& $\mathbf{0.249}$ & $0.0819$ \\
Torus (GMM)
& $0.877$ & $0.902$
& $0.342$ & $0.0256$
& $0.350$ & $0.0165$
& $\mathbf{0.333}$ & $0.0133$ \\
Torus (Uniform)
& $0.509$ & $0.0629$
& $0.302$ & $0.0256$
& $\mathbf{0.286}$ & $0.0155$
& $0.304$ & $0.0319$ \\
Klein (GMM)
& $0.553$ & $0.119$
& $0.381$ & $0.0473$
& $\mathbf{0.339}$ & $0.0272$
& $0.341$ & $0.0201$ \\
Klein (Uniform)
& $0.445$ & $0.0601$
& $0.296$ & $0.0144$
& $\mathbf{0.296}$ & $0.0223$
& $0.298$ & $0.0317$ \\
Klein--Torus
& $0.551$ & $0.135$
& $0.274$ & $0.0250$
& $\mathbf{0.240}$ & $0.00911$
& $0.257$ & $0.0206$ \\
1-ball (GMM)
& $0.841$ & $0.112$
& $\mathbf{0.553}$ & $0.0625$
& $0.572$ & $0.0374$
& $0.570$ & $0.0518$ \\
1-ball (Uniform)
& $0.675$ & $0.116$
& $0.515$ & $0.0642$
& $\mathbf{0.497}$ & $0.0263$
& $0.525$ & $0.0579$ \\
2-balls (GMM)
& $1.28$ & $0.107$
& $\mathbf{0.917}$ & $0.0574$
& $0.957$ & $0.0808$
& $1.04$ & $0.162$ \\
2-balls (Uniform)
& $1.36$ & $0.175$
& $1.08$ & $0.138$
& $1.14$ & $0.0585$
& $\mathbf{1.07}$ & $0.0852$ \\
\bottomrule
\end{tabular}%
\end{table*}
}

\newcommand{\TableSamplerAblationHorizonOneSecond}{%
\begin{table*}
\centering
\scriptsize
\setlength{\tabcolsep}{4pt}
\renewcommand{\arraystretch}{1.1}
\caption{
Sampler ablation measured by test-set conditional and unconditional path distribution losses ($\downarrow$).
Horizon: $1\,\mathrm{s}$.
}
\label{tab:ablation-split-sampler-conditional-energy-loss-1s}
\label{tab:ablation-split-sampler-x-trajectory-loss-1s}
\begin{tabular}{l*{4}{r@{$\,\pm\,$}l}}
\toprule
 & \multicolumn{4}{c}{DDIM} & \multicolumn{4}{c}{MLP} \\
\cmidrule(lr){2-5}\cmidrule(lr){6-9}
Case & \multicolumn{2}{c}{Conditional} & \multicolumn{2}{c}{Unconditional}
& \multicolumn{2}{c}{Conditional} & \multicolumn{2}{c}{Unconditional} \\
\midrule
b-ball (GMM)
& $\mathbf{0.257}$ & $0.0158$
& $\mathbf{0.214}$ & $0.0558$
& $0.285$ & $0.0162$
& $0.344$ & $0.0983$ \\
b-ball (Uniform)
& $\mathbf{0.289}$ & $0.0317$
& $\mathbf{0.249}$ & $0.0819$
& $0.332$ & $0.0303$
& $0.457$ & $0.122$ \\
Torus (GMM)
& $\mathbf{0.603}$ & $0.00347$
& $\mathbf{0.333}$ & $0.0133$
& $0.610$ & $0.00799$
& $0.342$ & $0.0214$ \\
Torus (Uniform)
& $0.560$ & $0.00756$
& $0.304$ & $0.0319$
& $\mathbf{0.560}$ & $0.00377$
& $\mathbf{0.299}$ & $0.0136$ \\
Klein (GMM)
& $\mathbf{0.612}$ & $0.0124$
& $\mathbf{0.341}$ & $0.0201$
& $0.613$ & $0.00592$
& $0.345$ & $0.00873$ \\
Klein (Uniform)
& $\mathbf{0.560}$ & $0.0178$
& $\mathbf{0.298}$ & $0.0317$
& $0.565$ & $0.00473$
& $0.317$ & $0.0110$ \\
Klein--Torus
& $\mathbf{0.455}$ & $0.0149$
& $\mathbf{0.257}$ & $0.0206$
& $0.489$ & $0.0264$
& $0.281$ & $0.0265$ \\
1-ball (GMM)
& $\mathbf{0.569}$ & $0.00536$
& $\mathbf{0.570}$ & $0.0518$
& $0.588$ & $0.00513$
& $0.591$ & $0.0551$ \\
1-ball (Uniform)
& $\mathbf{0.569}$ & $0.0188$
& $\mathbf{0.525}$ & $0.0579$
& $0.591$ & $0.0151$
& $0.578$ & $0.0835$ \\
2-balls (GMM)
& $0.722$ & $0.0185$
& $1.04$ & $0.162$
& $\mathbf{0.707}$ & $0.00640$
& $\mathbf{0.929}$ & $0.0647$ \\
2-balls (Uniform)
& $0.760$ & $0.0183$
& $1.07$ & $0.0852$
& $\mathbf{0.751}$ & $0.0178$
& $\mathbf{1.02}$ & $0.0815$ \\
\bottomrule
\end{tabular}%
\end{table*}
}

\newcommand{\TableLossAblationConditionalEnergyHorizonOneSecond}{%
\begin{table*}
\centering
\scriptsize
\setlength{\tabcolsep}{4pt}
\renewcommand{\arraystretch}{1.1}
\caption{
Loss-function ablation measured by test-set conditional path distribution loss $\mathcal L_{\mathrm{x}}^{\mathrm{traj,c}}$ ($\downarrow$).
Column labels other than Default model identify the removed objectives (their weights are set to zero).
Values are the mean $\pm$ sample standard deviation over five training seeds.
The lowest mean among finite entries in each row is bold.
Diverged denotes a configuration with nonfinite results or either test metric exceeding $10^3$ in any seed.
Horizon: $1\,\mathrm{s}$.
}
\label{tab:ablation-split-weights-conditional-energy-loss-1s}
\resizebox{\textwidth}{!}{%
\begin{tabular}{l*{8}{r@{$\,\pm\,$}l}}
\toprule
Case & \multicolumn{2}{c}{Default model} & \multicolumn{2}{c}{$\mathcal L_{\mathrm{x}}^{\mathrm{traj,c}}$} & \multicolumn{2}{c}{$\mathcal L_z$} & \multicolumn{2}{c}{$\mathcal L_x$} & \multicolumn{2}{c}{$\mathcal L_{\mathrm{x}}^{\mathrm{traj}}$} & \multicolumn{2}{c}{$\mathcal L_{\mathrm{x}}^{\mathrm{traj,c}},\ \mathcal L_{\mathrm{x}}^{\mathrm{traj}}$} & \multicolumn{2}{c}{$\mathcal L_x,\ \mathcal L_{\mathrm{x}}^{\mathrm{traj}}$} & \multicolumn{2}{c}{$\mathcal L_{\mathrm{x}}^{\mathrm{traj,c}},\ \mathcal L_x$} \\
\midrule
b-ball (GMM)
& $0.257$ & $0.0158$
& $0.255$ & $0.00746$
& $0.288$ & $0.0348$
& $0.280$ & $0.0367$
& $0.261$ & $0.0208$
& $0.270$ & $0.0230$
& $0.258$ & $0.00631$
& $\mathbf{0.254}$ & $0.00987$ \\
b-ball (Uniform)
& $0.289$ & $0.0317$
& $0.292$ & $0.0202$
& $0.335$ & $0.0489$
& $0.297$ & $0.0353$
& $0.298$ & $0.0346$
& $0.317$ & $0.0163$
& $0.288$ & $0.0150$
& $\mathbf{0.283}$ & $0.00573$ \\
Torus (GMM)
& $\mathbf{0.603}$ & $0.00347$
& $0.607$ & $0.00351$
& $0.649$ & $0.0384$
& $0.606$ & $0.00463$
& $0.617$ & $0.00411$
& $0.709$ & $0.00138$
& $0.616$ & $0.00403$
& $0.606$ & $0.00167$ \\
Torus (Uniform)
& $0.560$ & $0.00756$
& $0.558$ & $0.00334$
& $0.647$ & $0.0682$
& $\mathbf{0.553}$ & $0.00432$
& $0.555$ & $0.00296$
& $0.709$ & $0.00129$
& $0.559$ & $0.00389$
& $0.554$ & $0.00216$ \\
Klein (GMM)
& $0.612$ & $0.0124$
& $0.610$ & $0.00808$
& $0.709$ & $0.0595$
& $0.607$ & $0.00778$
& $0.614$ & $0.0108$
& $0.708$ & $0.00172$
& $0.616$ & $0.00606$
& $\mathbf{0.605}$ & $0.00217$ \\
Klein (Uniform)
& $0.560$ & $0.0178$
& $0.559$ & $0.00468$
& $0.718$ & $0.0175$
& $0.562$ & $0.0225$
& $0.564$ & $0.0176$
& $0.704$ & $0.000841$
& $0.564$ & $0.0108$
& $\mathbf{0.554}$ & $0.00259$ \\
Klein--Torus
& $0.455$ & $0.0149$
& $0.473$ & $0.0175$
& $0.723$ & $0.0478$
& $0.455$ & $0.0284$
& $0.450$ & $0.00909$
& $0.713$ & $0.00257$
& $\mathbf{0.446}$ & $0.0152$
& $0.471$ & $0.0118$ \\
1-ball (GMM)
& $\mathbf{0.569}$ & $0.00536$
& $0.635$ & $0.0138$
& $0.650$ & $0.0263$
& $0.578$ & $0.0135$
& $0.583$ & $0.0189$
& $0.695$ & $0.0205$
& $0.585$ & $0.0191$
& $0.637$ & $0.0218$ \\
1-ball (Uniform)
& $0.569$ & $0.0188$
& $0.650$ & $0.0335$
& $0.665$ & $0.0341$
& $0.579$ & $0.0191$
& $0.563$ & $0.0233$
& $0.718$ & $0.0196$
& $\mathbf{0.555}$ & $0.0120$
& $0.655$ & $0.0356$ \\
2-balls (GMM)
& $0.722$ & $0.0185$
& $0.850$ & $0.00765$
& $0.825$ & $0.0775$
& $0.724$ & $0.0156$
& $0.706$ & $0.00963$
& $0.904$ & $0.00528$
& $\mathbf{0.697}$ & $0.00816$
& $0.848$ & $0.00563$ \\
2-balls (Uniform)
& $0.760$ & $0.0183$
& $0.916$ & $0.0119$
& $0.876$ & $0.0635$
& $0.755$ & $0.0213$
& $0.724$ & $0.0168$
& $0.957$ & $0.0157$
& $\mathbf{0.717}$ & $0.00731$
& $0.913$ & $0.0109$ \\
\bottomrule
\end{tabular}%
}
\end{table*}
}

\newcommand{\TableLossAblationUnconditionalEnergyHorizonOneSecond}{%
\begin{table*}
\centering
\scriptsize
\setlength{\tabcolsep}{4pt}
\renewcommand{\arraystretch}{1.1}
\caption{
Loss-function ablation measured by test-set unconditional path distribution loss $\mathcal L_{\mathrm{x}}^{\mathrm{traj}}$ ($\downarrow$).
Column labels other than Default model identify the removed objectives (their weights are set to zero).
Values are the mean $\pm$ sample standard deviation over five training seeds.
The lowest mean among finite entries in each row is bold.
Diverged denotes a configuration with nonfinite results or either test metric exceeding $10^3$ in any seed.
Horizon: $1\,\mathrm{s}$.
}
\label{tab:ablation-split-weights-x-trajectory-loss-1s}
\resizebox{\textwidth}{!}{%
\begin{tabular}{l*{8}{r@{$\,\pm\,$}l}}
\toprule
Case & \multicolumn{2}{c}{Default model} & \multicolumn{2}{c}{$\mathcal L_{\mathrm{x}}^{\mathrm{traj,c}}$} & \multicolumn{2}{c}{$\mathcal L_z$} & \multicolumn{2}{c}{$\mathcal L_x$} & \multicolumn{2}{c}{$\mathcal L_{\mathrm{x}}^{\mathrm{traj}}$} & \multicolumn{2}{c}{$\mathcal L_{\mathrm{x}}^{\mathrm{traj,c}},\ \mathcal L_{\mathrm{x}}^{\mathrm{traj}}$} & \multicolumn{2}{c}{$\mathcal L_x,\ \mathcal L_{\mathrm{x}}^{\mathrm{traj}}$} & \multicolumn{2}{c}{$\mathcal L_{\mathrm{x}}^{\mathrm{traj,c}},\ \mathcal L_x$} \\
\midrule
b-ball (GMM)
& $0.214$ & $0.0558$
& $0.215$ & $0.0395$
& $0.411$ & $0.273$
& $0.379$ & $0.247$
& $\mathbf{0.194}$ & $0.0268$
& $0.308$ & $0.157$
& $0.216$ & $0.0409$
& $0.205$ & $0.0457$ \\
b-ball (Uniform)
& $0.249$ & $0.0819$
& $0.279$ & $0.110$
& $0.408$ & $0.169$
& $0.260$ & $0.113$
& $0.268$ & $0.107$
& $0.335$ & $0.0675$
& $\mathbf{0.218}$ & $0.0577$
& $0.233$ & $0.0397$ \\
Torus (GMM)
& $0.333$ & $0.0133$
& $0.323$ & $0.00586$
& $0.842$ & $0.493$
& $0.348$ & $0.0204$
& $0.386$ & $0.0345$
& $0.507$ & $0.0134$
& $0.381$ & $0.0361$
& $\mathbf{0.322}$ & $0.0132$ \\
Torus (Uniform)
& $0.304$ & $0.0319$
& $0.285$ & $0.00401$
& $1.01$ & $0.709$
& $0.299$ & $0.0205$
& $0.302$ & $0.0136$
& $0.545$ & $0.0112$
& $0.322$ & $0.0211$
& $\mathbf{0.282}$ & $0.00376$ \\
Klein (GMM)
& $0.341$ & $0.0201$
& $0.314$ & $0.0113$
& $1.67$ & $1.02$
& $0.339$ & $0.0124$
& $0.344$ & $0.0334$
& $0.515$ & $0.00833$
& $0.392$ & $0.0368$
& $\mathbf{0.313}$ & $0.0125$ \\
Klein (Uniform)
& $0.298$ & $0.0317$
& $0.284$ & $0.0140$
& $1.60$ & $0.489$
& $0.300$ & $0.0373$
& $0.330$ & $0.0412$
& $0.544$ & $0.00919$
& $0.328$ & $0.0268$
& $\mathbf{0.282}$ & $0.00749$ \\
Klein--Torus
& $0.257$ & $0.0206$
& $0.263$ & $0.0119$
& $1.98$ & $0.479$
& $0.263$ & $0.0310$
& $0.248$ & $0.0195$
& $0.590$ & $0.00776$
& $\mathbf{0.246}$ & $0.0108$
& $0.262$ & $0.0112$ \\
1-ball (GMM)
& $\mathbf{0.570}$ & $0.0518$
& $0.608$ & $0.0743$
& $1.17$ & $0.496$
& $0.585$ & $0.0695$
& $0.621$ & $0.137$
& $0.689$ & $0.0694$
& $0.666$ & $0.143$
& $0.600$ & $0.0626$ \\
1-ball (Uniform)
& $0.525$ & $0.0579$
& $0.634$ & $0.127$
& $1.41$ & $0.417$
& $0.543$ & $0.0551$
& $\mathbf{0.510}$ & $0.0392$
& $0.674$ & $0.0516$
& $0.552$ & $0.0938$
& $0.635$ & $0.109$ \\
2-balls (GMM)
& $1.04$ & $0.162$
& $1.18$ & $0.116$
& $3.13$ & $2.58$
& $1.05$ & $0.0740$
& $1.04$ & $0.140$
& $1.36$ & $0.0672$
& $\mathbf{1.00}$ & $0.0304$
& $1.20$ & $0.0831$ \\
2-balls (Uniform)
& $1.07$ & $0.0852$
& $1.48$ & $0.147$
& $3.18$ & $1.75$
& $1.09$ & $0.147$
& $1.01$ & $0.0732$
& $1.58$ & $0.209$
& $\mathbf{0.996}$ & $0.0853$
& $1.49$ & $0.145$ \\
\bottomrule
\end{tabular}%
}
\end{table*}
}

\newcommand{\TableBaselineMeanHorizonThreeSeconds}{%
\begin{table*}
\centering
\small
\setlength{\tabcolsep}{6pt}
\renewcommand{\arraystretch}{1.1}
\caption{
Baseline comparison of conditional (unconditional) path distribution loss across 11 benchmark problems.
Per-case means and sample standard deviations are reported in \Cref{tab:baseline-conditional-energy-3s,tab:baseline-trajectory-energy-3s}. $\star$ denotes using $\mathcal{L}_x$ and $\mathcal{L}_\mathrm{x}^{\mathrm{traj}}$.
Horizon: $3\,\mathrm{s}$. }
\label{tab:baseline-mean-3s}
\resizebox{\textwidth}{!}{%
\begin{tabular}{l*{6}{c}}
\toprule
 & Ours & \citep{teng2026embedding} & \citep{li2020scalable} & \citep{li2020scalable}$\star$ & \citep{bartosh2025sde} & \citep{kiyohara2025neural} \\
\midrule
b-ball (GMM)
& $\mathbf{0.487}\;(\mathbf{0.732})$ & Diverged & $1.60\;(33.0)$ & $1.18\;(19.3)$ & Diverged & $0.897\;(9.62)$ \\
b-ball (Uniform)
& $\mathbf{0.433}\;(\mathbf{0.690})$ & Diverged & $1.42\;(33.4)$ & $1.45\;(36.0)$ & Diverged & $0.693\;(5.50)$ \\
Torus (GMM)
& $\mathbf{0.671}\;(\mathbf{0.615})$ & Diverged & Diverged & Diverged & Diverged & $0.913\;(11.3)$ \\
Torus (Uniform)
& $\mathbf{0.654}\;(\mathbf{0.595})$ & Diverged & Diverged & Diverged & Diverged & $0.954\;(12.9)$ \\
Klein (GMM)
& $\mathbf{0.677}\;(\mathbf{0.626})$ & Diverged & Diverged & Diverged & Diverged & $0.905\;(10.5)$ \\
Klein (Uniform)
& $\mathbf{0.653}\;(\mathbf{0.644})$ & Diverged & Diverged & Diverged & Diverged & $0.922\;(11.6)$ \\
Klein--Torus
& $\mathbf{0.545}\;(\mathbf{0.555})$ & Diverged & Diverged & Diverged & Diverged & $0.994\;(15.0)$ \\
1-ball (GMM)
& $\mathbf{0.523}\;(\mathbf{0.917})$ & Diverged & $2.98\;(101)$ & Diverged & Diverged & Diverged \\
1-ball (Uniform)
& $\mathbf{0.470}\;(\mathbf{0.776})$ & Diverged & Diverged & $3.33\;(147)$ & Diverged & Diverged \\
2-balls (GMM)
& $\mathbf{0.596}\;(\mathbf{1.53})$ & $0.908\;(3.20)$ & $1.19\;(30.6)$ & $1.28\;(44.2)$ & Diverged & Diverged \\
2-balls (Uniform)
& $\mathbf{0.573}\;(\mathbf{1.65})$ & $0.859\;(2.69)$ & $1.91\;(101)$ & $1.83\;(95.7)$ & Diverged & Diverged \\
\bottomrule
\end{tabular}%
}
\vspace{-3mm}
\end{table*}
}

\newcommand{\TableAblationSummaryHorizonThreeSeconds}{%
\begin{table}
\centering
\footnotesize
\setlength{\tabcolsep}{4pt}
\renewcommand{\arraystretch}{1.15}
\caption{
Ablation summary across 11 benchmark settings on the conditional (unconditional) trajectory distribution loss.
Mean change is the average of per-setting percentage changes relative to the default model;
positive values indicate degradation.
Worse cases count the settings with a higher mean test loss than the default model
(out of 11), using the means over five training seeds.
MLP replaces the diffusion sampler with a deterministic MLP encoder;
loss labels identify the removed objectives.
Per-case latent-dimension and loss-ablation results are reported in \Cref{tab:ablation-split-latent-conditional-energy-loss-3s,tab:ablation-split-latent-x-trajectory-loss-3s,tab:ablation-split-weights-conditional-energy-loss-3s,tab:ablation-split-weights-x-trajectory-loss-3s}.
Per-case conditional and unconditional results for the MLP encoder ablation are reported in \Cref{tab:ablation-split-sampler-conditional-energy-loss-3s}.
Horizon: $3\,\mathrm{s}$.
}
\label{tab:ablation-summary-3s}
\resizebox{\textwidth}{!}{%
\begin{tabular}{l*{11}{c}}
\toprule
Setting
& $d_z=d_x$
& $d_z=2d_x$
& $d_z=3d_x$
& MLP
& $\mathcal L_{\mathrm{x}}^{\mathrm{traj,c}}$
& $\mathcal L_z$
& $\mathcal L_x$
& $\mathcal L_{\mathrm{x}}^{\mathrm{traj}}$
& $\mathcal L_{\mathrm{x}}^{\mathrm{traj,c}},\ \mathcal L_{\mathrm{x}}^{\mathrm{traj}}$
& $\mathcal L_x,\ \mathcal L_{\mathrm{x}}^{\mathrm{traj}}$
& $\mathcal L_{\mathrm{x}}^{\mathrm{traj,c}},\ \mathcal L_x$ \\
\midrule
Mean change (\%)
& $+25.9\;(+417.1)$
& $+1.5\;(+15.0)$
& $+0.7\;(+14.4)$
& $+2.9\;(+13.9)$
& $+4.4\;(+2.6)$
& $+63.8\;(+1611.9)$
& $+0.5\;(+6.4)$
& $-0.7\;(+2.7)$
& $+12.5\;(+24.6)$
& $-1.1\;(+5.1)$
& $+3.9\;(-0.0)$ \\
Worse cases (/11)
& $11\;(11)$
& $5\;(7)$
& $5\;(6)$
& $7\;(9)$
& $8\;(5)$
& $11\;(11)$
& $6\;(7)$
& $6\;(7)$
& $11\;(11)$
& $6\;(7)$
& $7\;(4)$ \\
\bottomrule
\end{tabular}%
}
\vspace{-2mm}
\end{table}
}

\newcommand{\TableBaselineConditionalEnergyHorizonThreeSeconds}{%
\begin{table*}[!b]
\centering
\small
\setlength{\tabcolsep}{6pt}
\renewcommand{\arraystretch}{1.1}
\caption{
Baseline comparison measured by test-set conditional path distribution loss $\mathcal L_{\mathrm{x}}^{\mathrm{traj,c}}$ ($\downarrow$).
Horizon: $3\,\mathrm{s}$.
}
\label{tab:baseline-conditional-energy-3s}
\resizebox{\textwidth}{!}{%
\begin{tabular}{l*{6}{r@{$\,\pm\,$}l}}
\toprule
Case & \multicolumn{2}{c}{Ours}
& \multicolumn{2}{c}{\citep{teng2026embedding}}
& \multicolumn{2}{c}{\citep{li2020scalable}}
& \multicolumn{2}{c}{\citep{li2020scalable}$\star$}
& \multicolumn{2}{c}{\citep{bartosh2025sde}}
& \multicolumn{2}{c}{\citep{kiyohara2025neural}} \\
\midrule
b-ball (GMM)
& $\mathbf{0.487}$ & $0.0153$
& \multicolumn{2}{c}{Diverged}
& $1.60$ & $0.647$
& $1.18$ & $0.390$
& \multicolumn{2}{c}{Diverged}
& $0.897$ & $0.0116$ \\
b-ball (Uniform)
& $\mathbf{0.433}$ & $0.0152$
& \multicolumn{2}{c}{Diverged}
& $1.42$ & $1.04$
& $1.45$ & $1.70$
& \multicolumn{2}{c}{Diverged}
& $0.693$ & $0.00792$ \\
Torus (GMM)
& $\mathbf{0.671}$ & $0.00239$
& \multicolumn{2}{c}{Diverged}
& \multicolumn{2}{c}{Diverged}
& \multicolumn{2}{c}{Diverged}
& \multicolumn{2}{c}{Diverged}
& $0.913$ & $0.0227$ \\
Torus (Uniform)
& $\mathbf{0.654}$ & $0.00691$
& \multicolumn{2}{c}{Diverged}
& \multicolumn{2}{c}{Diverged}
& \multicolumn{2}{c}{Diverged}
& \multicolumn{2}{c}{Diverged}
& $0.954$ & $0.0201$ \\
Klein (GMM)
& $\mathbf{0.677}$ & $0.00530$
& \multicolumn{2}{c}{Diverged}
& \multicolumn{2}{c}{Diverged}
& \multicolumn{2}{c}{Diverged}
& \multicolumn{2}{c}{Diverged}
& $0.905$ & $0.0150$ \\
Klein (Uniform)
& $\mathbf{0.653}$ & $0.0108$
& \multicolumn{2}{c}{Diverged}
& \multicolumn{2}{c}{Diverged}
& \multicolumn{2}{c}{Diverged}
& \multicolumn{2}{c}{Diverged}
& $0.922$ & $0.0189$ \\
Klein--Torus
& $\mathbf{0.545}$ & $0.0325$
& \multicolumn{2}{c}{Diverged}
& \multicolumn{2}{c}{Diverged}
& \multicolumn{2}{c}{Diverged}
& \multicolumn{2}{c}{Diverged}
& $0.994$ & $0.0433$ \\
1-ball (GMM)
& $\mathbf{0.523}$ & $0.00783$
& \multicolumn{2}{c}{Diverged}
& $2.98$ & $2.09$
& \multicolumn{2}{c}{Diverged}
& \multicolumn{2}{c}{Diverged}
& \multicolumn{2}{c}{Diverged} \\
1-ball (Uniform)
& $\mathbf{0.470}$ & $0.0265$
& \multicolumn{2}{c}{Diverged}
& \multicolumn{2}{c}{Diverged}
& $3.33$ & $1.84$
& \multicolumn{2}{c}{Diverged}
& \multicolumn{2}{c}{Diverged} \\
2-balls (GMM)
& $\mathbf{0.596}$ & $0.0114$
& $0.908$ & $0.00794$
& $1.19$ & $0.139$
& $1.28$ & $0.309$
& \multicolumn{2}{c}{Diverged}
& \multicolumn{2}{c}{Diverged} \\
2-balls (Uniform)
& $\mathbf{0.573}$ & $0.0195$
& $0.859$ & $0.00995$
& $1.91$ & $0.806$
& $1.83$ & $0.317$
& \multicolumn{2}{c}{Diverged}
& \multicolumn{2}{c}{Diverged} \\
\bottomrule
\end{tabular}%
}
\end{table*}
}

\newcommand{\TableBaselineTrajectoryEnergyHorizonThreeSeconds}{%
\begin{table*}
\centering
\scriptsize
\setlength{\tabcolsep}{6pt}
\renewcommand{\arraystretch}{1.1}
\caption{
Baseline comparison measured by test-set unconditional path distribution loss $\mathcal L_{\mathrm{x}}^{\mathrm{traj}}$ ($\downarrow$).
Horizon: $3\,\mathrm{s}$.
}
\label{tab:baseline-trajectory-energy-3s}
\resizebox{\textwidth}{!}{%
\begin{tabular}{l*{6}{r@{$\,\pm\,$}l}}
\toprule
Case & \multicolumn{2}{c}{Ours}
& \multicolumn{2}{c}{\citep{teng2026embedding}}
& \multicolumn{2}{c}{\citep{li2020scalable}}
& \multicolumn{2}{c}{\citep{li2020scalable}$\star$}
& \multicolumn{2}{c}{\citep{bartosh2025sde}}
& \multicolumn{2}{c}{\citep{kiyohara2025neural}} \\
\midrule
b-ball (GMM)
& $\mathbf{0.732}$ & $0.137$
& \multicolumn{2}{c}{Diverged}
& $33.0$ & $21.3$
& $19.3$ & $12.8$
& \multicolumn{2}{c}{Diverged}
& $9.62$ & $0.281$ \\
b-ball (Uniform)
& $\mathbf{0.690}$ & $0.156$
& \multicolumn{2}{c}{Diverged}
& $33.4$ & $39.3$
& $36.0$ & $65.9$
& \multicolumn{2}{c}{Diverged}
& $5.50$ & $0.0997$ \\
Torus (GMM)
& $\mathbf{0.615}$ & $0.0293$
& \multicolumn{2}{c}{Diverged}
& \multicolumn{2}{c}{Diverged}
& \multicolumn{2}{c}{Diverged}
& \multicolumn{2}{c}{Diverged}
& $11.3$ & $1.01$ \\
Torus (Uniform)
& $\mathbf{0.595}$ & $0.0462$
& \multicolumn{2}{c}{Diverged}
& \multicolumn{2}{c}{Diverged}
& \multicolumn{2}{c}{Diverged}
& \multicolumn{2}{c}{Diverged}
& $12.9$ & $0.977$ \\
Klein (GMM)
& $\mathbf{0.626}$ & $0.0280$
& \multicolumn{2}{c}{Diverged}
& \multicolumn{2}{c}{Diverged}
& \multicolumn{2}{c}{Diverged}
& \multicolumn{2}{c}{Diverged}
& $10.5$ & $0.729$ \\
Klein (Uniform)
& $\mathbf{0.644}$ & $0.0624$
& \multicolumn{2}{c}{Diverged}
& \multicolumn{2}{c}{Diverged}
& \multicolumn{2}{c}{Diverged}
& \multicolumn{2}{c}{Diverged}
& $11.6$ & $0.890$ \\
Klein--Torus
& $\mathbf{0.555}$ & $0.0695$
& \multicolumn{2}{c}{Diverged}
& \multicolumn{2}{c}{Diverged}
& \multicolumn{2}{c}{Diverged}
& \multicolumn{2}{c}{Diverged}
& $15.0$ & $2.14$ \\
1-ball (GMM)
& $\mathbf{0.917}$ & $0.115$
& \multicolumn{2}{c}{Diverged}
& $101$ & $77.5$
& \multicolumn{2}{c}{Diverged}
& \multicolumn{2}{c}{Diverged}
& \multicolumn{2}{c}{Diverged} \\
1-ball (Uniform)
& $\mathbf{0.776}$ & $0.128$
& \multicolumn{2}{c}{Diverged}
& \multicolumn{2}{c}{Diverged}
& $147$ & $106$
& \multicolumn{2}{c}{Diverged}
& \multicolumn{2}{c}{Diverged} \\
2-balls (GMM)
& $\mathbf{1.53}$ & $0.303$
& $3.20$ & $0.280$
& $30.6$ & $12.7$
& $44.2$ & $28.4$
& \multicolumn{2}{c}{Diverged}
& \multicolumn{2}{c}{Diverged} \\
2-balls (Uniform)
& $\mathbf{1.65}$ & $0.398$
& $2.69$ & $0.182$
& $101$ & $78.7$
& $95.7$ & $30.8$
& \multicolumn{2}{c}{Diverged}
& \multicolumn{2}{c}{Diverged} \\
\bottomrule
\end{tabular}%
}
\end{table*}
}

\newcommand{\TableLatentConditionalEnergyHorizonThreeSeconds}{%
\begin{table*}
\centering
\scriptsize
\setlength{\tabcolsep}{4pt}
\renewcommand{\arraystretch}{1.1}
\caption{
Latent-dimension ablation measured by test-set conditional path distribution loss $\mathcal L_{\mathrm{x}}^{\mathrm{traj,c}}$ ($\downarrow$).
Horizon: $3\,\mathrm{s}$.
}
\label{tab:ablation-split-latent-conditional-energy-loss-3s}
\begin{tabular}{l*{4}{r@{$\,\pm\,$}l}}
\toprule
 & \multicolumn{2}{c}{$d_z=d_x$} & \multicolumn{2}{c}{$d_z=2d_x$} & \multicolumn{2}{c}{$d_z=3d_x$} & \multicolumn{2}{c}{$d_z=4d_x$ (default)} \\
\midrule
b-ball (GMM)
& $0.532$ & $0.0294$
& $0.505$ & $0.0345$
& $0.529$ & $0.0523$
& $\mathbf{0.487}$ & $0.0153$ \\
b-ball (Uniform)
& $1.35$ & $1.97$
& $0.486$ & $0.0746$
& $0.438$ & $0.0267$
& $\mathbf{0.433}$ & $0.0152$ \\
Torus (GMM)
& $0.722$ & $0.0559$
& $0.675$ & $0.00497$
& $0.673$ & $0.00219$
& $\mathbf{0.671}$ & $0.00239$ \\
Torus (Uniform)
& $0.701$ & $0.00576$
& $0.652$ & $0.00565$
& $\mathbf{0.645}$ & $0.00320$
& $0.654$ & $0.00691$ \\
Klein (GMM)
& $0.705$ & $0.00810$
& $0.683$ & $0.00786$
& $\mathbf{0.675}$ & $0.00166$
& $0.677$ & $0.00530$ \\
Klein (Uniform)
& $0.686$ & $0.0136$
& $\mathbf{0.651}$ & $0.00353$
& $0.653$ & $0.00316$
& $0.653$ & $0.0108$ \\
Klein--Torus
& $0.699$ & $0.0149$
& $0.578$ & $0.0543$
& $\mathbf{0.534}$ & $0.0246$
& $0.545$ & $0.0325$ \\
1-ball (GMM)
& $0.549$ & $0.0149$
& $\mathbf{0.515}$ & $0.00656$
& $0.522$ & $0.00300$
& $0.523$ & $0.00783$ \\
1-ball (Uniform)
& $0.476$ & $0.0121$
& $\mathbf{0.450}$ & $0.00760$
& $0.458$ & $0.00677$
& $0.470$ & $0.0265$ \\
2-balls (GMM)
& $0.611$ & $0.0105$
& $\mathbf{0.596}$ & $0.0245$
& $0.620$ & $0.0293$
& $0.596$ & $0.0114$ \\
2-balls (Uniform)
& $0.583$ & $0.0140$
& $\mathbf{0.564}$ & $0.0226$
& $0.570$ & $0.0164$
& $0.573$ & $0.0195$ \\
\bottomrule
\end{tabular}%
\end{table*}
}

\newcommand{\TableLatentTrajectoryEnergyHorizonThreeSeconds}{%
\begin{table*}
\centering
\scriptsize
\setlength{\tabcolsep}{4pt}
\renewcommand{\arraystretch}{1.1}
\caption{
Latent-dimension ablation measured by test-set unconditional path distribution loss $\mathcal L_{\mathrm{x}}^{\mathrm{traj}}$ ($\downarrow$).
Horizon: $3\,\mathrm{s}$.
}
\label{tab:ablation-split-latent-x-trajectory-loss-3s}
\begin{tabular}{l*{4}{r@{$\,\pm\,$}l}}
\toprule
 & \multicolumn{2}{c}{$d_z=d_x$} & \multicolumn{2}{c}{$d_z=2d_x$} & \multicolumn{2}{c}{$d_z=3d_x$} & \multicolumn{2}{c}{$d_z=4d_x$ (default)} \\
\midrule
b-ball (GMM)
& $1.39$ & $0.375$
& $1.03$ & $0.377$
& $1.37$ & $0.709$
& $\mathbf{0.732}$ & $0.137$ \\
b-ball (Uniform)
& $28.4$ & $60.3$
& $1.18$ & $0.838$
& $\mathbf{0.644}$ & $0.265$
& $0.690$ & $0.156$ \\
Torus (GMM)
& $2.17$ & $3.18$
& $0.623$ & $0.0324$
& $0.643$ & $0.0275$
& $\mathbf{0.615}$ & $0.0293$ \\
Torus (Uniform)
& $0.785$ & $0.0796$
& $0.580$ & $0.0555$
& $\mathbf{0.560}$ & $0.0218$
& $0.595$ & $0.0462$ \\
Klein (GMM)
& $0.897$ & $0.167$
& $0.689$ & $0.0678$
& $0.638$ & $0.0407$
& $\mathbf{0.626}$ & $0.0280$ \\
Klein (Uniform)
& $0.786$ & $0.0601$
& $\mathbf{0.623}$ & $0.0258$
& $0.631$ & $0.0214$
& $0.644$ & $0.0624$ \\
Klein--Torus
& $0.976$ & $0.170$
& $0.584$ & $0.0639$
& $\mathbf{0.523}$ & $0.0282$
& $0.555$ & $0.0695$ \\
1-ball (GMM)
& $1.14$ & $0.147$
& $\mathbf{0.823}$ & $0.0901$
& $0.951$ & $0.0541$
& $0.917$ & $0.115$ \\
1-ball (Uniform)
& $0.853$ & $0.134$
& $\mathbf{0.687}$ & $0.0627$
& $0.731$ & $0.0242$
& $0.776$ & $0.128$ \\
2-balls (GMM)
& $1.82$ & $0.135$
& $2.41$ & $2.23$
& $2.84$ & $1.40$
& $\mathbf{1.53}$ & $0.303$ \\
2-balls (Uniform)
& $1.67$ & $0.101$
& $1.77$ & $0.800$
& $1.68$ & $0.315$
& $\mathbf{1.65}$ & $0.398$ \\
\bottomrule
\end{tabular}%
\end{table*}
}

\newcommand{\TableSamplerAblationHorizonThreeSeconds}{%
\begin{table*}
\centering
\scriptsize
\setlength{\tabcolsep}{4pt}
\renewcommand{\arraystretch}{1.1}
\caption{
Sampler ablation measured by test-set conditional and unconditional path distribution losses ($\downarrow$).
Horizon: $3\,\mathrm{s}$.
}
\label{tab:ablation-split-sampler-conditional-energy-loss-3s}
\label{tab:ablation-split-sampler-x-trajectory-loss-3s}
\begin{tabular}{l*{4}{r@{$\,\pm\,$}l}}
\toprule
 & \multicolumn{4}{c}{DDIM} & \multicolumn{4}{c}{MLP} \\
\cmidrule(lr){2-5}\cmidrule(lr){6-9}
Case & \multicolumn{2}{c}{Conditional} & \multicolumn{2}{c}{Unconditional}
& \multicolumn{2}{c}{Conditional} & \multicolumn{2}{c}{Unconditional} \\
\midrule
b-ball (GMM)
& $\mathbf{0.487}$ & $0.0153$
& $\mathbf{0.732}$ & $0.137$
& $0.544$ & $0.0305$
& $1.43$ & $0.570$ \\
b-ball (Uniform)
& $\mathbf{0.433}$ & $0.0152$
& $\mathbf{0.690}$ & $0.156$
& $0.474$ & $0.0166$
& $0.971$ & $0.229$ \\
Torus (GMM)
& $\mathbf{0.671}$ & $0.00239$
& $\mathbf{0.615}$ & $0.0293$
& $0.674$ & $0.00270$
& $0.623$ & $0.0338$ \\
Torus (Uniform)
& $0.654$ & $0.00691$
& $\mathbf{0.595}$ & $0.0462$
& $\mathbf{0.649}$ & $0.00262$
& $0.607$ & $0.0386$ \\
Klein (GMM)
& $\mathbf{0.677}$ & $0.00530$
& $\mathbf{0.626}$ & $0.0280$
& $0.677$ & $0.00337$
& $0.644$ & $0.0320$ \\
Klein (Uniform)
& $0.653$ & $0.0108$
& $\mathbf{0.644}$ & $0.0624$
& $\mathbf{0.652}$ & $0.00358$
& $0.674$ & $0.0121$ \\
Klein--Torus
& $\mathbf{0.545}$ & $0.0325$
& $\mathbf{0.555}$ & $0.0695$
& $0.589$ & $0.0435$
& $0.619$ & $0.0820$ \\
1-ball (GMM)
& $\mathbf{0.523}$ & $0.00783$
& $\mathbf{0.917}$ & $0.115$
& $0.543$ & $0.00717$
& $0.946$ & $0.0601$ \\
1-ball (Uniform)
& $\mathbf{0.470}$ & $0.0265$
& $\mathbf{0.776}$ & $0.128$
& $0.479$ & $0.00800$
& $0.795$ & $0.0770$ \\
2-balls (GMM)
& $0.596$ & $0.0114$
& $1.53$ & $0.303$
& $\mathbf{0.591}$ & $0.00780$
& $\mathbf{1.50}$ & $0.122$ \\
2-balls (Uniform)
& $0.573$ & $0.0195$
& $1.65$ & $0.398$
& $\mathbf{0.562}$ & $0.0110$
& $\mathbf{1.51}$ & $0.144$ \\
\bottomrule
\end{tabular}%
\end{table*}
}

\newcommand{\TableLossAblationConditionalEnergyHorizonThreeSeconds}{%
\begin{table*}
\centering
\scriptsize
\setlength{\tabcolsep}{4pt}
\renewcommand{\arraystretch}{1.1}
\caption{
Loss-function ablation measured by test-set conditional path distribution loss $\mathcal L_{\mathrm{x}}^{\mathrm{traj,c}}$ ($\downarrow$).
Column labels other than Default model identify the removed objectives (their weights are set to zero).
Values are the mean $\pm$ sample standard deviation over five training seeds.
The lowest mean among finite entries in each row is bold.
Diverged denotes a configuration with nonfinite results or either test metric exceeding $10^3$ in any seed.
Horizon: $3\,\mathrm{s}$.
}
\label{tab:ablation-split-weights-conditional-energy-loss-3s}
\resizebox{\textwidth}{!}{%
\begin{tabular}{l*{8}{r@{$\,\pm\,$}l}}
\toprule
Case & \multicolumn{2}{c}{Default model} & \multicolumn{2}{c}{$\mathcal L_{\mathrm{x}}^{\mathrm{traj,c}}$} & \multicolumn{2}{c}{$\mathcal L_z$} & \multicolumn{2}{c}{$\mathcal L_x$} & \multicolumn{2}{c}{$\mathcal L_{\mathrm{x}}^{\mathrm{traj}}$} & \multicolumn{2}{c}{$\mathcal L_{\mathrm{x}}^{\mathrm{traj,c}},\ \mathcal L_{\mathrm{x}}^{\mathrm{traj}}$} & \multicolumn{2}{c}{$\mathcal L_x,\ \mathcal L_{\mathrm{x}}^{\mathrm{traj}}$} & \multicolumn{2}{c}{$\mathcal L_{\mathrm{x}}^{\mathrm{traj,c}},\ \mathcal L_x$} \\
\midrule
b-ball (GMM)
& $\mathbf{0.487}$ & $0.0153$
& $0.508$ & $0.0103$
& $0.588$ & $0.0534$
& $0.518$ & $0.0343$
& $0.498$ & $0.0116$
& $0.518$ & $0.0274$
& $0.497$ & $0.0185$
& $0.505$ & $0.00621$ \\
b-ball (Uniform)
& $0.433$ & $0.0152$
& $0.446$ & $0.0268$
& $0.508$ & $0.0556$
& $0.436$ & $0.0227$
& $0.434$ & $0.0140$
& $0.456$ & $0.0131$
& $0.438$ & $0.0137$
& $\mathbf{0.432}$ & $0.0140$ \\
Torus (GMM)
& $\mathbf{0.671}$ & $0.00239$
& $0.673$ & $0.00159$
& $0.731$ & $0.0464$
& $0.673$ & $0.00224$
& $0.678$ & $0.00180$
& $0.707$ & $0.000957$
& $0.677$ & $0.00300$
& $0.672$ & $0.00146$ \\
Torus (Uniform)
& $0.654$ & $0.00691$
& $0.648$ & $0.00229$
& $0.730$ & $0.0486$
& $0.647$ & $0.00285$
& $0.647$ & $0.00234$
& $0.708$ & $0.000933$
& $0.652$ & $0.00284$
& $\mathbf{0.645}$ & $0.00170$ \\
Klein (GMM)
& $0.677$ & $0.00530$
& $0.674$ & $0.00478$
& $0.813$ & $0.0708$
& $0.675$ & $0.00368$
& $0.678$ & $0.00394$
& $0.709$ & $0.00120$
& $0.679$ & $0.000766$
& $\mathbf{0.673}$ & $0.00137$ \\
Klein (Uniform)
& $0.653$ & $0.0108$
& $0.651$ & $0.00330$
& $0.779$ & $0.0504$
& $0.652$ & $0.0112$
& $0.654$ & $0.0104$
& $0.706$ & $0.000503$
& $0.655$ & $0.00693$
& $\mathbf{0.647}$ & $0.00151$ \\
Klein--Torus
& $0.545$ & $0.0325$
& $0.555$ & $0.0234$
& $0.832$ & $0.0164$
& $0.542$ & $0.0406$
& $0.541$ & $0.0183$
& $0.709$ & $0.000519$
& $\mathbf{0.525}$ & $0.00640$
& $0.555$ & $0.0309$ \\
1-ball (GMM)
& $\mathbf{0.523}$ & $0.00783$
& $0.564$ & $0.0177$
& $0.740$ & $0.0965$
& $0.524$ & $0.00940$
& $0.526$ & $0.0118$
& $0.604$ & $0.0139$
& $0.531$ & $0.0134$
& $0.563$ & $0.0173$ \\
1-ball (Uniform)
& $0.470$ & $0.0265$
& $0.513$ & $0.0183$
& $0.768$ & $0.171$
& $0.476$ & $0.0189$
& $0.456$ & $0.0119$
& $0.564$ & $0.0160$
& $\mathbf{0.451}$ & $0.00761$
& $0.518$ & $0.0204$ \\
2-balls (GMM)
& $0.596$ & $0.0114$
& $0.655$ & $0.00479$
& $2.28$ & $1.55$
& $0.597$ & $0.0119$
& $0.586$ & $0.00568$
& $0.687$ & $0.00468$
& $\mathbf{0.576}$ & $0.00717$
& $0.653$ & $0.00447$ \\
2-balls (Uniform)
& $0.573$ & $0.0195$
& $0.649$ & $0.00529$
& $1.51$ & $0.758$
& $0.565$ & $0.0170$
& $0.544$ & $0.0148$
& $0.681$ & $0.00810$
& $\mathbf{0.532}$ & $0.00465$
& $0.646$ & $0.00471$ \\
\bottomrule
\end{tabular}%
}
\end{table*}
}

\newcommand{\TableLossAblationUnconditionalEnergyHorizonThreeSeconds}{%
\begin{table*}
\centering
\scriptsize
\setlength{\tabcolsep}{4pt}
\renewcommand{\arraystretch}{1.1}
\caption{
Loss-function ablation measured by test-set unconditional path distribution loss $\mathcal L_{\mathrm{x}}^{\mathrm{traj}}$ ($\downarrow$).
Column labels other than Default model identify the removed objectives (their weights are set to zero).
Values are the mean $\pm$ sample standard deviation over five training seeds.
The lowest mean among finite entries in each row is bold.
Diverged denotes a configuration with nonfinite results or either test metric exceeding $10^3$ in any seed.
Horizon: $3\,\mathrm{s}$.
}
\label{tab:ablation-split-weights-x-trajectory-loss-3s}
\resizebox{\textwidth}{!}{%
\begin{tabular}{l*{8}{r@{$\,\pm\,$}l}}
\toprule
Case & \multicolumn{2}{c}{Default model} & \multicolumn{2}{c}{$\mathcal L_{\mathrm{x}}^{\mathrm{traj,c}}$} & \multicolumn{2}{c}{$\mathcal L_z$} & \multicolumn{2}{c}{$\mathcal L_x$} & \multicolumn{2}{c}{$\mathcal L_{\mathrm{x}}^{\mathrm{traj}}$} & \multicolumn{2}{c}{$\mathcal L_{\mathrm{x}}^{\mathrm{traj,c}},\ \mathcal L_{\mathrm{x}}^{\mathrm{traj}}$} & \multicolumn{2}{c}{$\mathcal L_x,\ \mathcal L_{\mathrm{x}}^{\mathrm{traj}}$} & \multicolumn{2}{c}{$\mathcal L_{\mathrm{x}}^{\mathrm{traj,c}},\ \mathcal L_x$} \\
\midrule
b-ball (GMM)
& $\mathbf{0.732}$ & $0.137$
& $0.903$ & $0.187$
& $2.25$ & $0.908$
& $1.18$ & $0.268$
& $0.867$ & $0.134$
& $1.07$ & $0.568$
& $0.954$ & $0.265$
& $0.842$ & $0.151$ \\
b-ball (Uniform)
& $0.690$ & $0.156$
& $0.646$ & $0.216$
& $1.62$ & $1.19$
& $0.696$ & $0.176$
& $0.717$ & $0.0934$
& $0.710$ & $0.121$
& $0.783$ & $0.248$
& $\mathbf{0.572}$ & $0.150$ \\
Torus (GMM)
& $0.615$ & $0.0293$
& $\mathbf{0.603}$ & $0.0168$
& $2.81$ & $1.70$
& $0.651$ & $0.0348$
& $0.695$ & $0.0517$
& $0.741$ & $0.0152$
& $0.680$ & $0.0673$
& $0.604$ & $0.0211$ \\
Torus (Uniform)
& $0.595$ & $0.0462$
& $0.551$ & $0.00996$
& $2.81$ & $1.53$
& $0.589$ & $0.0381$
& $0.586$ & $0.0239$
& $0.776$ & $0.0203$
& $0.633$ & $0.0439$
& $\mathbf{0.543}$ & $0.00864$ \\
Klein (GMM)
& $0.626$ & $0.0280$
& $\mathbf{0.586}$ & $0.0143$
& $5.89$ & $3.80$
& $0.640$ & $0.0394$
& $0.654$ & $0.0503$
& $0.733$ & $0.0108$
& $0.703$ & $0.0617$
& $0.590$ & $0.0167$ \\
Klein (Uniform)
& $0.644$ & $0.0624$
& $0.618$ & $0.0222$
& $3.90$ & $1.99$
& $0.643$ & $0.0678$
& $0.684$ & $0.0529$
& $0.841$ & $0.0198$
& $0.689$ & $0.0346$
& $\mathbf{0.608}$ & $0.0166$ \\
Klein--Torus
& $0.555$ & $0.0695$
& $0.548$ & $0.0209$
& $6.08$ & $0.926$
& $0.548$ & $0.0636$
& $0.533$ & $0.0190$
& $0.931$ & $0.0185$
& $\mathbf{0.519}$ & $0.0224$
& $0.550$ & $0.0511$ \\
1-ball (GMM)
& $0.917$ & $0.115$
& $0.938$ & $0.0890$
& $7.09$ & $3.99$
& $0.938$ & $0.0830$
& $0.973$ & $0.159$
& $0.951$ & $0.0672$
& $1.01$ & $0.191$
& $\mathbf{0.907}$ & $0.0693$ \\
1-ball (Uniform)
& $0.776$ & $0.128$
& $0.893$ & $0.125$
& $9.66$ & $6.55$
& $0.808$ & $0.147$
& $\mathbf{0.711}$ & $0.0574$
& $0.882$ & $0.0561$
& $0.749$ & $0.108$
& $0.871$ & $0.0859$ \\
2-balls (GMM)
& $1.53$ & $0.303$
& $1.55$ & $0.0771$
& $130$ & $119$
& $1.54$ & $0.0864$
& $1.53$ & $0.124$
& $1.72$ & $0.141$
& $\mathbf{1.46}$ & $0.0530$
& $1.56$ & $0.0822$ \\
2-balls (Uniform)
& $1.65$ & $0.398$
& $1.89$ & $0.360$
& $69.5$ & $60.5$
& $1.59$ & $0.245$
& $1.51$ & $0.285$
& $2.06$ & $0.487$
& $\mathbf{1.34}$ & $0.121$
& $1.85$ & $0.312$ \\
\bottomrule
\end{tabular}%
}
\end{table*}
}

\newcommand{\TableBaselineMeanHorizonFull}{%
\begin{table*}
\centering
\small
\setlength{\tabcolsep}{6pt}
\renewcommand{\arraystretch}{1.1}
\caption{
Baseline comparison of conditional (unconditional) path distribution loss across 11 benchmark problems.
Per-case means and sample standard deviations are reported in \Cref{tab:baseline-conditional-energy-full,tab:baseline-trajectory-energy-full}. $\star$ denotes using $\mathcal{L}_x$ and $\mathcal{L}_\mathrm{x}^{\mathrm{traj}}$.
Horizon: $5\,\mathrm{s}$. }
\label{tab:baseline-mean-full}
\resizebox{\textwidth}{!}{%
\begin{tabular}{l*{6}{c}}
\toprule
 & Ours & \citep{teng2026embedding} & \citep{li2020scalable} & \citep{li2020scalable}$\star$ & \citep{bartosh2025sde} & \citep{kiyohara2025neural} \\
\midrule
b-ball (GMM)
& $\mathbf{0.432}\;(\mathbf{0.823})$ & Diverged & Diverged & $3.69\;(154)$ & Diverged & $0.730\;(9.53)$ \\
b-ball (Uniform)
& $\mathbf{0.338}\;(\mathbf{0.709})$ & Diverged & Diverged & Diverged & Diverged & $0.534\;(5.40)$ \\
Torus (GMM)
& $\mathbf{0.687}\;(\mathbf{0.795})$ & Diverged & Diverged & Diverged & Diverged & Diverged \\
Torus (Uniform)
& $\mathbf{0.677}\;(\mathbf{0.794})$ & Diverged & Diverged & Diverged & Diverged & $0.955\;(16.7)$ \\
Klein (GMM)
& $\mathbf{0.688}\;(\mathbf{0.833})$ & Diverged & Diverged & Diverged & Diverged & $0.903\;(13.4)$ \\
Klein (Uniform)
& $\mathbf{0.676}\;(\mathbf{0.826})$ & Diverged & Diverged & Diverged & Diverged & $0.922\;(14.7)$ \\
Klein--Torus
& $\mathbf{0.581}\;(\mathbf{0.763})$ & Diverged & Diverged & Diverged & Diverged & $0.995\;(19.4)$ \\
1-ball (GMM)
& $\mathbf{0.482}\;(\mathbf{1.05})$ & Diverged & Diverged & Diverged & Diverged & Diverged \\
1-ball (Uniform)
& $\mathbf{0.439}\;(\mathbf{0.956})$ & Diverged & Diverged & Diverged & Diverged & Diverged \\
2-balls (GMM)
& $\mathbf{0.525}\;(\mathbf{1.89})$ & $0.839\;(4.02)$ & $1.64\;(93.4)$ & $1.84\;(132)$ & Diverged & Diverged \\
2-balls (Uniform)
& $\mathbf{0.509}\;(\mathbf{2.14})$ & $0.809\;(4.07)$ & Diverged & $3.01\;(277)$ & Diverged & Diverged \\
\bottomrule
\end{tabular}%
}
\vspace{-3mm}
\end{table*}
}

\newcommand{\TableAblationSummaryHorizonFull}{%
\begin{table}
\centering
\footnotesize
\setlength{\tabcolsep}{4pt}
\renewcommand{\arraystretch}{1.15}
\caption{
Ablation summary across 11 benchmark settings on the conditional (unconditional) trajectory distribution loss.
Mean change is the average of per-setting percentage changes relative to the default model;
positive values indicate degradation.
Worse cases count the settings with a higher mean test loss than the default model
(out of 11), using the means over five training seeds.
MLP replaces the diffusion sampler with a deterministic MLP encoder;
loss labels identify the removed objectives.
Per-case latent-dimension and loss-ablation results are reported in \Cref{tab:ablation-split-latent-conditional-energy-loss-full,tab:ablation-split-latent-x-trajectory-loss-full,tab:ablation-split-weights-conditional-energy-loss-full,tab:ablation-split-weights-x-trajectory-loss-full}.
Per-case conditional and unconditional results for the MLP encoder ablation are reported in \Cref{tab:ablation-split-sampler-conditional-energy-loss-full}.
Horizon: $5\,\mathrm{s}$.
}
\label{tab:ablation-summary-full}
\resizebox{\textwidth}{!}{%
\begin{tabular}{l*{11}{c}}
\toprule
Setting
& $d_z=d_x$
& $d_z=2d_x$
& $d_z=3d_x$
& MLP
& $\mathcal L_{\mathrm{x}}^{\mathrm{traj,c}}$
& $\mathcal L_z$
& $\mathcal L_x$
& $\mathcal L_{\mathrm{x}}^{\mathrm{traj}}$
& $\mathcal L_{\mathrm{x}}^{\mathrm{traj,c}},\ \mathcal L_{\mathrm{x}}^{\mathrm{traj}}$
& $\mathcal L_x,\ \mathcal L_{\mathrm{x}}^{\mathrm{traj}}$
& $\mathcal L_{\mathrm{x}}^{\mathrm{traj,c}},\ \mathcal L_x$ \\
\midrule
Mean change (\%)
& $\text{Diverged}\;(\text{Diverged})$
& $\text{Diverged}\;(\text{Diverged})$
& $+20.9\;(+396.2)$
& $+3.1\;(+14.2)$
& $+3.7\;(+1.6)$
& $\text{Diverged}\;(\text{Diverged})$
& $+0.8\;(+11.6)$
& $-0.4\;(+4.0)$
& $+9.9\;(+17.4)$
& $-1.0\;(+4.8)$
& $+3.2\;(-1.1)$ \\
Worse cases (/11)
& $\text{Diverged}\;(\text{Diverged})$
& $\text{Diverged}\;(\text{Diverged})$
& $7\;(5)$
& $7\;(10)$
& $8\;(4)$
& $\text{Diverged}\;(\text{Diverged})$
& $7\;(7)$
& $6\;(6)$
& $11\;(11)$
& $6\;(8)$
& $8\;(4)$ \\
\bottomrule
\end{tabular}%
}
\vspace{-2mm}
\end{table}
}

\newcommand{\TableBaselineConditionalEnergyHorizonFull}{%
\begin{table*}[!b]
\centering
\small
\setlength{\tabcolsep}{6pt}
\renewcommand{\arraystretch}{1.1}
\caption{
Baseline comparison measured by test-set conditional path distribution loss $\mathcal L_{\mathrm{x}}^{\mathrm{traj,c}}$ ($\downarrow$).
Horizon: $5\,\mathrm{s}$.
}
\label{tab:baseline-conditional-energy-full}
\resizebox{\textwidth}{!}{%
\begin{tabular}{l*{6}{r@{$\,\pm\,$}l}}
\toprule
Case & \multicolumn{2}{c}{Ours}
& \multicolumn{2}{c}{\citep{teng2026embedding}}
& \multicolumn{2}{c}{\citep{li2020scalable}}
& \multicolumn{2}{c}{\citep{li2020scalable}$\star$}
& \multicolumn{2}{c}{\citep{bartosh2025sde}}
& \multicolumn{2}{c}{\citep{kiyohara2025neural}} \\
\midrule
b-ball (GMM)
& $\mathbf{0.432}$ & $0.00909$
& \multicolumn{2}{c}{Diverged}
& \multicolumn{2}{c}{Diverged}
& $3.69$ & $3.66$
& \multicolumn{2}{c}{Diverged}
& $0.730$ & $0.00993$ \\
b-ball (Uniform)
& $\mathbf{0.338}$ & $0.0119$
& \multicolumn{2}{c}{Diverged}
& \multicolumn{2}{c}{Diverged}
& \multicolumn{2}{c}{Diverged}
& \multicolumn{2}{c}{Diverged}
& $0.534$ & $0.00627$ \\
Torus (GMM)
& $\mathbf{0.687}$ & $0.00109$
& \multicolumn{2}{c}{Diverged}
& \multicolumn{2}{c}{Diverged}
& \multicolumn{2}{c}{Diverged}
& \multicolumn{2}{c}{Diverged}
& \multicolumn{2}{c}{Diverged} \\
Torus (Uniform)
& $\mathbf{0.677}$ & $0.00517$
& \multicolumn{2}{c}{Diverged}
& \multicolumn{2}{c}{Diverged}
& \multicolumn{2}{c}{Diverged}
& \multicolumn{2}{c}{Diverged}
& $0.955$ & $0.0207$ \\
Klein (GMM)
& $\mathbf{0.688}$ & $0.00339$
& \multicolumn{2}{c}{Diverged}
& \multicolumn{2}{c}{Diverged}
& \multicolumn{2}{c}{Diverged}
& \multicolumn{2}{c}{Diverged}
& $0.903$ & $0.0164$ \\
Klein (Uniform)
& $\mathbf{0.676}$ & $0.00832$
& \multicolumn{2}{c}{Diverged}
& \multicolumn{2}{c}{Diverged}
& \multicolumn{2}{c}{Diverged}
& \multicolumn{2}{c}{Diverged}
& $0.922$ & $0.0205$ \\
Klein--Torus
& $\mathbf{0.581}$ & $0.0546$
& \multicolumn{2}{c}{Diverged}
& \multicolumn{2}{c}{Diverged}
& \multicolumn{2}{c}{Diverged}
& \multicolumn{2}{c}{Diverged}
& $0.995$ & $0.0444$ \\
1-ball (GMM)
& $\mathbf{0.482}$ & $0.00578$
& \multicolumn{2}{c}{Diverged}
& \multicolumn{2}{c}{Diverged}
& \multicolumn{2}{c}{Diverged}
& \multicolumn{2}{c}{Diverged}
& \multicolumn{2}{c}{Diverged} \\
1-ball (Uniform)
& $\mathbf{0.439}$ & $0.0209$
& \multicolumn{2}{c}{Diverged}
& \multicolumn{2}{c}{Diverged}
& \multicolumn{2}{c}{Diverged}
& \multicolumn{2}{c}{Diverged}
& \multicolumn{2}{c}{Diverged} \\
2-balls (GMM)
& $\mathbf{0.525}$ & $0.00689$
& $0.839$ & $0.00677$
& $1.64$ & $0.185$
& $1.84$ & $0.672$
& \multicolumn{2}{c}{Diverged}
& \multicolumn{2}{c}{Diverged} \\
2-balls (Uniform)
& $\mathbf{0.509}$ & $0.0191$
& $0.809$ & $0.0233$
& \multicolumn{2}{c}{Diverged}
& $3.01$ & $0.866$
& \multicolumn{2}{c}{Diverged}
& \multicolumn{2}{c}{Diverged} \\
\bottomrule
\end{tabular}%
}
\end{table*}
}

\newcommand{\TableBaselineTrajectoryEnergyHorizonFull}{%
\begin{table*}
\centering
\scriptsize
\setlength{\tabcolsep}{6pt}
\renewcommand{\arraystretch}{1.1}
\caption{
Baseline comparison measured by test-set unconditional path distribution loss $\mathcal L_{\mathrm{x}}^{\mathrm{traj}}$ ($\downarrow$).
Horizon: $5\,\mathrm{s}$.
}
\label{tab:baseline-trajectory-energy-full}
\resizebox{\textwidth}{!}{%
\begin{tabular}{l*{6}{r@{$\,\pm\,$}l}}
\toprule
Case & \multicolumn{2}{c}{Ours}
& \multicolumn{2}{c}{\citep{teng2026embedding}}
& \multicolumn{2}{c}{\citep{li2020scalable}}
& \multicolumn{2}{c}{\citep{li2020scalable}$\star$}
& \multicolumn{2}{c}{\citep{bartosh2025sde}}
& \multicolumn{2}{c}{\citep{kiyohara2025neural}} \\
\midrule
b-ball (GMM)
& $\mathbf{0.823}$ & $0.144$
& \multicolumn{2}{c}{Diverged}
& \multicolumn{2}{c}{Diverged}
& $154$ & $187$
& \multicolumn{2}{c}{Diverged}
& $9.53$ & $0.341$ \\
b-ball (Uniform)
& $\mathbf{0.709}$ & $0.159$
& \multicolumn{2}{c}{Diverged}
& \multicolumn{2}{c}{Diverged}
& \multicolumn{2}{c}{Diverged}
& \multicolumn{2}{c}{Diverged}
& $5.40$ & $0.107$ \\
Torus (GMM)
& $\mathbf{0.795}$ & $0.0315$
& \multicolumn{2}{c}{Diverged}
& \multicolumn{2}{c}{Diverged}
& \multicolumn{2}{c}{Diverged}
& \multicolumn{2}{c}{Diverged}
& \multicolumn{2}{c}{Diverged} \\
Torus (Uniform)
& $\mathbf{0.794}$ & $0.0622$
& \multicolumn{2}{c}{Diverged}
& \multicolumn{2}{c}{Diverged}
& \multicolumn{2}{c}{Diverged}
& \multicolumn{2}{c}{Diverged}
& $16.7$ & $1.32$ \\
Klein (GMM)
& $\mathbf{0.833}$ & $0.0324$
& \multicolumn{2}{c}{Diverged}
& \multicolumn{2}{c}{Diverged}
& \multicolumn{2}{c}{Diverged}
& \multicolumn{2}{c}{Diverged}
& $13.4$ & $1.04$ \\
Klein (Uniform)
& $\mathbf{0.826}$ & $0.0696$
& \multicolumn{2}{c}{Diverged}
& \multicolumn{2}{c}{Diverged}
& \multicolumn{2}{c}{Diverged}
& \multicolumn{2}{c}{Diverged}
& $14.7$ & $1.27$ \\
Klein--Torus
& $\mathbf{0.763}$ & $0.118$
& \multicolumn{2}{c}{Diverged}
& \multicolumn{2}{c}{Diverged}
& \multicolumn{2}{c}{Diverged}
& \multicolumn{2}{c}{Diverged}
& $19.4$ & $2.85$ \\
1-ball (GMM)
& $\mathbf{1.05}$ & $0.141$
& \multicolumn{2}{c}{Diverged}
& \multicolumn{2}{c}{Diverged}
& \multicolumn{2}{c}{Diverged}
& \multicolumn{2}{c}{Diverged}
& \multicolumn{2}{c}{Diverged} \\
1-ball (Uniform)
& $\mathbf{0.956}$ & $0.185$
& \multicolumn{2}{c}{Diverged}
& \multicolumn{2}{c}{Diverged}
& \multicolumn{2}{c}{Diverged}
& \multicolumn{2}{c}{Diverged}
& \multicolumn{2}{c}{Diverged} \\
2-balls (GMM)
& $\mathbf{1.89}$ & $0.374$
& $4.02$ & $0.981$
& $93.4$ & $23.6$
& $132$ & $83.5$
& \multicolumn{2}{c}{Diverged}
& \multicolumn{2}{c}{Diverged} \\
2-balls (Uniform)
& $\mathbf{2.14}$ & $0.693$
& $4.07$ & $1.51$
& \multicolumn{2}{c}{Diverged}
& $277$ & $96.2$
& \multicolumn{2}{c}{Diverged}
& \multicolumn{2}{c}{Diverged} \\
\bottomrule
\end{tabular}%
}
\end{table*}
}

\newcommand{\TableLatentConditionalEnergyHorizonFull}{%
\begin{table*}
\centering
\scriptsize
\setlength{\tabcolsep}{4pt}
\renewcommand{\arraystretch}{1.1}
\caption{
Latent-dimension ablation measured by test-set conditional path distribution loss $\mathcal L_{\mathrm{x}}^{\mathrm{traj,c}}$ ($\downarrow$).
Horizon: $5\,\mathrm{s}$.
}
\label{tab:ablation-split-latent-conditional-energy-loss-full}
\begin{tabular}{l*{4}{r@{$\,\pm\,$}l}}
\toprule
 & \multicolumn{2}{c}{$d_z=d_x$} & \multicolumn{2}{c}{$d_z=2d_x$} & \multicolumn{2}{c}{$d_z=3d_x$} & \multicolumn{2}{c}{$d_z=4d_x$ (default)} \\
\midrule
b-ball (GMM)
& $0.469$ & $0.0170$
& $0.444$ & $0.0239$
& $0.465$ & $0.0384$
& $\mathbf{0.432}$ & $0.00909$ \\
b-ball (Uniform)
& \multicolumn{2}{c}{Diverged}
& $0.381$ & $0.0576$
& $0.343$ & $0.0192$
& $\mathbf{0.338}$ & $0.0119$ \\
Torus (GMM)
& $0.771$ & $0.153$
& $0.689$ & $0.00293$
& $0.689$ & $0.00180$
& $\mathbf{0.687}$ & $0.00109$ \\
Torus (Uniform)
& $0.705$ & $0.00415$
& $0.674$ & $0.00480$
& $\mathbf{0.671}$ & $0.00285$
& $0.677$ & $0.00517$ \\
Klein (GMM)
& $0.707$ & $0.00579$
& $0.693$ & $0.00565$
& $0.688$ & $0.000717$
& $\mathbf{0.688}$ & $0.00339$ \\
Klein (Uniform)
& $0.694$ & $0.00823$
& $\mathbf{0.675}$ & $0.00388$
& $0.677$ & $0.00225$
& $0.676$ & $0.00832$ \\
Klein--Torus
& $0.705$ & $0.00816$
& $0.628$ & $0.0743$
& $\mathbf{0.574}$ & $0.0401$
& $0.581$ & $0.0546$ \\
1-ball (GMM)
& $0.501$ & $0.0115$
& $\mathbf{0.479}$ & $0.00566$
& $0.482$ & $0.00328$
& $0.482$ & $0.00578$ \\
1-ball (Uniform)
& $0.441$ & $0.0105$
& $\mathbf{0.423}$ & $0.00767$
& $0.427$ & $0.00571$
& $0.439$ & $0.0209$ \\
2-balls (GMM)
& $0.532$ & $0.00539$
& \multicolumn{2}{c}{Diverged}
& $1.71$ & $1.71$
& $\mathbf{0.525}$ & $0.00689$ \\
2-balls (Uniform)
& $0.515$ & $0.0113$
& $0.516$ & $0.0363$
& $\mathbf{0.505}$ & $0.0156$
& $0.509$ & $0.0191$ \\
\bottomrule
\end{tabular}%
\end{table*}
}

\newcommand{\TableLatentTrajectoryEnergyHorizonFull}{%
\begin{table*}
\centering
\scriptsize
\setlength{\tabcolsep}{4pt}
\renewcommand{\arraystretch}{1.1}
\caption{
Latent-dimension ablation measured by test-set unconditional path distribution loss $\mathcal L_{\mathrm{x}}^{\mathrm{traj}}$ ($\downarrow$).
Horizon: $5\,\mathrm{s}$.
}
\label{tab:ablation-split-latent-x-trajectory-loss-full}
\begin{tabular}{l*{4}{r@{$\,\pm\,$}l}}
\toprule
 & \multicolumn{2}{c}{$d_z=d_x$} & \multicolumn{2}{c}{$d_z=2d_x$} & \multicolumn{2}{c}{$d_z=3d_x$} & \multicolumn{2}{c}{$d_z=4d_x$ (default)} \\
\midrule
b-ball (GMM)
& $1.63$ & $0.465$
& $1.16$ & $0.401$
& $1.53$ & $0.797$
& $\mathbf{0.823}$ & $0.144$ \\
b-ball (Uniform)
& \multicolumn{2}{c}{Diverged}
& $1.17$ & $0.816$
& $\mathbf{0.654}$ & $0.254$
& $0.709$ & $0.159$ \\
Torus (GMM)
& $6.81$ & $13.2$
& $0.801$ & $0.0320$
& $0.825$ & $0.0328$
& $\mathbf{0.795}$ & $0.0315$ \\
Torus (Uniform)
& $0.962$ & $0.0922$
& $0.769$ & $0.0617$
& $\mathbf{0.761}$ & $0.0301$
& $0.794$ & $0.0622$ \\
Klein (GMM)
& $1.13$ & $0.199$
& $0.890$ & $0.0836$
& $\mathbf{0.830}$ & $0.0552$
& $0.833$ & $0.0324$ \\
Klein (Uniform)
& $0.959$ & $0.0524$
& $\mathbf{0.792}$ & $0.0185$
& $0.807$ & $0.0245$
& $0.826$ & $0.0696$ \\
Klein--Torus
& $1.24$ & $0.199$
& $0.840$ & $0.120$
& $\mathbf{0.725}$ & $0.0594$
& $0.763$ & $0.118$ \\
1-ball (GMM)
& $1.27$ & $0.124$
& $\mathbf{0.977}$ & $0.0815$
& $1.10$ & $0.0787$
& $1.05$ & $0.141$ \\
1-ball (Uniform)
& $1.01$ & $0.135$
& $\mathbf{0.853}$ & $0.0873$
& $0.903$ & $0.0532$
& $0.956$ & $0.185$ \\
2-balls (GMM)
& $2.08$ & $0.257$
& \multicolumn{2}{c}{Diverged}
& $82.6$ & $112$
& $\mathbf{1.89}$ & $0.374$ \\
2-balls (Uniform)
& $\mathbf{1.99}$ & $0.232$
& $4.68$ & $5.53$
& $2.39$ & $0.658$
& $2.14$ & $0.693$ \\
\bottomrule
\end{tabular}%
\end{table*}
}

\newcommand{\TableSamplerAblationHorizonFull}{%
\begin{table*}
\centering
\scriptsize
\setlength{\tabcolsep}{4pt}
\renewcommand{\arraystretch}{1.1}
\caption{
Sampler ablation measured by test-set conditional and unconditional path distribution losses ($\downarrow$).
Horizon: $5\,\mathrm{s}$.
}
\label{tab:ablation-split-sampler-conditional-energy-loss-full}
\label{tab:ablation-split-sampler-x-trajectory-loss-full}
\begin{tabular}{l*{4}{r@{$\,\pm\,$}l}}
\toprule
 & \multicolumn{4}{c}{DDIM} & \multicolumn{4}{c}{MLP} \\
\cmidrule(lr){2-5}\cmidrule(lr){6-9}
Case & \multicolumn{2}{c}{Conditional} & \multicolumn{2}{c}{Unconditional}
& \multicolumn{2}{c}{Conditional} & \multicolumn{2}{c}{Unconditional} \\
\midrule
b-ball (GMM)
& $\mathbf{0.432}$ & $0.00909$
& $\mathbf{0.823}$ & $0.144$
& $0.469$ & $0.0210$
& $1.48$ & $0.567$ \\
b-ball (Uniform)
& $\mathbf{0.338}$ & $0.0119$
& $\mathbf{0.709}$ & $0.159$
& $0.370$ & $0.0139$
& $0.978$ & $0.222$ \\
Torus (GMM)
& $\mathbf{0.687}$ & $0.00109$
& $\mathbf{0.795}$ & $0.0315$
& $0.689$ & $0.00147$
& $0.815$ & $0.0344$ \\
Torus (Uniform)
& $0.677$ & $0.00517$
& $\mathbf{0.794}$ & $0.0622$
& $\mathbf{0.673}$ & $0.00316$
& $0.815$ & $0.0550$ \\
Klein (GMM)
& $\mathbf{0.688}$ & $0.00339$
& $\mathbf{0.833}$ & $0.0324$
& $0.690$ & $0.00186$
& $0.840$ & $0.0343$ \\
Klein (Uniform)
& $0.676$ & $0.00832$
& $\mathbf{0.826}$ & $0.0696$
& $\mathbf{0.675}$ & $0.00235$
& $0.861$ & $0.0246$ \\
Klein--Torus
& $\mathbf{0.581}$ & $0.0546$
& $\mathbf{0.763}$ & $0.118$
& $0.642$ & $0.0607$
& $0.886$ & $0.146$ \\
1-ball (GMM)
& $\mathbf{0.482}$ & $0.00578$
& $\mathbf{1.05}$ & $0.141$
& $0.506$ & $0.0111$
& $1.18$ & $0.0986$ \\
1-ball (Uniform)
& $\mathbf{0.439}$ & $0.0209$
& $\mathbf{0.956}$ & $0.185$
& $0.453$ & $0.00949$
& $1.01$ & $0.0857$ \\
2-balls (GMM)
& $0.525$ & $0.00689$
& $\mathbf{1.89}$ & $0.374$
& $\mathbf{0.524}$ & $0.00922$
& $2.00$ & $0.432$ \\
2-balls (Uniform)
& $0.509$ & $0.0191$
& $2.14$ & $0.693$
& $\mathbf{0.500}$ & $0.00990$
& $\mathbf{1.90}$ & $0.260$ \\
\bottomrule
\end{tabular}%
\end{table*}
}

\newcommand{\TableLossAblationConditionalEnergyHorizonFull}{%
\begin{table*}
\centering
\scriptsize
\setlength{\tabcolsep}{4pt}
\renewcommand{\arraystretch}{1.1}
\caption{
Loss-function ablation measured by test-set conditional path distribution loss $\mathcal L_{\mathrm{x}}^{\mathrm{traj,c}}$ ($\downarrow$).
Column labels other than Default model identify the removed objectives (their weights are set to zero).
Values are the mean $\pm$ sample standard deviation over five training seeds.
The lowest mean among finite entries in each row is bold.
Diverged denotes a configuration with nonfinite results or either test metric exceeding $10^3$ in any seed.
Horizon: $5\,\mathrm{s}$.
}
\label{tab:ablation-split-weights-conditional-energy-loss-full}
\resizebox{\textwidth}{!}{%
\begin{tabular}{l*{8}{r@{$\,\pm\,$}l}}
\toprule
Case & \multicolumn{2}{c}{Default model} & \multicolumn{2}{c}{$\mathcal L_{\mathrm{x}}^{\mathrm{traj,c}}$} & \multicolumn{2}{c}{$\mathcal L_z$} & \multicolumn{2}{c}{$\mathcal L_x$} & \multicolumn{2}{c}{$\mathcal L_{\mathrm{x}}^{\mathrm{traj}}$} & \multicolumn{2}{c}{$\mathcal L_{\mathrm{x}}^{\mathrm{traj,c}},\ \mathcal L_{\mathrm{x}}^{\mathrm{traj}}$} & \multicolumn{2}{c}{$\mathcal L_x,\ \mathcal L_{\mathrm{x}}^{\mathrm{traj}}$} & \multicolumn{2}{c}{$\mathcal L_{\mathrm{x}}^{\mathrm{traj,c}},\ \mathcal L_x$} \\
\midrule
b-ball (GMM)
& $\mathbf{0.432}$ & $0.00909$
& $0.446$ & $0.00769$
& $0.512$ & $0.0405$
& $0.456$ & $0.0193$
& $0.442$ & $0.0118$
& $0.454$ & $0.0190$
& $0.444$ & $0.0144$
& $0.445$ & $0.00619$ \\
b-ball (Uniform)
& $\mathbf{0.338}$ & $0.0119$
& $0.348$ & $0.0204$
& $0.403$ & $0.0524$
& $0.342$ & $0.0167$
& $0.342$ & $0.0113$
& $0.356$ & $0.00949$
& $0.342$ & $0.00911$
& $0.339$ & $0.00832$ \\
Torus (GMM)
& $\mathbf{0.687}$ & $0.00109$
& $0.688$ & $0.00111$
& $0.755$ & $0.0550$
& $0.689$ & $0.00123$
& $0.691$ & $0.000986$
& $0.709$ & $0.000482$
& $0.690$ & $0.00178$
& $0.688$ & $0.000696$ \\
Torus (Uniform)
& $0.677$ & $0.00517$
& $0.673$ & $0.00130$
& $0.748$ & $0.0434$
& $\mathbf{0.670}$ & $0.00201$
& $0.671$ & $0.00221$
& $0.708$ & $0.000951$
& $0.675$ & $0.00248$
& $0.670$ & $0.00107$ \\
Klein (GMM)
& $0.688$ & $0.00339$
& $0.687$ & $0.00281$
& $0.897$ & $0.208$
& $0.689$ & $0.00249$
& $0.690$ & $0.00326$
& $0.707$ & $0.000818$
& $0.690$ & $0.000768$
& $\mathbf{0.685}$ & $0.000721$ \\
Klein (Uniform)
& $0.676$ & $0.00832$
& $0.674$ & $0.00120$
& $0.795$ & $0.0644$
& $0.675$ & $0.00772$
& $0.677$ & $0.00610$
& $0.707$ & $0.00122$
& $0.678$ & $0.00651$
& $\mathbf{0.671}$ & $0.000704$ \\
Klein--Torus
& $0.581$ & $0.0546$
& $0.592$ & $0.0390$
& $0.868$ & $0.0132$
& $0.575$ & $0.0532$
& $0.581$ & $0.0275$
& $0.708$ & $0.000533$
& $\mathbf{0.549}$ & $0.00964$
& $0.590$ & $0.0515$ \\
1-ball (GMM)
& $\mathbf{0.482}$ & $0.00578$
& $0.513$ & $0.0108$
& $0.987$ & $0.674$
& $0.484$ & $0.00540$
& $0.487$ & $0.00730$
& $0.546$ & $0.0138$
& $0.494$ & $0.0116$
& $0.513$ & $0.0112$ \\
1-ball (Uniform)
& $0.439$ & $0.0209$
& $0.474$ & $0.0137$
& $1.08$ & $0.724$
& $0.446$ & $0.0156$
& $0.429$ & $0.0113$
& $0.512$ & $0.0126$
& $\mathbf{0.425}$ & $0.00660$
& $0.473$ & $0.0165$ \\
2-balls (GMM)
& $0.525$ & $0.00689$
& $0.568$ & $0.00330$
& \multicolumn{2}{c}{Diverged}
& $0.536$ & $0.0243$
& $0.516$ & $0.00392$
& $0.595$ & $0.00503$
& $\mathbf{0.511}$ & $0.00610$
& $0.567$ & $0.00337$ \\
2-balls (Uniform)
& $0.509$ & $0.0191$
& $0.568$ & $0.00777$
& $4.30$ & $3.28$
& $0.507$ & $0.0202$
& $0.486$ & $0.0160$
& $0.601$ & $0.00637$
& $\mathbf{0.474}$ & $0.00419$
& $0.563$ & $0.00686$ \\
\bottomrule
\end{tabular}%
}
\end{table*}
}

\newcommand{\TableLossAblationUnconditionalEnergyHorizonFull}{%
\begin{table*}
\centering
\scriptsize
\setlength{\tabcolsep}{4pt}
\renewcommand{\arraystretch}{1.1}
\caption{
Loss-function ablation measured by test-set unconditional path distribution loss $\mathcal L_{\mathrm{x}}^{\mathrm{traj}}$ ($\downarrow$).
Column labels other than Default model identify the removed objectives (their weights are set to zero).
Values are the mean $\pm$ sample standard deviation over five training seeds.
The lowest mean among finite entries in each row is bold.
Diverged denotes a configuration with nonfinite results or either test metric exceeding $10^3$ in any seed.
Horizon: $5\,\mathrm{s}$.
}
\label{tab:ablation-split-weights-x-trajectory-loss-full}
\resizebox{\textwidth}{!}{%
\begin{tabular}{l*{8}{r@{$\,\pm\,$}l}}
\toprule
Case & \multicolumn{2}{c}{Default model} & \multicolumn{2}{c}{$\mathcal L_{\mathrm{x}}^{\mathrm{traj,c}}$} & \multicolumn{2}{c}{$\mathcal L_z$} & \multicolumn{2}{c}{$\mathcal L_x$} & \multicolumn{2}{c}{$\mathcal L_{\mathrm{x}}^{\mathrm{traj}}$} & \multicolumn{2}{c}{$\mathcal L_{\mathrm{x}}^{\mathrm{traj,c}},\ \mathcal L_{\mathrm{x}}^{\mathrm{traj}}$} & \multicolumn{2}{c}{$\mathcal L_x,\ \mathcal L_{\mathrm{x}}^{\mathrm{traj}}$} & \multicolumn{2}{c}{$\mathcal L_{\mathrm{x}}^{\mathrm{traj,c}},\ \mathcal L_x$} \\
\midrule
b-ball (GMM)
& $\mathbf{0.823}$ & $0.144$
& $0.990$ & $0.219$
& $2.59$ & $1.06$
& $1.29$ & $0.188$
& $1.09$ & $0.295$
& $1.19$ & $0.538$
& $1.18$ & $0.347$
& $0.931$ & $0.191$ \\
b-ball (Uniform)
& $0.709$ & $0.159$
& $0.653$ & $0.207$
& $2.05$ & $1.90$
& $0.699$ & $0.177$
& $0.738$ & $0.107$
& $0.711$ & $0.117$
& $0.793$ & $0.232$
& $\mathbf{0.593}$ & $0.155$ \\
Torus (GMM)
& $0.795$ & $0.0315$
& $\mathbf{0.770}$ & $0.0216$
& $4.45$ & $2.87$
& $0.835$ & $0.0448$
& $0.883$ & $0.0634$
& $0.899$ & $0.0177$
& $0.869$ & $0.0769$
& $0.782$ & $0.0221$ \\
Torus (Uniform)
& $0.794$ & $0.0622$
& $0.732$ & $0.00632$
& $4.08$ & $2.07$
& $0.789$ & $0.0612$
& $0.778$ & $0.0321$
& $0.924$ & $0.0233$
& $0.826$ & $0.0568$
& $\mathbf{0.726}$ & $0.00525$ \\
Klein (GMM)
& $0.833$ & $0.0324$
& $\mathbf{0.771}$ & $0.0142$
& $12.7$ & $13.8$
& $0.828$ & $0.0536$
& $0.844$ & $0.0674$
& $0.900$ & $0.0208$
& $0.898$ & $0.0789$
& $0.780$ & $0.0140$ \\
Klein (Uniform)
& $0.826$ & $0.0696$
& $0.789$ & $0.0250$
& $5.78$ & $3.49$
& $0.829$ & $0.0866$
& $0.877$ & $0.0660$
& $0.991$ & $0.0186$
& $0.872$ & $0.0374$
& $\mathbf{0.786}$ & $0.0125$ \\
Klein--Torus
& $0.763$ & $0.118$
& $0.741$ & $0.0599$
& $9.66$ & $0.945$
& $0.747$ & $0.111$
& $0.738$ & $0.0386$
& $1.15$ & $0.0293$
& $\mathbf{0.696}$ & $0.0480$
& $0.756$ & $0.0933$ \\
1-ball (GMM)
& $\mathbf{1.05}$ & $0.141$
& $1.11$ & $0.122$
& $27.9$ & $39.2$
& $1.09$ & $0.0576$
& $1.15$ & $0.169$
& $1.09$ & $0.108$
& $1.15$ & $0.168$
& $1.09$ & $0.135$ \\
1-ball (Uniform)
& $0.956$ & $0.185$
& $1.05$ & $0.147$
& $35.4$ & $43.4$
& $0.976$ & $0.110$
& $\mathbf{0.900}$ & $0.0876$
& $1.01$ & $0.0667$
& $0.957$ & $0.109$
& $0.986$ & $0.0999$ \\
2-balls (GMM)
& $1.89$ & $0.374$
& $1.79$ & $0.0611$
& \multicolumn{2}{c}{Diverged}
& $2.79$ & $1.58$
& $1.83$ & $0.129$
& $1.96$ & $0.204$
& $\mathbf{1.74}$ & $0.157$
& $1.79$ & $0.0994$ \\
2-balls (Uniform)
& $2.14$ & $0.693$
& $2.57$ & $0.852$
& $415$ & $370$
& $2.50$ & $1.17$
& $2.04$ & $0.693$
& $2.65$ & $0.861$
& $\mathbf{1.68}$ & $0.297$
& $2.42$ & $0.773$ \\
\bottomrule
\end{tabular}%
}
\end{table*}
}

\maketitle

\newcommand{\hang}[1]{[{\textbf{\textcolor{red}{Hang: #1}}}]}

\definecolor{myblue}{RGB}{0,90,180} 
\hypersetup{linkcolor=myblue, citecolor=myblue}

\begin{abstract}
A stochastic hybrid system (SHS) is governed by a stochastic differential equation (SDE) describing the continuous dynamics and a Markov reset kernel triggered on the guard surface. Its probability evolution can be described by a hybrid Fokker–Planck (HFP) equation with a partial differential term corresponding to the SDE and an integral term arising from the reset kernel.
This work shows that such an SHS can be approximated by an SDE in a higher-dimensional latent space where the sample paths are continuous.
The key to this result is to encode different branches of the reset kernel using auxiliary variables, transforming the resets into deterministic ones that enable topological gluing. By the embedding theorem, the glued manifold can then be embedded into a higher-dimensional Euclidean space. 
We show that the probability evolution on the embedded image no longer requires explicit reset terms in the HFP equation. Building on this theorem, we design a loss that matches the evolving state distributions, enabling a single latent SDE to recover the probability evolution of the SHS without mode labeling, trajectory segmentation, or event-based simulations.

\end{abstract}

\section{Introduction}
\begin{figure}[b]
    \centering
    \includegraphics[width=1\linewidth]{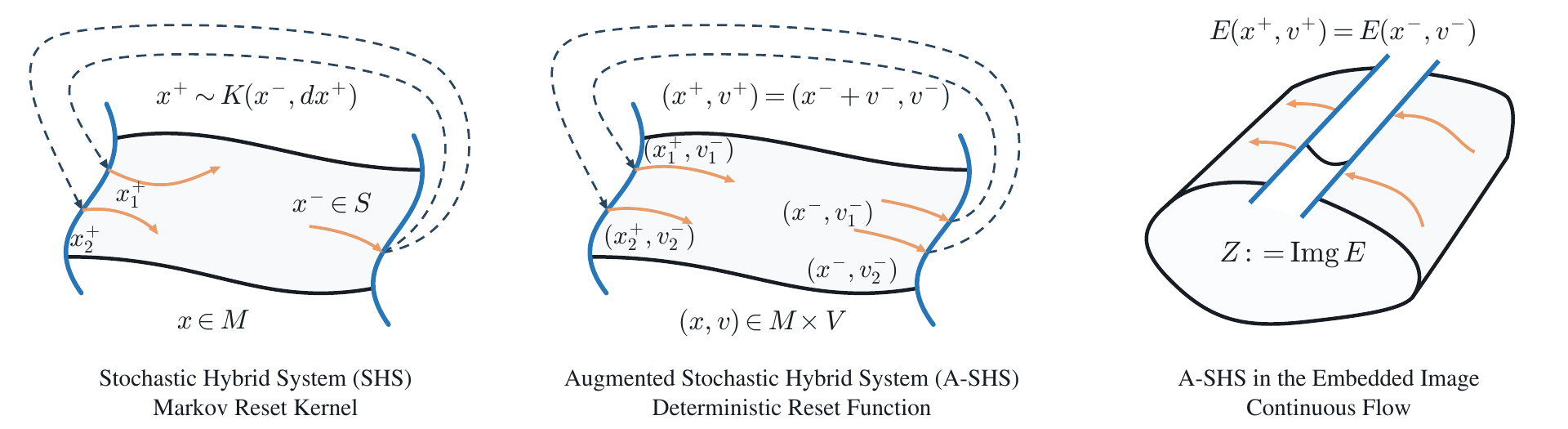}
    \caption{We approximate SHSs with a Markov reset kernel by an A-SHS with a deterministic reset function that allows the topological gluing to make the hybrid flow continuous.  }
    \label{fig:teaser}
\end{figure}

Hybrid systems are a powerful abstraction for dynamical systems such as robotics~\citep{posa2014direct, westervelt2003hybrid} and autonomous driving~\citep{10285579}. To model the stochasticity, we can consider stochastic hybrid systems~(SHSs)~\citep{shs-jianghai, Bujorianu2006}. As SHSs are governed by compositions of continuous-time dynamics and discrete events, learning both components concurrently from time-series data is challenging.

To address this challenge, the recent work~\citep{teng2026embedding, teng2026chyll} applies the geometric hybrid system theory~\citep{simic2005towards, hirsch1976differential} to represent a hybrid system by continuous dynamics on a manifold. In~\citep{teng2026embedding, teng2026chyll}, the manifold representation admits an embedding in a higher-dimensional Euclidean space, which can be learned by a continuous latent ODE~\citep{chen2018neural}. However, the results in~\citep{simic2005towards} assume deterministic resets, which cannot be applied to SHSs where the reset kernel is a probability distribution. In this work, we extend the geometric framework~\citep{teng2026embedding, teng2026chyll} to SHSs by representing different reset branches with auxiliary variables, which enables us to represent SHSs by a continuous SDE on a manifold (see \Cref{fig:teaser}). The main contributions are summarized as:
\begin{enumerate}
    \item We present an augmented-state SHS (A-SHS) with a deterministic reset to approximate an SHS with a Markov reset kernel. The A-SHS is then embedded into a higher-dimensional Euclidean space where the sample paths become continuous. 
    \item We derive the hybrid Fokker-Planck (HFP) equation of the A-SHS and show that the source and sink terms in the HFP equation vanish in the embedded image. This result enables us to predict the probability distribution of A-SHS by a continuous SDE without identifying the guard surface after the embedding. 
    \item Based on the theoretical result, we can accuretely predict the distribution evolution of SHS from time-series data vy a latent SDE without any trajectory segmentation, mode labeling, or event-based simulations.
\end{enumerate}

\section{Related Works}
\textbf{Hybrid System Theory:} Hybrid automata~\citep{lygeros2003dynamical, goebel2009hybrid} model continuous-time dynamics within modes and state-triggered transitions and resets between them. Stochastic hybrid systems~\citep{shs-jianghai} incorporate stochasticity into both continuous dynamics and discrete resets. Specifically, \citep{shs-jianghai} models continuous evolution as a stochastic differential equation (SDE) and resets as Markov kernels triggered on lower-dimensional guard surfaces, while \citep{Bujorianu2006} further allows spontaneous jumps within the state-space interior. Hybrid inclusions instead use set-valued flow and reset maps to represent multiple admissible reset branches~\citep{aubin2002impulse, goebel2006solutions}. Whereas SHSs assign probability laws to reset outcomes, hybrid inclusions specify only the admissible outcomes. To consider the state distribution, the hybrid Fokker--Planck (HFP) equations include additional terms that account for reset-induced sources and sinks~\citep{wang2020spectral, wang2022uncertainty}. Although difficult to solve, they provide a powerful tool for studying SHS distribution evolution.

\textbf{Learning Hybrid Systems:} Learning hybrid systems from time-series data requires jointly identifying modes, guard surfaces, and continuous dynamics, making the problem inherently combinatorial. \citep{roll2004identification} uses mixed-integer programming (MIP) to identify piecewise-affine (PWA) systems and reformulates the problem as a linear complementarity problem to avoid MIP. However, PWA systems are a restricted class of hybrid automata (HA) with identity resets. To learn general SHSs, \citep{poli2021neural} uses normalizing flows for event transitions and Neural ODEs~\citep{chen2018neural} for continuous dynamics. Similarly, \citep{liu2025discrete} learns discrete-time HA using a mode classifier and state-space model. Both methods require heuristic trajectory segmentation, which is sensitive to stochastic noise, while \citep{ochoaneural} learns vector fields and reset functions assuming known segmentations. Alternatively, \citep{teng2026embedding,teng2026chyll} builds on the theory in~\citep{simic2005towards} to represent HA as Neural ODEs in higher-dimensional latent spaces. 

\textbf{Learning Stochastic Differential Equations:} Recent work learns stochastic dynamics at different statistical levels. \citep{kidger2021neural} trains a Neural SDE generator against a Neural CDE critic using a Wasserstein path-level loss. Most closely related, \citep{bartosh2025sde} parameterizes posterior marginals and trains latent SDEs without simulation using an ELBO with a path-space KL loss. \citep{neklyudov2023action} recovers a canonical probability flow from unpaired temporal marginals through a variational action objective, while \citep{kiyohara2025neural} learns conditional normalizing-flow transition kernels using likelihood and Chapman--Kolmogorov consistency. \citep{seifner2025context} pretrains a transformer through supervised drift--diffusion regression, whereas \citep{ilersich2025learning} learns coupled macro--microscale latent SDEs using an ELBO and Product-of-Experts likelihood. Unlike these methods, the proposed latent SDE framework represents stochastic resets without explicit event-based simulation.

\section{Preliminaries \& Problem Formulation}
\subsection{Stochastic Hybrid Systems}
We refer to~\citep {shs-jianghai, wang2020spectral} for a preliminary overview of SHSs. Consider a SHS with state-triggered Markov reset kernel as illustrated in~\Cref{fig:teaser}: 
\begin{equation}
\label{eq:stochastic-hybrid-sys}
\begin{aligned}
        dx &= f(t, x)dt+g(t, x)dw,\ x\notin S, \\
        x^{+} &\sim K(x^{-},dx^{+}),\ x^{-}\in S ,
\end{aligned}\tag{SHS}
\end{equation}
where \(w\) is an $n-\dim$ Wiener process, $x$ evolves in the domain $M \subset \mathbb{R}^n$ with smooth boundary and \(S\subset \partial{M}\) is the $(n-1)-\dim$ guard surface. When $x\notin S$, the system is governed by the drift $f:\mathbb{R}\times M\rightarrow M$ and the diffusion matrix $g:\mathbb{R}\times M\rightarrow \mathbb{R}^{n \times n}$.
The reset is described by a Markov kernel \(K:S\times\mathcal B({M})\to[0,1]\), where $\mathcal{B}({M})$ is all Borel sets in ${M}$. \(K(x^{-},A)\) denotes the probability that the post-reset state \(x^+\) lies in a measurable set \(A\subset {M}\) conditioned on \(x^{-} \in S\). Assume \(x\) admits a density \(p(t,x)\) on \(M\).
Let \(D(t,x)=\frac{1}{2}g(t,x)g(t,x)^\top\). The probability flux associated with the continuous diffusion is:
\begin{equation}
    J(t,x) = f(t,x)p(t,x) - \nabla\cdot(D(t,x)p(t,x))
\end{equation}
Let \(n_S(x)\) be the chosen normal direction on \(S\), and define the guard flux density by 
\begin{equation}
\label{eq:flux-shs}
\alpha(t,x)=J(t,x)\cdot n_S(x),\ \forall x\in S.
\end{equation}
Consider the induced density of $K(x^-, dx)$, i.e., $\kappa(x^-, x^+)$ as defined by $K(x^-,A)=:\int_A \kappa(x^-,x^+)dx^+, A\subset {M}.$ The hybrid Fokker--Planck (HFP) equation with reset is:
\begin{equation}
\label{eq:hybrid-fp}
\begin{aligned}
\frac{\partial p(t,x)}{\partial t}
=
&-\nabla\cdot J(t,x)+
\int_S
{\color{blue}\kappa(\xi,x)}\alpha(t,\xi)d\sigma_S(\xi) -
\int_S
{\color{red}\delta(x-\xi)}\alpha(t,\xi)d\sigma_S(\xi).
\end{aligned}\tag{HFP}
\end{equation}
Here \(d\sigma_S\) is the surface measure on $S$. The second term shows that the probability mass {\color{blue}\emph{leaving}} the guard point \(\xi\in S\) is redistributed to post-reset states by \(x^+\sim\kappa(\xi,x^+)\). The last term suggests that the outgoing mass is {\color{red}\emph{removed}} from the pre-reset point \(\xi \in S\). 

\subsection{Geometric Representation of Hybrid Systems}

When the stochasticity of~\ref{eq:stochastic-hybrid-sys} vanishes, the  Markov kernel $K(x^-, dx^+)$ becomes a classical reset function $R(\cdot)$ and \ref{eq:stochastic-hybrid-sys} reduces to a hybrid automaton: 
\begin{equation}
\label{eq:automaton}
\begin{aligned}
    \dot{x} &=f(t,x),\ \ \ x \notin S, \\
    x^+ &= R(x^-),\ x^- \in S.
\end{aligned}\tag{HA}
\end{equation}
Assume $R$ is a diffeomorphism and $S \in \partial M$ satisfies the regularity condition specified in~\citep{simic2005towards}, we have the following result to make \ref{eq:automaton} continuous:
\begin{theorem}
\label{thm:hybrifold}
    Let $\sim$ be the equivalence relation by $x \sim R(x), \forall x \in S$. We collapse the equivalence class by $\sim$ to a point to obtain the quotient manifold $M / \sim$.
By~\cite{simic2005towards}, $M / \sim$ is a $n-\dim$ manifold
with boundary, and both $M/\sim$ and its boundary are piecewise smooth. 
\end{theorem}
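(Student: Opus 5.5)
The plan is to follow the gluing construction of \citep{simic2005towards}. Because $R$ is a diffeomorphism and the guard satisfies the regularity condition, we show that every point of $M/\sim$ has a neighborhood homeomorphic to an open subset of $\mathbb{R}^n$ or of the half-space $\mathbb{H}^n$, and that the transition maps are piecewise smooth.

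\textbf{Topological preliminaries.} First we check that the quotient is Hausdorff and second countable.

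Second countability passes directly from $M$ to $M/\sim$, since the quotient map $\pi$ is open. The quotient map is open because the saturation of an open set $U$ is $U\cup R(U\cap S)\cup R^{-1}(U\cap R(S))$. Under the regularity assumption ($S$ and $R(S)$ are closed, embedded, disjoint or meeting $\partial M$ transversally), each of these pieces is open in $M$, up to the boundary pieces, which are handled by collars.

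Equivalence classes have at most two points, and $R$ is a homeomorphism onto a closed set. Hence $\sim$ is a closed relation, and Hausdorffness follows.

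\textbf{Charts.} Next we construct charts in three cases.

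For $x\notin S\cup R(S)$, the original chart of $M$ descends unchanged.

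For $x\in S$ with $R(x)\in R(S)\subset\partial M$, we use the collar neighborhood theorem on $M$. This gives collars $C_1\cong S\times[0,\epsilon)$ near $S$ and $C_2\cong R(S)\times[0,\epsilon)$ near $R(S)$, oriented so that $S$ is an exit boundary and $R(S)$ an entry boundary. We define a map on $C_1\sqcup C_2/\sim$ into $S\times(-\epsilon,\epsilon)$ by
\[
(s,\tau)\mapsto(s,-\tau) \quad\text{on } C_1, \qquad (R(s),\tau)\mapsto(s,\tau) \quad\text{on } C_2 .
\]
Both pieces agree on the glued set $\tau=0$, since there $s\sim R(s)$. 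So the map is a well-defined homeomorphism onto an $n$-dimensional open set. Composing with a chart of the $(n-1)$-manifold $S$ gives a Euclidean chart.

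Points of $\partial S$, and boundary points not in $S\cup R(S)$, produce half-space charts. This gives the boundary of $M/\sim$, which is $(\partial M\setminus(S\cup R(S)))\cup\partial S$ glued accordingly.

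\textbf{Piecewise smoothness.} Finally we verify that all transition maps are smooth on each side of the seam $\pi(S)$ and continuous across it. This holds because each collar chart is smooth on $C_1$ and on $C_2$ separately, and $R$ is a diffeomorphism. The boundary of $M/\sim$ is piecewise smooth since it is assembled from smooth strata of $\partial M$ meeting along $\partial S$.

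\textbf{Main obstacle.} The hardest step will be the corner points, where $\partial S$ meets the rest of $\partial M$ or where $S$ and $R(S)$ intersect. There the collars must be chosen compatibly and the chart is only a homeomorphism to a half-space with a corner.

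At such points I would first try to invoke the regularity hypothesis of \citep{simic2005towards} directly. Specifically, I would require $R$ to extend to a diffeomorphism of neighborhoods that maps collars to collars, and then straighten the corner by a standard corner-smoothing homeomorphism, which is where only piecewise, not global, smoothness is obtained.
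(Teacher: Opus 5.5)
The paper gives no argument here beyond the citation to \citep{simic2005towards}. Your collar-gluing plan is the standard argument behind that result, and your seam chart (reflect one collar, glue it to the other along $\tau=0$) is correct.

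\textbf{The point-set step fails as written: $\pi$ is \emph{not} an open map.} The saturation of an open $U$ meeting $S$ contains $R(U\cap S)\subset\partial M$, and no nonempty subset of $\partial M$ is open in $M$. Already for $M=[0,1]$, $S=\{1\}$, $R(1)=0$, the open set $U=(1/2,1]$ has saturation $(1/2,1]\cup\{0\}$, which is not open, so $\pi(U)$ is not open. Hence neither ``second countability passes because $\pi$ is open'' nor ``$\sim$ is a closed relation, hence the quotient is Hausdorff'' is justified; the second implication also needs $\pi$ open.

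The repair uses your saturation formula in the other direction. Suppose $S$ and $R(S)$ are closed and disjoint and $R:S\to R(S)$ is a homeomorphism. Then the saturation $C\cup R(C\cap S)\cup R^{-1}(C\cap R(S))$ of a closed set $C$ is closed, so $\pi$ is a closed map with finite fibers. A closed continuous image of the normal space $M$ is normal, and points of $M/\sim$ are closed, so $M/\sim$ is Hausdorff. Second countability follows because $M/\sim$ is Lindel\"of (a continuous image of $M$) and locally Euclidean once your charts are in place. Closedness of $S$ is essential: if $S$ were relatively open in $\partial M$, a point of $\partial S$ and its limit partner in $\partial R(S)$ (when the two are distinct) would stay unglued yet be inseparable, as in the line with two origins.

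\textbf{The chart at points of $\partial S$, which you defer, needs an unfolding rather than a corner smoothing.} Away from $\pi(\partial S)$, your collar charts in fact give transition maps that are smooth across the seam; this is the usual smooth gluing along boundary pieces. At $\partial S$, work in the $2$-plane normal to $\partial S$. There two closed half-planes are glued along a closed ray, giving a plane slit along the complementary ray. Halving the angle, piecewise linearly and quadrant by quadrant, maps this onto a closed half-plane with $\pi(\partial S)$ on its boundary. Each of the two pieces then necessarily has a corner at $\pi(\partial S)$, which is exactly why only piecewise smoothness is obtained.

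\textbf{You need the extra hypothesis $\overline{S}\cap\overline{R(S)}=\emptyset$.} This local picture, your description $\partial(M/\sim)=(\partial M\setminus(S\cup R(S)))\cup\pi(\partial S)$, and your claim that classes have at most two points all require it. You should state it as part of the regularity hypothesis, because it is not automatic. In the deterministic torus and Klein-bottle gluings of the paper's benchmarks, $S\cap R(S)$ contains corners of $[0,1]^2$. There the four corners form a single class that becomes an interior point. For the bouncing ball, the origin lies in $\overline{S}\cap\overline{R(S)}$ and becomes an interior cone point rather than a boundary point.
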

\begin{theorem}[Whitney Embedding Theorem \citep{hirsch1976differential}]
\label{theorem:whitney}
 Any $C^r$-manifold $(r\ge 1)$ of dimension $n$ can be embedded into $\mathbb{R}^{2n}$. 
\end{theorem}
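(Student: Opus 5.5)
The plan is to prove the strong Whitney embedding theorem as stated in \Cref{theorem:whitney}: any $C^r$ ($r\ge1$) manifold $N$ of dimension $n$ embeds in $\mathbb{R}^{2n}$. The route is standard. First embed $N$ into some large Euclidean space. Then reduce the target dimension by generic linear projections down to $\mathbb{R}^{2n+1}$. Finally, use the Whitney trick to remove self-intersections and reach $\mathbb{R}^{2n}$.

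\textbf{Step 1 (embedding into $\mathbb{R}^N$).} Assume $N$ is second countable and Hausdorff. Choose a countable locally finite atlas and refine it so that the cover has order at most $n+1$; this is possible by covering dimension. Form a partition of unity $\{\phi_i\}$ subordinate to it. For compact $N$, the map $F=(\phi_1\psi_1,\dots,\phi_k\psi_k,\phi_1,\dots,\phi_k)$ is an injective immersion, hence an embedding. For noncompact $N$, group the charts into $n+1$ families of disjoint sets and add a proper exhaustion function as an extra coordinate. This makes the map proper, and hence a closed embedding.

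\textbf{Step 2 (reduction to $2n+1$, and immersion into $2n$).} Suppose $N\subset\mathbb{R}^m$ with $m>2n+1$. For a unit vector $v$, consider the projection $\pi_v$ onto $v^\perp$.
\begin{itemize}
\item $\pi_v$ fails to be injective only if $v$ lies in the image of the secant map $N\times N\setminus\Delta\to S^{m-1}$. That domain has dimension $2n$.
\item $\pi_v$ fails to be immersive only if $v$ lies in the image of the unit tangent bundle, which has dimension $2n-1$.
\end{itemize}
Both maps are $C^1$ from manifolds of dimension less than $m-1$. By Sard's theorem (the easy, measure-zero case) their images have measure zero, so a generic $v$ works. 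Iterate down to $\mathbb{R}^{2n+1}$. Properness is preserved if the exhaustion coordinate is kept separate, or handled by a compactly supported perturbation. The same tangent-bundle argument gives an immersion into $\mathbb{R}^{2n}$, since $2n-1<2n$. Further generic perturbation (transversality via Sard on the double-point map) makes that immersion self-transverse: it has only isolated transverse double points. These are countably many and locally finite.

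\textbf{Step 3 (Whitney trick).} This is the main obstacle. We must remove the double points of a self-transverse immersion $N\to\mathbb{R}^{2n}$.
\begin{itemize}
\item First, adjust signs. Introduce local "kinks" (Whitney's figure-eight model immersion of $\mathbb{R}^n$ in $\mathbb{R}^{2n}$, which has a single double point of chosen sign). This makes the double points pair up with opposite intersection signs; if $N$ is nonorientable or noncompact, we can even push a double point off to infinity or cancel it along an orientation-reversing loop.
\item Next, pair the double points. For each pair, join the two preimage pairs by arcs in $N$, forming a loop in $\mathbb{R}^{2n}$.
\item For $n\ge3$, this loop bounds an embedded $2$-disk whose interior avoids the image. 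This holds by general position, since $2+n<2n$. The disk carries a framing compatible with the sheets, and one sheet is isotoped across the disk to cancel both double points.
\item The cases $n=1$ and $n=2$ are handled separately. For $n=1$, curves embed in $\mathbb{R}^2$ directly by classification. For $n=2$, surfaces embed in $\mathbb{R}^4$, indeed in $\mathbb{R}^3$ for orientable ones, via the explicit classification of surfaces.
\end{itemize}

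I expect the framing and sign bookkeeping in the Whitney disk construction for general $n$ to be the delicate part. Following \citep{hirsch1976differential}, I would first smooth the $C^r$ structure to $C^\infty$, which is possible for $r\ge1$, to avoid regularity issues there.
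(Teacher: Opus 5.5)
The paper gives no proof of \Cref{theorem:whitney}. It quotes the statement from the cited text and uses it only as a black box in \Cref{lemma:E4n}, to embed the $2n$-dimensional glued space in $\mathbb{R}^m$ with $m\ge 4n$. You instead outline the classical proof, and the outline is sound:
\begin{itemize}
\item Steps 1--2 are the weak Whitney theorem: a closed embedding in $\mathbb{R}^{2n+1}$ and a self-transverse immersion in $\mathbb{R}^{2n}$.
\item Step 3 is Whitney's 1944 double-point elimination, which is the only source of the sharp bound.
\item Your dimension counts are the right ones for $n\ge 3$: an embedded Whitney disk needs $4<2n$, and a disk whose interior misses the image needs $2+n<2n$.
\item The cases $n=1,2$ genuinely need a separate classification argument, because the general-position construction of the Whitney disk breaks down in ambient dimensions $2$ and $4$.
\end{itemize}
The citation buys brevity and defers the hard framing argument to the literature. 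Your route shows that all the difficulty lies in the last unit of dimension. That matters here, because the paper's default latent dimension $d_z=4d_x$ sits exactly at $m=4n$, while the projection argument alone would only give $m=4n+1$.

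Two small repairs are needed.

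First, smooth the $C^r$ structure at the very start, not only for Step 3. For a $C^1$ submanifold the unit tangent bundle is only a $C^0$ manifold and its Gauss map is only continuous. So the measure-zero form of Sard you invoke in Step 2 is not yet available.

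Second, sign bookkeeping in Step 3 is only a real constraint when $N$ is oriented and $n$ is even. For $n$ odd, exchanging which preimage of a double point lies on the moving sheet multiplies its sign by $(-1)^{n}$. For nonorientable $N$, an orientation-reversing loop has the same effect. In both of these cases you only need an even number of double points, and one kink fixes the parity. A single double point is never removed on its own, so ``cancel it along an orientation-reversing loop'' should read: pair it with a kink, and route one arc through the loop so the two signs become opposite.
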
 
\begin{remark}
    By \Cref{theorem:whitney}, $M/\sim$ admits an embedding that is globally piecewise smooth. Therefore, the hybrid flow of \ref{eq:automaton} can be represented by ODEs with continuous flow~\citep{teng2026embedding, teng2026chyll}.  However, \Cref{thm:hybrifold} cannot be applied to \ref{eq:stochastic-hybrid-sys} as the reset kernel is not a one-to-one mapping.
\end{remark}
\subsection{Problem Formulation}
\begin{problem}
Given $N$ trajectories $\mathcal{X}:=\{(t_{0:T},x_{0:T}^{(k)})\}_{k=1}^N$ sampled from~\eqref{eq:stochastic-hybrid-sys}, learn its conditional path distribution $p(x_{1:T}\mid x_0)$ at prescribed observation times $t_{0:T}$ and initial state $x_0$. 
\end{problem}
We note that this problem is extremely challenging as the flow of \ref{eq:stochastic-hybrid-sys} is discontinuous and stochastic, the guard surface $S$ is unknown, and the reset is stochastic and set-valued.

\section{Continuous Representation of Stochastic Hybrid System}
In this section, we extend the geometric framework to \ref{eq:stochastic-hybrid-sys}. 

\subsection{Augmented Stochastic Hybrid Systems}
We define an augmented-state stochastic hybrid system (A-SHS) where different reset of $x^+\sim K(x^-,dx^+)$ are represented by an auxiliary variable $v \in V \subset \mathbb{R}^n$. For simplification, let the augmented state be $y:=
\begin{bmatrix}
x^\top,\
v^\top
\end{bmatrix}^\top
\in M\times V\subseteq\mathbb R^{2n}$ and we have: 
\begin{equation}
\begin{aligned}
dy &= \overline{f}(t,y)dt+\overline{g}(t,y)d\overline{w},
\quad (x,v) \notin S \times V,\\
x^+ &= x^-+v^-,
\quad
v^+=v^-,
\quad (x^-, v^-)\in S \times V.
\end{aligned}
\label{eq:aug-stochastic-hybrid-sys}
\tag{A-SHS}
\end{equation}
with $\overline{f}=\begin{bmatrix}
    {f}_x^\top & {f}_v^\top
\end{bmatrix}^\top: \mathbb{R} \times (M \times V) \rightarrow \mathbb{R}^{2n} $ the drift, $ \overline{g}=\begin{bmatrix}
    {g}_x^\top & 
    {g}_v^\top
\end{bmatrix}: \mathbb{R} \times (M \times V) \rightarrow \mathbb{R}^{2n \times 2n} $ the diffusion, and $\overline{w}$ the Wiener process of dimension $2n$. 
The reset map in the augmented state thus can be represented by $R(x^-,v^-)
=
(x^-+v^-,v^-).
$\footnote{It is possible to consider other diffeomorphism for the augmented systems.}
Under the Euclidean addition, $R$ is injective and is a diffeomorphism onto its image, since $(x^+,v^+)$ uniquely determines $(x^-,v^-)=(x^+-v^+,v^+)$. 

 We denote the density of $y$ in \ref{eq:aug-stochastic-hybrid-sys} as $\overline{p}(t,y)=\overline{p}(t,x,v)$. Then we have the $\overline{D}(t,y) 
=
\frac{1}{2}
\overline{g}(t,y)\overline{g}(t,y)^\top
\in\mathbb R^{2n\times 2n}$ and the current of $y$:
\begin{equation}
\label{eq:aug-flux}
    \overline{J}(t,y)=\overline{f}(t,y)\overline{p}(t,y) - \nabla\cdot(\overline{D}(t,x)\overline{p}(t,y))
\end{equation}
Given the normal vector of $S\times V$, i.e, $\overline{n}(x,v)=(n_S(x),0_n)$, we have the outgoing flux density as:
\begin{equation}
\label{eq:aug-guard-flux-density}
\overline{\alpha}(t,x,v)
=
\overline{J}(t,x,v)\cdot \overline{n}(x,v),
\
(x,v)\in S\times V.
\end{equation}
Therefore, the augmented hybrid Fokker--Planck equation is the measure-valued equation
\begin{equation}
\label{eq:aug-hybrid-fp}
\textstyle \frac{\partial \overline{p}(t,y)}{\partial t} = -\nabla \cdot \overline{J}(t,y) +
\int_S\int_{ V}
\Big[\delta(y-\begin{bmatrix}
    x'+v' \\ v'
\end{bmatrix})-\delta(y-\begin{bmatrix}
    x' \\ v'
\end{bmatrix})\Big]
\overline{\alpha}(t,x,v)
dv'\ d\sigma_S(x').
\tag{A-HFP}
\end{equation}
The last term indicates the mass injection and removal by the deterministic reset $(x^+,v^+)=(x^-+v^-,v^-)$. Compared to \eqref{eq:hybrid-fp}, the source and sink terms only involve Dirac functions. Ideally, \eqref{eq:aug-stochastic-hybrid-sys} will distinguish different reset branch by the auxiliary variable $v$, as shown in~\Cref{fig:teaser}. 

\subsection{Embedding \ref{eq:aug-stochastic-hybrid-sys} into Higher-Dimensional Space}

We now leverage the Whitney Embedding Theorem to eliminate the discontinuity of \ref{eq:aug-stochastic-hybrid-sys}. Define the equivalence relation on \(M\times V\) by $        (x,v)\sim (x+v,v),\ (x,v)\in S\times V .$
Let $\pi: M\times V \rightarrow (M\times V)/\sim $ be the quotient map\footnote{The map that identifies all points in each equivalence class as a single point.}. Then we have the following theorem: 

\begin{lemma}
\label{lemma:E4n}
There exists a piecewise smooth function $E: M\times V \rightarrow \mathbb{R}^{m \ge 4n}$,
such that $E(x,v)=E(x+v,v),\ (x,v)\in S\times V, $ and the map $\overline{E}$ defined by $\overline{E}\circ \pi:=E$ is an embedding on $(M\times V)/\sim$. 
\end{lemma}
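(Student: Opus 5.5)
The plan is to reduce the statement to \Cref{thm:hybrifold} and \Cref{theorem:whitney}, applied to the augmented system \ref{eq:aug-stochastic-hybrid-sys}.

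\emph{Checking the gluing hypotheses.} The reset is $R(x,v)=(x+v,v)$ on the guard $S\times V$. It is the restriction of the affine map $(x,v)\mapsto(x+v,v)$, which has unit Jacobian determinant, so $R$ is a diffeomorphism onto its image $R(S\times V)$. This image is an $(2n-1)$-dimensional piecewise smooth subset. I will assume, in the same way the paper assumes the regularity condition of~\citep{simic2005towards} for \ref{eq:automaton}, that:
\begin{itemize}
\item $R(S\times V)\subset\partial(M\times V)$;
\item $R(S\times V)$ is disjoint from $S\times V$, or meets it only in a controlled way;
\item the guard and its image are transversal to the flow in the required sense.
\end{itemize}
Under these hypotheses \Cref{thm:hybrifold}, applied with state space $M\times V$ of dimension $2n$, gives a quotient $Q:=(M\times V)/\sim$. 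This $Q$ is a $2n$-dimensional topological manifold with boundary, and both $Q$ and its boundary are piecewise smooth. The quotient map $\pi$ is continuous and surjective, and it is injective off $S\times V\cup R(S\times V)$.

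\emph{Embedding and composing.} Next I apply \Cref{theorem:whitney} to $Q$. Since the theorem needs a $C^r$ structure, I will first smooth $Q$ along the glued seam. Following~\citep{simic2005towards}, one can choose a collar of $S\times V$ and of its image and glue the two collars by $R$. This puts on $Q$ a $C^1$ structure, not necessarily the canonical one, that agrees with the original smooth structure away from the seam. Whitney then gives an embedding $\overline{E}:Q\to\mathbb{R}^{4n}$, and composing with a linear inclusion $\mathbb{R}^{4n}\hookrightarrow\mathbb{R}^m$ handles any $m\ge 4n$.

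Set $E:=\overline{E}\circ\pi$. For $(x,v)\in S\times V$ we have $\pi(x,v)=\pi(x+v,v)$, so $E(x,v)=E(x+v,v)$. By construction $\overline{E}$ is an embedding of $Q$.

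\emph{Piecewise smoothness of $E$.} On the interior of $M\times V$ minus the seam, the composition $\overline{E}\circ\pi$ is smooth, because there the collar chart agrees with the original chart. Near the seam, the collar charts differ from the original coordinates of $M\times V$ by smooth maps on each side, namely the identity on one side and $R$ on the other. So $E$ is smooth on each closed piece and continuous across the seam, which is what piecewise smooth means here.

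\emph{Main obstacle.} The hard step is the smoothing: producing a $C^1$ structure on $Q$ compatible with the original structure, so that Whitney applies, while keeping $E$ smooth on each piece. The corners where $\partial S$ meets other boundary strata of $M\times V$ add to the difficulty, and so does the case where $V$ itself has boundary. My first approach is to assume the hypotheses of~\citep{simic2005towards} hold verbatim for the augmented system and cite their smoothing construction. If that fails near corners, a fallback is to use the topological (piecewise linear) embedding of $2n$-manifolds into $\mathbb{R}^{4n}$ after triangulating. A second fallback is to embed a small open neighborhood instead, at the cost of only a piecewise smooth $\overline{E}$, which is all the lemma claims.
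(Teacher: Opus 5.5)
Your route is the paper's route. Its proof also takes the $2n$-dimensional quotient from the gluing result behind \Cref{thm:hybrifold}, embeds it in $\mathbb{R}^{4n}$ by \Cref{theorem:whitney}, and sets $E=\overline{E}\circ\pi$. You are more careful than the paper in noticing that Whitney needs a $C^r$ structure on the glued space. The genuine problem is the hypothesis you list first, $R(S\times V)\subset\partial(M\times V)$, because the whole reduction rests on it and it can never hold for this reset. Fix $s\in S$. Since $V$ is $n$-dimensional, $s+\operatorname{int}V$ is a nonempty open subset of $\mathbb{R}^n$. For the reset to be well defined this set must lie in $M$, hence in $\operatorname{int}M$, so $R(S\times\operatorname{int}V)\subset\operatorname{int}M\times\operatorname{int}V=\operatorname{int}(M\times V)$. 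Apart from its edge over $\partial V$, $\Sigma^+$ is therefore an interior hypersurface. This is exactly what happens in the paper's one-dimensional example: there $R(S)=\{x=v+1\}$ is a slanted line through the interior of the $(x,v)$-rectangle, and $\Sigma^\circ$ is defined as the complement of both $\Sigma^-$ and $\Sigma^+$.

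Consequently $Q=(M\times V)/\sim$ is not a manifold. Take a seam point $q$, the class of $p\in\Sigma^-$ and $R(p)\in\Sigma^+$. A small neighborhood of $q$ in $Q$ consists of the two half-balls on either side of $\Sigma^+$ plus the half-ball coming from the guard side of $\Sigma^-$, all glued along one common $(2n-1)$-disc. For this three-page book, $H_{2n}(Q,Q\setminus\{q\})\cong\mathbb{Z}^2$, so $Q$ is not locally Euclidean at $q$, with or without boundary. Hence \Cref{thm:hybrifold} does not apply, there is no collar smoothing to perform, and \Cref{theorem:whitney} cannot be invoked. Your fallbacks do not rescue this: the PL embedding still presupposes a manifold, and embedding a small neighborhood is only local. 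The paper's one-line proof passes over exactly the same point, so this hole is in the shared argument, not one you introduced. Closing it still needs an extra idea. One option is to restrict the augmented space to a region in which $\Sigma^+$ is a boundary face. The paper's explicit map $E(x,v)=(v,1-\cos\theta,\sin\theta)$ tacitly does this, since it is injective modulo the seam only on the strip $1+v\le x\le 1$. Another option is to replace Whitney by an embedding theorem for finite-dimensional stratified spaces, accepting a different dimension bound and needing a separate argument for piecewise smoothness.
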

\begin{proof}
    By \citep{simic2005towards}, the quotient manifold $(M\times V)/\sim$ is an $(n+n)-$dimensional piecewise continuous manifold. By the Whitney Embedding theorem, $(M\times V)/\sim$ can be embedded into $\mathbb{R}^{m}$ whenever $m\ge 2(n+n)=4n$. Let the embedding be $\overline{E}$. Since \(\pi\) identifies \((x,v)\sim(x+v,v)\), the composed map \(E=\overline{E}\circ\pi\) satisfies $E(x,v)=E(x+v,v),\ (x,v)\in S\times V.$
\end{proof}
By \Cref{lemma:E4n}, we can eliminate the reset of \ref{eq:aug-stochastic-hybrid-sys} in the embedded image as shown in~\Cref{fig:teaser}:
\begin{theorem}[Embedding of A-SHS]
\label{thm:glued-manifold-fp}
Let $ z=E(x,v) \in Z := \operatorname{Img} E$, with $E$ the embedding defined in~\Cref{lemma:E4n}. We have the pushforward dynamics of $z$ by a piecewise smooth SDE:
\begin{equation}
\label{eq:glued-sde}
    \begin{aligned}
        dz &= F(t, z)dt + G(t,z)dw,\ \forall z \in Z. 
    \end{aligned}
\end{equation}
Let $D_z(t,z) := \frac{1}{2}G(t,z)G(t,z)^\top$, the density $p_z(t,z)$ is governed by the Fokker--Planck equation:
\begin{equation}
\textstyle
\label{eq:glued-fp}
        \frac{\partial p_z(t,z)}{\partial t}
        =
        -\nabla \cdot
        \left(
        F(t,z)p_z(t,z)
        \right)
        +
        \nabla \cdot \nabla \cdot
        \left(
        D_z(t,z)p_z(t,z)
        \right),
        \
        z\in Z.
\end{equation}
\end{theorem}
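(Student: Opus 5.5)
The plan is to construct $F$ and $G$ by Itô's formula applied to $z=E(y)$ on each smooth piece, and to check that the gluing $E(x,v)=E(x+v,v)$ from \Cref{lemma:E4n} makes the reset invisible in $z$. The drift and diffusion are then well defined on $Z$, and the source and sink terms of \eqref{eq:aug-hybrid-fp} cancel in the pushforward.

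\textbf{Step 1 (local SDE).} On the interior of $M\times V$ away from $S\times V$, $E$ is a smooth embedding. Itô's formula applied to $z=E(y)$ gives
\[
dz=\Big(\partial_yE\,\overline f+\tfrac12\textstyle\sum_{ij}\partial^2_{y_iy_j}E\,(\overline g\,\overline g^\top)_{ij}\Big)dt+\partial_yE\,\overline g\,d\overline w .
\]
Because $\overline E$ is injective on $(M\times V)/\sim$, we can write $y=\overline E^{-1}(z)$ for a chosen representative off the glued set. This defines
\[
F(t,z):=\Big(\partial_yE\,\overline f+\tfrac12\,\partial^2E:\overline g\,\overline g^\top\Big)\circ\overline E^{-1},\qquad G(t,z):=(\partial_yE\,\overline g)\circ\overline E^{-1},
\]
with $w=\overline w$. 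These are piecewise smooth, and the pieces are separated by the image of the seam $E(S\times V)$.

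\textbf{Step 2 (continuity across the reset).} Let $\tau$ be a reset time, so $y(\tau^-)=(x^-,v^-)\in S\times V$ and $y(\tau^+)=(x^-+v^-,v^-)$. By \Cref{lemma:E4n}, $z(\tau^+)=E(x^-+v^-,v^-)=E(x^-,v^-)=z(\tau^-)$. Hence $z$ has continuous sample paths, and the Itô representation of Step 1 holds across $\tau$ with no jump term. The candidate SDE \eqref{eq:glued-sde} therefore has continuous paths on all of $Z$.

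\textbf{Step 3 (Fokker--Planck).} Take a test function $\varphi\in C_c^\infty(\mathbb R^m)$ and set $\psi=\varphi\circ E$. Pairing \eqref{eq:aug-hybrid-fp} with $\psi$ gives
\[
\tfrac{d}{dt}\mathbb E[\varphi(z)]=\int\big(\overline f\cdot\nabla\psi+\overline D:\nabla^2\psi\big)\overline p\,dy+\int_{S\times V}\big[\psi(x'+v',v')-\psi(x',v')\big]\,\overline\alpha\,dv'\,d\sigma_S .
\]
The reset integrand vanishes identically because $\psi(x'+v',v')=\psi(x',v')$, so both the source and the sink disappear. The remaining term equals $\int(F\cdot\nabla\varphi+D_z:\nabla^2\varphi)\,p_z\,dz$, where $p_z=E_\#\overline p$ is the pushforward density. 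Integrating by parts in $z$ then yields \eqref{eq:glued-fp} in the weak sense on $Z$.

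\textbf{Main obstacle.} The hard part is the seam $E(S\times V)$, where $E$ is only piecewise smooth. Two things must hold there. First, the interior boundary fluxes from the two sides of the seam must cancel in the integration by parts, instead of leaving a singular layer. Second, the Itô formula must stay valid even though $\nabla E$ may jump. I would try to handle both by choosing $\overline E$ smooth on a collar of the glued seam, which is possible since the quotient is a $C^1$ manifold once a compatible smooth structure is chosen as in \citep{simic2005towards}. With that choice $\psi$ is $C^2$ across the seam and the flux terms match by construction. If this fails, I would settle for the weak formulation, applying Itô--Tanaka, and argue that the seam has measure zero and that $\overline\alpha$ is exactly the transmitted flux.
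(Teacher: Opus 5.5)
Your proposal follows the paper's proof essentially step for step: It\^{o}'s formula for $z=E(x,v)$ defines $F$ and $G$, the gluing identity $E(x,v)=E(x+v,v)$ removes the jump, and pairing (A-HFP) with $\varphi\circ E$ makes the reset source--sink integrand vanish pointwise. Your correction term $\tfrac12\partial^2E:\overline g\,\overline g^\top$ is the right one, whereas the paper's $\tfrac12\sum_{ij}\overline D_{ij}\partial^2_{y_iy_j}E$ is off by a factor of two because $\overline D=\tfrac12\overline g\,\overline g^\top$.

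The seam regularity you single out as the main obstacle is not treated in the paper at all, so it is an addition on your part rather than a missing step. Note, however, that choosing $\overline E$ smooth on a collar would not by itself make the two-sided terms cancel. Integration by parts also leaves terms $\int\overline p\,n^\top\overline D\nabla\psi$ on $\Sigma^-$ and $\Sigma^+$, and their cancellation requires the density traces there to match. That matching fails for an absorbing guard with nondegenerate normal diffusion.
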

\begin{proof}
We first show the pushforward dynamics of \ref{eq:aug-stochastic-hybrid-sys}. By $E(x,v)=E(x+v,v), (x,v)\in S\times V$, the discrete reset vanishes at the guard surface $S\times V$ by:
\begin{equation}
            z^+= E(x^+,v^+) = E(x^-+v^-,v^-) = E(x^-,v^-)=z^-,\ \forall (x^-,v^-) \in S\times V.
\end{equation}
Then we have the pushforward dynamics by the Taylor expansion of It\^{o} process~\citep{oksendal2003stochastic}:
\begin{equation}
\textstyle
    F(t,z):= \frac{\partial E}{\partial y}\overline{f}(t,x,v) + \frac{1}{2}\sum\nolimits_{i,j=1}^{2n} \overline{D}_{ij}(t,x,v)\frac{\partial^2 E}{\partial y_i \partial y_j},\quad \ G(t,z):=\frac{\partial E}{\partial y}\overline{g}(t,x,v).
\end{equation}
We note that $z$ is an embedding of $(x,v)$ that is injective and has a unique inverse. Therefore, $F$ and $G$ can be uniquely represented by $t$ and $z$. Thus we have \eqref{eq:glued-sde} as a function of $z$.

It remains to check the reset contribution. For the reset source term, integrating against a smooth test function on compact support, i.e.,  $\phi(z) :Z\rightarrow \mathbb{R}, \phi(z) \in C_c^\infty(Z)$ yields:
\begin{equation}
\textstyle
    \int_{M \times V} \phi(E(y')) \left( \int_S\int_V \delta\left(y'-(x+v,v)\right) \overline{\alpha}(t,x,v) \,dv\,d\sigma_S(x) \right) dy'.
\end{equation}
By the property of the Dirac measure, we move the test function to nested integration to reduce the above term to $    \int_S\int_{ V} \phi\left(E(s+v,v)\right) \overline{\alpha}(t,s,v) \,dv\,d\sigma_S(s)$.
For any smooth test function $\phi: Z \rightarrow \mathbb{R}$, the net reset source-sink contribution to the forward equation in the embedded image is $I_R[\phi]=\int_S\int_{ V}
\Big(
\phi\left(E(x+v,v)\right)
-
\phi\left(E(x,v)\right)
\Big)
\overline{\alpha}(t,x,v)dvd\sigma_S(x).
$

As $ E(x+v,v) = E(x,v), \forall (x,v) \in S\times V$, this integrand vanishes point-wise, yielding $I_R[\phi]=0$. Therefore, the pushed-forward dynamics on \(z\) can be governed entirely by \eqref{eq:glued-fp}.
\end{proof}


\subsection{A Construction of \ref{eq:aug-stochastic-hybrid-sys}}
\label{sec:cons-aug-shs}
\begin{figure}
    \centering
    \includegraphics[width=0.3\linewidth]{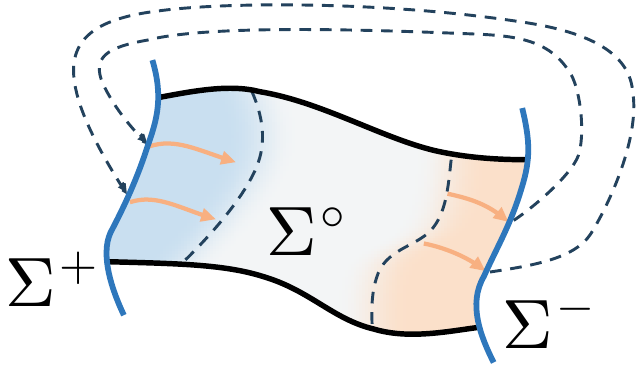}
    \caption{Target distribution. The dashed lines indicate the boundary of the partition of unity. }
    \label{fig:construction}
\end{figure}
\ref{eq:aug-stochastic-hybrid-sys} 
provides a pathway to replace the discrete reset with a continuous SDE. Now we construct such an \ref{eq:aug-stochastic-hybrid-sys} to approximate \ref{eq:stochastic-hybrid-sys}. 
We define a residual reset kernel $Q:S\times\mathcal B( V)\to[0,1]$:
\begin{equation}
\label{eq:residual-kernel}
K(x^-,A)
=
Q\left(
x^-,
\{v\in V\ |\ x^-+v\in A\}
\right),
\ A\in\mathcal B(M).
\end{equation}
Equivalently, if $v^-\sim Q(x^-,dv)$, then $x^+=x^-+v^-\sim K(x^-,dx^+)$. Similar to $\kappa(x^-, x^+)$, we define the density of $Q$ by $\kappa_Q(x^-,v)$. Now we show that we can construct a target distribution whose marginal density in the $x-$direction is equivalent to the solution of \ref{eq:stochastic-hybrid-sys}.

\begin{construction}[Target Density]
\label{cons:partition-mixing-ashs}
Define the pre-reset set $\Sigma^-:=S\times V$, its post-reset image $\Sigma^+$, and the interior
$\Sigma^\circ:=(M\times V)\setminus\overline{(\Sigma^-\cup\Sigma^+)}$. 
Let $\chi^-(x)+\chi^+(x)+\chi^\circ(x)=1$ be a smooth partition of unity\footnote{Weight functions that allows local constructions to be blended smoothly into a global one. In \Cref{cons:partition-mixing-ashs}, $v$ has the identical distribution as \eqref{eq:residual-kernel} on the guard surface and the pushforward on its image. } associated with $\Sigma^-$, $\Sigma^+$, and $\Sigma^\circ$, and prescribe the conditional density of $v$:
\begin{equation}
p^\star(v\mid x)
=
\chi^-(x)\kappa_Q(x,v)
+
\chi^+(x)(R_\#\kappa_Q)(x,v)
+
\chi^\circ(x)\mathcal N(v;\mu, \delta^2I).
\end{equation}
where $R_\# \kappa_Q$ is the pushforward density of $\kappa_Q$ under the reset $R$. Let the solution of \ref{eq:hybrid-fp} be $p^\star(t,x)$, we have the target density that preserves the marginal density of $x$ for some $T < \infty$:
\begin{equation}
    \overline{p}^\star(t,x,v)=p^\star(t,x)p^\star(v|x),\ t \in [0, T], \tag{Target Density}
\end{equation}
and the target density $J^\star:=[J^{\star\top}_x,\ J^{\star\top}_v]^\top$ given
by~\eqref{eq:aug-flux}. The target distribution is shown in~\Cref{fig:construction}. 
\end{construction}


Now we design \ref{eq:aug-stochastic-hybrid-sys} whose evolution of density will approximate $\overline{p}^\star(t,x,v)$. 
\begin{construction}[Approximate \ref{eq:aug-stochastic-hybrid-sys}]
\label{cons:approximate-ashs}
Choose ${f}_x=f$ and ${g}_{x}=g$ to make the $x$-dynamics equivalent to \ref{eq:stochastic-hybrid-sys} in the interior. Given the $x$-current $J_x^\star= f_x\overline{p}^\star-\nabla_x\cdot({D}_{xx}\overline{p}^\star) - \nabla_v\cdot({D}_{xv}\overline{p}^\star)$, we solve for ${D}_{xv}$. 
For any $\eta>0$, assume $D$ invertible, choose
${D}_{vv}
=
{D}_{xv}^\top D^{-1}{D}_{xv}
+
\eta I,
$
and solve 
$\frac{\partial p^\star}{\partial t} = -\nabla \cdot J^\star$ 
for the $v$-dynamics. The resulting~\ref{eq:aug-stochastic-hybrid-sys} dynamics are given by:
\begin{equation}
\textstyle
\overline{f}
=
\begin{bmatrix}
f \\
\frac{
 J_v^\star 
+
\nabla_x\cdot
\left(
{D}_{xv}^\top \overline p^\star
\right)
+
\nabla_v\cdot
\left(
{D}_{vv} \overline p^\star
\right)
}{
\overline p^\star
}
\end{bmatrix},
\quad
\overline{D}
=
\frac{1}{2}\overline{g}\overline{g}^\top,
\quad
\overline{g}
:=
\begin{bmatrix}
g & 0 \\
g_{xv} & g_v
\end{bmatrix}.
\end{equation}
Here, $\overline{g}$ is obtained from $\overline{D}$ by choosing
$g_{xv}={D}_{xv}^\top D^{-1}g$ and
$g_v=\sqrt{2\eta}I$.
Together with the deterministic reset $R(x,v)=(x+v,v)$, these coefficients define an approximate \ref{eq:aug-stochastic-hybrid-sys} whose target density is
$\overline p^\star(t,x,v)$. The construction of ${D}_{xv}$ is deferred to Appendix~\ref{appx:a-shs-cons}.
\end{construction}
\begin{remark}[Interior mixing and exponential forgetting]
In the interior where $\chi^\circ=1$, we have
$p^\star(v\mid x)=\mathcal N(v;\mu,\delta^2I)$,
${D}_{xv}=0$, and $\overline J_v^\star=0$. Hence, we can choose ${f}_v=-\lambda(v-\mu),
\
{D}_{vv}=\lambda\delta^2I,
$
and the $v-$dynamics reduce to the Ornstein--Uhlenbeck (OU) process, i.e., $$dv
=
-\lambda(v-\mu)\,dt
+
\sqrt{2\lambda}\,\delta\,dw_v.$$
If the trajectory remains in the interior for $\tau$, then $v(t+\tau)
=
\mu
+
e^{-\lambda\tau}(v(t)-\mu)
+
\delta\sqrt{1-e^{-2\lambda\tau}}\,\xi,
\
\xi\sim\mathcal N(0,I).
$ Consequently, the dependence of the receding reset $v(t)$ is forgotten exponentially at rate $\lambda$ and converges to Gaussian.
\end{remark}
As shown in~\Cref{fig:construction}, \Cref{cons:approximate-ashs} first generates the desired density on $\Sigma^-$ and then mixes it to a Gaussian in $\Sigma^\circ$ before the next reset. The details of both constructions are referred in Appendix~\ref{appx:a-shs-cons}.

\vspace{-3mm}
\section{Main Algorithms}
\label{sec:alg}

\vspace{-2mm}
\begin{wrapfigure}{R}{0.4\textwidth}
\begin{minipage}{\linewidth}
\vspace{-10mm}
\begin{algorithm}[H]
\caption{} 
\label{alg:main}
\begin{algorithmic}
\footnotesize
\Require Dataset $\mathcal X$; sampler schedule $\{\beta_\ell\}_{\ell=1}^L$; batch size $B$; rollouts per initial state $M$; training steps $J$; learning rate $\eta$.

\For{$j=1,\ldots,J$}
    \State Sample $\{x_{0:T}^{(k)}\}_{k=1}^B$ from $\mathcal X$ at $t_{0:T}$
    \State $z_i^{(k,r)}\sim p_\theta(\cdot\mid x_i^{(k)})$ \Comment{Encode}
    \State $\hat z_0^{(k,r)}\gets z_0^{(k,r)}$ \Comment{Initialize}
    \State $\hat z_{0:T}^{(k,r)}\xleftarrow{\eqref{eq:SDE-EM}}\hat z_0^{(k,r)}$ \Comment{Rollout}
    \State $\hat x_i^{(k,r)}\gets D_\theta(\hat z_i^{(k,r)})$ \Comment{Decode}
    \State Compute $\mathcal L(\theta)$ using~\eqref{eq:training-loss} \Comment{Loss}
    \State $\theta\gets\theta-\eta\nabla_\theta\mathcal L(\theta)$ \Comment{Update}
\EndFor
\State \Return $p_\theta(z\mid x)$, $F_\theta$, $G_\theta$, and $D_\theta$
\end{algorithmic}
\end{algorithm}
\end{minipage}
\vspace{-4mm}
\end{wrapfigure}
Building on \Cref{thm:glued-manifold-fp}, we learn~\eqref{eq:stochastic-hybrid-sys} using a stochastic encoder, a latent SDE, and a deterministic decoder.
For clarity, we describe these components using a single observed trajectory $\mathrm{x}=x_{0:T}$ at observation times $t_{0:T}$ and a single predicted trajectory $\hat{\mathrm{x}}=\hat x_{0:T}$.

\textbf{Stochastic Encoder:} The reset kernel $x^+ \sim \kappa(x^-,x^+)$ assigns a distribution of post-reset states to each state $x^-$, inducing a distribution of reset displacements $v=x^+-x^-$.
By \Cref{thm:glued-manifold-fp}, the augmented state $(x,v)$ can be represented by a latent state $z=E(x,v)$ by \Cref{lemma:E4n}.
We therefore model the encoder as $z\sim p_\theta(z\mid x)$ and sample $z_i\sim p_\theta(z\mid x_i)$ at $t_i$.

In practice, $p_\theta(z\mid x)$ is represented by a conditional diffusion sampler.
Starting from $\zeta_L\sim\mathcal N(0,I)$, we apply $L$ differentiable DDIM~\citep{song2021denoising} updates conditioned on $x$:
$\zeta_{\ell-1}
=
\sqrt{\bar\alpha_{\ell-1}}
\frac{\zeta_\ell-\sqrt{1-\bar\alpha_\ell}\,\epsilon_\theta(\zeta_\ell,\ell,x)}
{\sqrt{\bar\alpha_\ell}}
+
\sqrt{1-\bar\alpha_{\ell-1}}\,\epsilon_\theta(\zeta_\ell,\ell,x),
\ \ell=L,\ldots,1,$
where $\bar\alpha_\ell=\prod_{s=1}^{\ell}(1-\beta_s)$ and $\bar\alpha_0=1$.
The output is $z=\zeta_0$, and the noise predictor $\epsilon_\theta$ is trained end-to-end without a separate denoising loss.

\textbf{Latent SDE:} Let $d_z=\dim Z$.
We represent the latent drift and diffusion by $F_\theta:\mathbb R\times Z\rightarrow\mathbb R^{d_z}$ and $G_\theta:\mathbb R\times Z\rightarrow\mathbb R^{d_z\times 2n}$.
Given $\hat z_0=z_0$, we generate $\hat{\mathrm{z}}=\hat z_{0:T}$ using the Euler--Maruyama scheme:
\begin{equation}
\label{eq:SDE-EM}
    \hat z_{i+1}=\hat z_i+F_\theta(t_i,\hat z_i)\Delta t_i+G_\theta(t_i,\hat z_i)\Delta w_i,
    \ \Delta w_i\sim\mathcal N(0,\Delta t_i I_{2n}),
\end{equation}
where $\Delta t_i=t_{i+1}-t_i$, $i=0,\ldots,T-1$, and the Brownian increments are independent.

\textbf{Deterministic Decoder:} The inverse $E^{-1}$ recovers an equivalence class in $(M\times V)/\sim$.
Choosing its post-reset representative at the glued seam defines a single-valued state decoder, which we approximate by $D_\theta:Z\rightarrow M$.
The underlying decoder can be discontinuous at the seam: its one-sided limits recover $x^-$ and $x^+$ while the latent trajectory remains continuous.
We obtain the predicted trajectory $\hat{\mathrm{x}}=\hat x_{0:T}$ by applying $\hat x_i=D_\theta(\hat z_i)$ at each observation time.

\textbf{Cost Function Design:}
At time $t_i$, let $p_{u,i}$ and $\hat p_{u,i}$ denote the data and predicted distributions for $u\in\{x,z\}$, with $p_{z,i}$ induced by encoding $x_i$.
Let $p_{\mathrm{x}}$ denote the data path distribution and $\hat p_{\mathrm{x}}$ the predicted path distribution when $x_0$ is drawn from its data distribution.
For two distributions $p_A$ and $p_B$ with finite first moments, their squared energy distance is denoted by $\mathcal E^2(p_A,p_B)$\footnote{$\mathcal E^2(p_A,p_B)=2\mathbb E\|a-b\|_2-\mathbb E\|a-a'\|_2-\mathbb E\|b-b'\|_2$, where $a,a'\sim p_A$ and $b,b'\sim p_B$ are mutually independent.}.

Inspired by \Cref{thm:glued-manifold-fp}, we define the \emph{state distribution loss}
$\mathcal L_u
=\frac{1}{T+1}\sum_{i=0}^T
\mathcal E^2(p_{u,i},\hat p_{u,i})$,
$u\in\{x,z\}$, to match the distribution evolution at each observation time.
We also define the \emph{unconditional path distribution loss}
$\mathcal L_{\mathrm{x}}^{\mathrm{traj}}
=\mathcal E^2(p_{\mathrm{x}},\hat p_{\mathrm{x}})$
to match the joint distribution across observation times.
For the observed trajectory $\mathrm{x}$ with initial state $x_0$, the \emph{conditional path distribution loss} is
$\mathcal L_{\mathrm{x}}^{\mathrm{traj,c}}
=\frac{1}{2\sqrt{(T+1)d_x}}
\mathcal E^2(\delta_{\mathrm{x}},\hat p_{\mathrm{x}\mid x_0})$,
where $d_x$ is the state dimension and $\hat p_{\mathrm{x}\mid x_0}$ is the conditional path distribution induced by the stochastic encoder, latent SDE, and decoder.

With $w$ denoting the weights and the loss $\mathcal L_g$\footnote{$\mathcal L_g=\frac{1}{T+1}\sum_{i=0}^T\|G_\theta(t_i,\hat z_i)\|_F^2$.} to regularize diffusion, the overall objective is
\begin{equation}
\textstyle
    \label{eq:training-loss}
    \mathcal L(\theta)
    =
    w_z\mathcal L_z
    +
    w_x\mathcal L_x
    +
    w_{\mathrm{x}}^{\mathrm{traj}}\mathcal L_{\mathrm{x}}^{\mathrm{traj}}
    +
    w_{\mathrm{x}}^{\mathrm{traj,c}}\mathcal L_{\mathrm{x}}^{\mathrm{traj,c}}
    +
    w_g\mathcal L_g.
\end{equation}
For multiple observed and independently generated trajectories, we estimate the distributional losses from empirical samples and average the conditional loss and diffusion regularizer over the corresponding trajectories.
We summarize the training scheme in \Cref{alg:main}, where we use the superscript $(k,r)$ to denote the $r$-th encoder sample corresponding to the $k$-th data trajectory, with batch size $B$ and $M$ encoder samples per state.
\section{Numerical Experiments}
\label{sec:exp}

\begin{figure}
    \centering
    \includegraphics[
        width=\linewidth,
        height=0.32\textheight,
        keepaspectratio
    ]{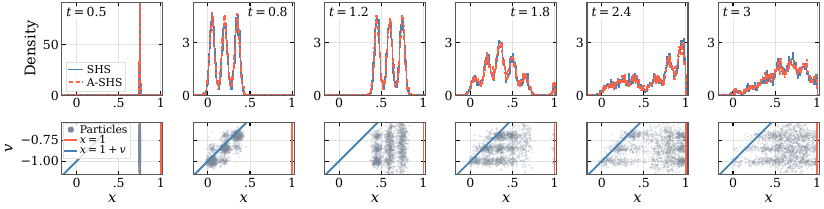}
    \caption{The density evolution of A-SHS for \eqref{eq:shs}. We find that the marginal density of A-SHS in $x$ matches that of \eqref{eq:shs}. As shown in the $x-v$ plot, the density is mixed by the OU process first and then redistributed to the GMM to approximate the reset by Construction~\ref{cons:partition-mixing-ashs} and~\ref{cons:approximate-ashs}.}
    \label{fig:xvpcl}
    \vspace{3mm}
    \includegraphics[
        width=\linewidth,
        height=0.32\textheight,
        keepaspectratio
    ]{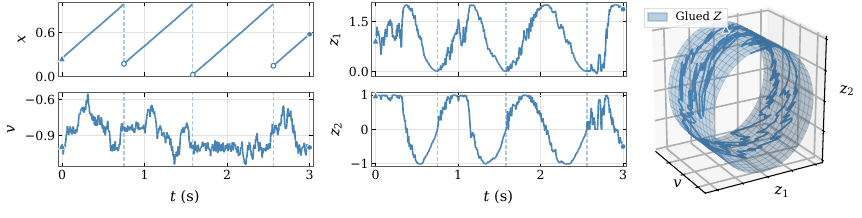}
    \caption{The time evolution of the A-SHS and the embedded trajectory in 3D space. We can see that $x$ is reset to a different state after it hits the guard at $x=1$. The $v$ replaces the role of the reset function with continuous paths. The augmented path can be embedded into a higher-dimensional space with variable $z$ to mitigate the discontinuity.}
    \label{fig:xvtraj}
\end{figure}
\subsection{Construction of A-SHS}
We illustrate the construction in~\Cref{sec:cons-aug-shs} using a $1-\dim$ hybrid system defined on $(0, 1]$ with constant drift and a Gaussian mixture reset with three peaks $\mathcal{C}:=\{0, 0.15, 0.3\}$:
\begin{equation}
\label{eq:shs}
\textstyle
dx = dt + \sigma\,dw,\quad x<1,\qquad
x^+ \sim \frac{1}{3}\sum_{x_c\in\mathcal{C}}
\operatorname{TN}_{(-\infty,1)}(x_c,\eta^2),
\quad x^-=1.
\end{equation}
Here, $\operatorname{TN}_{(-\infty,1)}(x_c,\eta^2)$ is the truncated normal distribution that denotes
$\mathcal{N}(x_c,\eta^2)$ conditioned on $x<1$, ensuring that $x^+ < 1$ almost surely.
In the extended state space $X \times V$, the guard surface and its reset image are given by $S:=\{(x,v)\mid x=1,\ v\in V\}$ and $R(S)=\{(x,v)\mid x=v+1\}$, respectively. We use Monte Carlo simulations to estimate the density evolution of~\eqref{eq:shs} and use the rollout to construct a time-varying A-SHS. As shown in~\Cref{fig:xvpcl}, the partitioned OU process enables the A-SHS to closely reproduce the density evolution of ~\eqref{eq:shs}. To verify \Cref{thm:glued-manifold-fp}, we consider the $3-\dim$ embedding $ E(x,v)=\bigl(v,1-\cos\theta,\sin\theta\bigr),
$ with $\theta=2\pi\frac{x-1-v}{-v}$. As illustrated in~\Cref{fig:xvtraj}, \eqref{eq:shs} resets $x^-=1$ to different $x^+ \in \mathcal{C}$, while the A-SHS has deterministic resets on $X\times V$ and can be embedded into a higher-dimensional space with continuous sample paths.

\subsection{Learning Stochastic Hybrid Systems}
We now apply \Cref{alg:main} to learn SHS from time-series data. We consider some classic examples in hybrid system theory with different topologies, including the bouncing ball, the Klein bottle, and the torus systems. For a deterministic system, the Torus system can be embedded into 3D Euclidean space, while the Klein bottle system can only be embedded into 4D space. We consider these systems with different reset kernels, such as a GMM distribution or a uniform distribution. We also consider a switching system that randomly chooses the reset kernel of the deterministic torus system or the Klein bottle system. Then we consider the $k$-bouncing ball system in the $x-y$ plane with dimension equal to $4k$ (velocity and position in horizontal and vertical directions). We consider $0.32s$ training horizon. All comparisons and ablations are evaluated on the same normalized test data with horizon $1s$, $3s$ or $5s$. 
By default, we consider the latent dimension of our method to be $d_z = 2(d_x + d_x)$ as inspired by \Cref{lemma:E4n}. We present the result with horizon $1s$, and the remaining cases are shown in Appendix~\ref{appx:baselines} and Appendix~\ref{appx:ablations}. We note that the baselines exhibit worse results with a horizon longer than $1s$, while our method has consistent accuracy.    

\textbf{Comparison with baselines:}
We compare our method with \citep{teng2026embedding}, a latent ODE framework for \emph{deterministic} hybrid system learning; \citep{li2020scalable} that learns latent SDEs through variational inference; \citep{bartosh2025sde} that solves a simulation-free variational objective; and \citep{kiyohara2025neural} that learns conditional transition kernels with Chapman--Kolmogorov consistency. We also evaluate \citep{li2020scalable} augmented with additional losses, i.e., $\mathcal L_x$ and $\mathcal L_{\mathrm{x}}^{\mathrm{traj}}$.

As shown in \Cref{tab:baseline-mean-1s}, our method achieves the lowest mean conditional and unconditional path distribution losses across all 11 benchmark settings. The relative performance of the competing approaches varies across datasets and metrics. Adding $\mathcal{L}_x$ and $\mathcal{L}_{\mathrm x}^{\mathrm{traj}}$ to \citep{li2020scalable} improves performance on some datasets, but does not consistently improve the results or close the gap to our method. 
\citep{kiyohara2025neural} and \citep{bartosh2025sde} exhibit larger errors or divergence under the evaluated configurations, with particularly large errors in the unconditional path distribution loss. As \citep{teng2026embedding} omits the stochastic effect, it triggers the divergence criterion on most cases. The predicted distribution evolution of the bouncing ball system is illustrated in \Cref{fig:bb-gmm}, showing that our method correctly captures the distribution evolution. As shown in \Cref{fig:bb-gmm-rollout}, our method correctly captures the path distribution by predicting bounces with different restitution coefficients. More visualizations and comparisons are illustrated in Appendix~\ref{appx:baselines} and~\ref{appx:visualizations}. 
\begin{figure}
    \centering
    \includegraphics[width=1\linewidth]{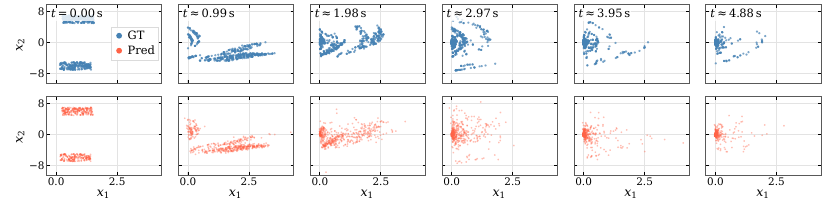}
    \caption{The predicted probability evolution of bouncing ball systems under GMM reset. Our method correctly captures the long-term distribution with a training horizon of $0.32s$. }
    \label{fig:bb-gmm}
    \vspace{-3mm}
\end{figure}
\begin{figure}
    \centering
    \includegraphics[width=1\linewidth]{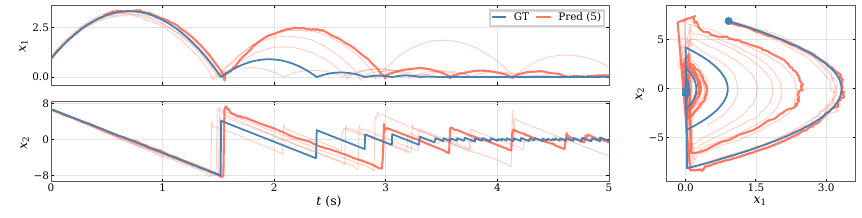}
    \caption{We plotted five predicted trajectories of the bouncing ball system under GMM reset conditioned on the same initial condition. The reset function independently chooses restitution by a GMM centered at $0.5$ or $0.9$ at each collision. Our method correctly captures this pattern.}
    \label{fig:bb-gmm-rollout}
    \vspace{-4mm}
\end{figure}

\TableBaselineMeanHorizonOneSecond
\TableAblationSummaryHorizonOneSecond

\textbf{Ablations:} We study the effects of 1) latent dimension, 2) loss function design, and 3) encoder randomness.
\Cref{tab:ablation-summary-1s} summarizes the changes in conditional and unconditional path distribution losses relative to the default model.
Reducing the latent dimension generally degrades performance, although $d_z=3d_x$ performs comparably to the default $d_z=4d_x$ at the short horizon.
At longer horizons, smaller latent dimensions can lead to substantial degradation or divergence, including degradation for $d_z=3d_x$ on some benchmarks, as shown in \Cref{tab:ablation-summary-3s,tab:ablation-summary-full}.
Thus, the longer-horizon results reinforce the benefit of a larger latent space and indicate that comparable short-horizon performance does not ensure comparable long-horizon behavior.

For loss function design, removing either path distribution loss alone has a relatively modest effect on average, whereas removing both consistently degrades both metrics.
Removing $\mathcal L_z$ also consistently degrades performance, with larger effects at longer horizons and divergence on one benchmark over the full trajectory.
These trends are consistent across horizons, as reported in \Cref{tab:ablation-summary-3s,tab:ablation-summary-full}.
The benefit of latent-space supervision is consistent with the findings of \citet{teng2026embedding}.

Finally, replacing the stochastic encoder with a deterministic MLP moderately degrades performance on average, particularly for the unconditional path distribution loss.
This trend persists at longer horizons, as shown in \Cref{tab:ablation-split-sampler-conditional-energy-loss-1s,tab:ablation-split-sampler-conditional-energy-loss-3s,tab:ablation-split-sampler-conditional-energy-loss-full}.
These results support the benefit of encoder randomness, consistent with the one-to-many mapping from state variables to latent variables.
However, the moderate degradation suggests that a deterministic encoder remains viable, as the latent SDE~\eqref{eq:SDE-EM} can also provide randomness. The details of the ablations and the cases on longer horizons are shown in Appendix~\ref{appx:ablations}.

\vspace{-6pt}
\section{Discussions}
\vspace{-6pt}

\textbf{Latent Dimension}: 
Whitney Embedding Theorem provides a general dimension bound to embed the $2n$-dimensional glued manifold $(X\times V)/\sim$ into $\mathbb{R}^{4n}$. The structure of the reset $(x^+,v^+)=(x^-+v^-,v^-)$ may improve this bound: for fixed $v$, identifying $x\sim x+v$ on the guard $S$ yields an $n$-dimensional space, embeddable into $\mathbb{R}^{2n}$. If these embeddings vary smoothly with $v$ and form a global embedding, retaining the $n$ coordinates of $v$ can give a $3n$ bound. Whether the $v$-wise manifolds vary smoothly remains open for future research.

\textbf{Loss Functions:} 
As each sampled path is stochastic, we argue that there is no need to match a specific path. As discussed in~\cite{li2020scalable}, overfitting to a single path will result in the diffusion term converging to zero. In our method, we instead match the path distribution, considering the latent SDE as a sampler to generate the paths. As indicated in the comparative study, the loss function enables us to accurately recover the path distributions beyond the training horizon. 

\textbf{Memory of Latent Variables:} 
Latent variables can encode both reset events and past-state memory. When the closures of the guard and its images are disjoint, pure OU dynamics in an interior region allow arbitrarily fast exponential memory decay by increasing the mean-reversion parameter. However, this only approximately erases memory, without guaranteeing complete removal in finite time. Fully recovering the original system's Markov property remains a direction for future research. 

\textbf{Future Applications:} The proposed methods can be applied to robotics control, such as ~\citep{teng2022input, teng2024convex, 10301632, teng2021toward, liu2025discrete, teng2024generalized, Teng-RSS-23, ghaffari2022progress, teng2022lie, teng2022error, teng2021legged, yu2023fully, he2024legged, teng2024gmkf, he2024legged,chang2026survey,teng2026max,liu2025ego, he2025invariant, iwasaki2025learning, liu2026mepoly, dong2025online, li2026stein}

\vspace{-6pt}
\section{Conclusion}
\vspace{-6pt}
In this work, we show that a continuous latent SDE can accurately approximate a stochastic hybrid system with a Markov reset kernel. The key to this result is to represent different reset branches of the reset kernel by auxiliary variables, so that the augmented hybrid system admits a deterministic reset function that enables the topological gluing. In the embedded image, the glued system admits continuous sample paths, and its state distribution can be modeled by a Fokker-Planck equation without the source and sink terms due to the resets. Based on this theorem, we can accurately learn the flow of hybrid systems from time-series data.  



\section*{Reproducibility Statement}
We provide the assumptions and derivations underlying our theoretical results in the main text, with additional details and complete derivations in the appendix. The learning procedure and objective are described in Section \ref{sec:alg}, and the experimental settings, evaluation protocols, and benchmark construction are documented in Section \ref{sec:exp} and the appendix. These details are intended to facilitate reproduction of both our theoretical and empirical results. We plan to release the implementation and experiment code upon publication.

\section*{AI Disclosure}
In this work, we used generative AI tools to assist with mathematical proofs, revise and execute code for method implementation and experiments, and improve the clarity and readability of the manuscript. We did not use generative AI tools for other tasks requiring disclosure under the conference policy. The authors carefully reviewed and verified all AI-assisted proofs for mathematical correctness, reviewed and tested AI-assisted code, and checked the resulting experimental outputs. All AI-assisted writing was reviewed and revised by the authors. We take full responsibility for the final content of this work, including all text, mathematical claims, proofs, code, and experimental results produced with the assistance of generative AI.

\bibliography{iclr2025_conference}
\bibliographystyle{iclr2025_conference}



\appendix
\section*{Appendix}
\section{A Time-Dependent Construction of the Augmented Dynamics}
\label{appx:a-shs-cons}
We now give the details of the construction in Section~\ref{sec:cons-aug-shs}. Let $p^\star(t,x)$ be the solution of~\ref{eq:hybrid-fp}, and write the augmented drift and diffusion tensor as
\begin{equation}
\label{eq:constructed-aug-diffusion-matrix}
\overline{f}(t,y)
=
\begin{bmatrix}
f_x(t,x,v) \\
f_v(t,x,v)
\end{bmatrix},
\quad
\overline{D}(t,y)
=
\begin{bmatrix}
D_{xx}(t,x,v) & D_{xv}(t,x,v) \\
D_{xv}(t,x,v)^\top & D_{vv}(t,x,v)
\end{bmatrix}.
\end{equation}
Here, $y=[x^\top,\ v^\top]^\top\in M\times V\subseteq\mathbb R^{2n}$, each block of $\overline{D}$ is in $\mathbb R^{n\times n}$, and $\overline{D}=\frac{1}{2}\overline{g}\overline{g}^\top$. We use $J(t,x)$ and $\alpha(t,s)$ for the probability current and guard flux associated with $p^\star(t,x)$:
\begin{equation}
J(t,x)
=
f(t,x)p^\star(t,x)
-\nabla_x\cdot\left(D(t,x)p^\star(t,x)\right),
\quad
\alpha(t,s)=J(t,s)\cdot n_S(s),\ s\in S.
\end{equation}
We assume the densities and coefficients have the regularity required below, and that $D(t,x)$ is positive definite on the region of construction.

\subsection{Construction of the Conditional Density}

We use the conditional density in Construction~\ref{cons:partition-mixing-ashs}. Recall the pre-reset set $\Sigma^-:=S\times V$, its post-reset image $\Sigma^+:=R(\Sigma^-)$, and the interior $\Sigma^\circ:=(M\times V)\setminus\overline{(\Sigma^-\cup\Sigma^+)}$. With the smooth partition of unity $\chi^-(x)+\chi^+(x)+\chi^\circ(x)=1$, the target conditional density is
\begin{equation}
\label{eq:conditional-density-extension}
p^\star(v\mid x)
=
\chi^-(x)\kappa_Q(x,v)
+\chi^+(x)(R_\#\kappa_Q)(x,v)
+\chi^\circ(x)\mathcal N(v;\mu,\delta^2I).
\end{equation}
The local densities are understood as smooth normalized extensions on their corresponding neighborhoods. We assume these extensions and the partition of unity are compatible with the prescribed guard trace. Since the weights are nonnegative, independent of $v$, and sum to one, we have $\int_V p^\star(v\mid x)dv=1$. In particular,
\begin{equation}
\label{eq:conditional-density-guard-trace}
p^\star(v\mid s)=\kappa_Q(s,v),\quad (s,v)\in\Sigma^-.
\end{equation}
In the interior where $\chi^\circ=1$, the conditional density reduces to $p^\star(v\mid x)=\mathcal N(v;\mu,\delta^2I)$. The Gaussian is normalized on $V=\mathbb R^n$; for a bounded auxiliary domain, it is replaced by its normalized restriction to $V$.

\subsection{Construction of the Target Augmented Density}

Define the target augmented density by
\begin{equation}
\label{eq:constructed-aug-density}
\overline{p}^\star(t,y)
=
\overline{p}^\star(t,x,v)
=
p^\star(t,x)p^\star(v\mid x),\quad t\in[0,T].
\end{equation}
This factorization preserves the prescribed $x$-marginal:
\begin{equation}
\int_V\overline{p}^\star(t,x,v)dv
=
p^\star(t,x)\int_Vp^\star(v\mid x)dv
=
p^\star(t,x).
\end{equation}
We assume $\overline{p}^\star(t,x,v)>0$ on the region where the coefficients below are constructed, so division by $\overline{p}^\star$ is well-defined.

\subsection{Construction of the $x$-Direction Probability Current}

Following Construction~\ref{cons:partition-mixing-ashs}, let $J^\star=[J_x^{\star\top},\ J_v^{\star\top}]^\top$ denote the target augmented probability current. We prescribe its $x$-component by
\begin{equation}
\label{eq:constructed-x-current}
J_x^\star(t,x,v)=p^\star(v\mid x)J(t,x).
\end{equation}
Integrating over $v$ gives $\int_VJ_x^\star(t,x,v)dv=J(t,x)$. On the augmented guard,~\eqref{eq:conditional-density-guard-trace} yields
\begin{equation}
\label{eq:constructed-guard-flux}
\begin{aligned}
\overline{\alpha}(t,s,v)
&=J_x^\star(t,s,v)\cdot n_S(s) \\
&=\kappa_Q(s,v)J(t,s)\cdot n_S(s)
=\alpha(t,s)\kappa_Q(s,v).
\end{aligned}
\end{equation}
Since the augmented normal is $\overline{n}(s,v)=(n_S(s),0_n)$, this guard flux is independent of $J_v^\star$.

\subsection{Construction of the $x$-Dynamics}

To preserve the continuous $x$-dynamics of~\ref{eq:stochastic-hybrid-sys}, choose $f_x(t,x,v)=f(t,x)$ and $D_{xx}(t,x,v)=D(t,x)$. The cross-diffusion matrix $D_{xv}(t,x,v)\in\mathbb R^{n\times n}$ is chosen so that the resulting $x$-current equals~\eqref{eq:constructed-x-current}:
\begin{equation}
\label{eq:cross-diffusion-equation}
\begin{aligned}
\nabla_v\cdot\left(D_{xv}(t,x,v)\overline{p}^\star(t,x,v)\right)
=f(t,x)\overline{p}^\star(t,x,v)
-\nabla_x\cdot\left(D(t,x)\overline{p}^\star(t,x,v)\right)
-J_x^\star(t,x,v).
\end{aligned}
\end{equation}
The right-hand side has zero integral over $V$. Indeed,
\begin{equation}
\label{eq:x-flow}
\begin{aligned}
&\int_V\left[
f(t,x)\overline{p}^\star(t,x,v)
-\nabla_x\cdot\left(D(t,x)\overline{p}^\star(t,x,v)\right)
-J_x^\star(t,x,v)
\right]dv \\
&\quad =f(t,x)p^\star(t,x)
-\nabla_x\cdot\left(D(t,x)p^\star(t,x)\right)
-J(t,x)=0.
\end{aligned}
\end{equation}
For each $i=1,\ldots,n$, define the component-wise equation
\begin{equation}
\label{eq:cross-diffusion-residual}
r_i(t,x,v)
=
f_i(t,x)\overline{p}^\star(t,x,v)
-\sum\nolimits_{j=1}^{n}\frac{\partial}{\partial x_j}
\left(D_{ij}(t,x)\overline{p}^\star(t,x,v)\right)
-J_{x,i}^\star(t,x,v).
\end{equation}
Then~\eqref{eq:cross-diffusion-equation} is equivalent to
\begin{equation}
\label{eq:cross-diffusion-row-equation}
\sum\nolimits_{j=1}^{n}\frac{\partial}{\partial v_j}
\left(D_{xv,ij}(t,x,v)\overline{p}^\star(t,x,v)\right)
=r_i(t,x,v),\quad i=1,\ldots,n.
\end{equation}
For a bounded connected domain $V$ with smooth boundary, let $\psi_i(t,x,\cdot)$ be the zero-mean solution of the Neumann--Poisson problem
\begin{equation}
\label{eq:cross-diffusion-poisson}
\begin{aligned}
\Delta_v\psi_i(t,x,v)&=r_i(t,x,v), &&v\in V, \\
\nabla_v\psi_i(t,x,v)\cdot n_{\partial V}(v)&=0, &&v\in\partial V, \\
\int_V\psi_i(t,x,v)dv&=0.
\end{aligned}
\end{equation}
Here, $n_{\partial V}$ is the outward unit normal to $\partial V$. The compatibility condition $\int_Vr_i(t,x,v)dv=0$ follows from~\eqref{eq:x-flow}, and the zero-mean condition fixes the additive constant. Under the stated regularity assumptions, define
\begin{equation}
\label{eq:cross-diffusion-from-potential}
D_{xv,ij}(t,x,v)
=
\frac{\partial_{v_j}\psi_i(t,x,v)}{\overline{p}^\star(t,x,v)},
\quad i,j=1,\ldots,n.
\end{equation}
Substituting into~\eqref{eq:cross-diffusion-row-equation} gives
\begin{equation}
\sum\nolimits_{j=1}^{n}\frac{\partial}{\partial v_j}
\left(D_{xv,ij}\overline{p}^\star\right)
=\sum\nolimits_{j=1}^{n}\frac{\partial^2\psi_i}{\partial v_j^2}
=\Delta_v\psi_i=r_i.
\end{equation}
Thus,~\eqref{eq:cross-diffusion-equation} holds. The Neumann condition also gives $(D_{xv}\overline{p}^\star)n_{\partial V}=0$ on $\partial V$. For $V=\mathbb R^n$, replace the boundary and zero-mean conditions by suitable decay and an additive normalization, and assume the corresponding Poisson problems admit solutions with vanishing flux at infinity.

To complete the augmented diffusion tensor, choose any $\eta>0$ and set
\begin{equation}
\label{eq:constructed-v-diffusion}
D_{vv}(t,x,v)
=D_{xv}(t,x,v)^\top D(t,x)^{-1}D_{xv}(t,x,v)+\eta I.
\end{equation}
The Schur complement satisfies $D_{vv}-D_{xv}^\top D^{-1}D_{xv}=\eta I\succ0$. Hence, $\overline{D}$ in~\eqref{eq:constructed-aug-diffusion-matrix} is positive definite.

\subsection{Construction of the $v$-Direction Probability Current}

In the interior $\Sigma^\circ$, the target current must satisfy
\begin{equation}
\label{eq:constructed-aug-conservation}
\frac{\partial\overline{p}^\star(t,x,v)}{\partial t}
=-\nabla_x\cdot J_x^\star(t,x,v)
-\nabla_v\cdot J_v^\star(t,x,v),\quad (x,v)\in\Sigma^\circ.
\end{equation}
Define the residual
\begin{equation}
\label{eq:constructed-v-current-residual}
r^v(t,x,v)
=-\frac{\partial\overline{p}^\star(t,x,v)}{\partial t}
-\nabla_x\cdot J_x^\star(t,x,v).
\end{equation}
Then $J_v^\star$ is required to solve
\begin{equation}
\label{eq:constructed-v-current-equation}
\nabla_v\cdot J_v^\star(t,x,v)=r^v(t,x,v).
\end{equation}
On a region where the entire $v$-fiber avoids the reset interfaces and the marginal equation has no reset contribution, we have
\begin{equation}
\label{eq:constructed-v-current-compatibility}
\begin{aligned}
\int_Vr^v(t,x,v)dv
&=-\frac{\partial}{\partial t}\int_V\overline{p}^\star(t,x,v)dv
-\nabla_x\cdot\int_VJ_x^\star(t,x,v)dv \\
&=-\frac{\partial p^\star(t,x)}{\partial t}
-\nabla_x\cdot J(t,x)=0.
\end{aligned}
\end{equation}
Under this compatibility condition, one possible choice is a gradient current. For bounded $V$, let $\varphi(t,x,\cdot)$ solve
\begin{equation}
\label{eq:constructed-v-current-poisson}
\begin{aligned}
\Delta_v\varphi(t,x,v)&=r^v(t,x,v), &&v\in V, \\
\nabla_v\varphi(t,x,v)\cdot n_{\partial V}(v)&=0, &&v\in\partial V, \\
\int_V\varphi(t,x,v)dv&=0.
\end{aligned}
\end{equation}
Define
\begin{equation}
\label{eq:constructed-v-current-from-potential}
J_v^\star(t,x,v)=\nabla_v\varphi(t,x,v).
\end{equation}
Then $\nabla_v\cdot J_v^\star=\Delta_v\varphi=r^v$ and $J_v^\star\cdot n_{\partial V}=0$. The same decay qualification as above applies when $V=\mathbb R^n$. The gradient choice fixes one admissible current; the density evolution alone does not uniquely determine $J_v^\star$.


\subsection{Recovery of the Augmented Drift and Diffusion}

Given $J_v^\star$, define the $v$-component of the drift in $\Sigma^\circ$ by
\begin{equation}
\label{eq:constructed-v-drift}
f_v(t,x,v)
=
\frac{
J_v^\star(t,x,v)
+\nabla_x\cdot\left(D_{xv}(t,x,v)^\top\overline{p}^\star(t,x,v)\right)
+\nabla_v\cdot\left(D_{vv}(t,x,v)\overline{p}^\star(t,x,v)\right)
}{\overline{p}^\star(t,x,v)}.
\end{equation}
All density factors in~\eqref{eq:constructed-v-drift} are the joint density $\overline{p}^\star$. 

As in Construction~\ref{cons:approximate-ashs}, choose
\begin{equation}
\label{eq:constructed-diffusion-factor}
\overline{g}(t,x,v)
=
\begin{bmatrix}
g(t,x) & 0 \\
g_{xv}(t,x,v) & g_v(t,x,v)
\end{bmatrix},
\quad
g_{xv}=D_{xv}^\top D^{-1}g,
\quad
g_v=\sqrt{2\eta}I.
\end{equation}
Using $gg^\top=2D$, we obtain
\begin{equation}
\frac{1}{2}\overline{g}\overline{g}^\top
=
\begin{bmatrix}
D & D_{xv} \\
D_{xv}^\top & D_{xv}^\top D^{-1}D_{xv}+\eta I
\end{bmatrix}
=\overline{D}.
\end{equation}
The resulting augmented dynamics are
\begin{equation}
\label{eq:constructed-time-dependent-ashs}
\begin{aligned}
dy&=\overline{f}(t,y)dt+\overline{g}(t,y)d\overline{w},
\quad (x,v)\notin\Sigma^-, \\
x^+&=x^-+v^-,\quad v^+=v^-,
\quad (x^-,v^-)\in\Sigma^-.
\end{aligned}
\end{equation}
Here, $\overline{w}=[w_x^\top,\ w_v^\top]^\top$ is a $2n$-dimensional Wiener process. Evaluating the probability current at the target density gives
\begin{equation}
\overline{f}(t,y)\overline{p}^\star(t,y)
-\nabla_y\cdot\left(\overline{D}(t,y)\overline{p}^\star(t,y)\right)
=
\begin{bmatrix}
J_x^\star(t,x,v) \\
J_v^\star(t,x,v)
\end{bmatrix}
=J^\star(t,y),\quad y\in\Sigma^\circ.
\end{equation}

In a region where $\chi^\circ=1$ and the marginal equation has no reset contribution, $p^\star(v\mid x)=\mathcal N(v;\mu,\delta^2I)$ is independent of $x$. Hence, $r_i=0$, and we may choose $D_{xv}=0$ and $J_v^\star=0$. With $\eta=\lambda\delta^2$,~\eqref{eq:constructed-v-drift} reduces to $f_v=-\lambda(v-\mu)$, giving the interior OU dynamics in Section~\ref{sec:cons-aug-shs}:
\begin{equation}
dv=-\lambda(v-\mu)dt+\sqrt{2\lambda}\,\delta\,dw_v.
\end{equation}
This expression applies on $V=\mathbb R^n$; a bounded auxiliary domain additionally requires a boundary mechanism compatible with the imposed no-flux condition.


\subsection{Numerical Realization of the Two Constructions}
\label{appx:construction-example}

We describe the one-dimensional realization of
Construction \ref{cons:partition-mixing-ashs} and \ref{cons:approximate-ashs}
used in the analytical example.
The continuous state satisfies
$dx=dt+\sigma\,dw$, with $\sigma=0.005$ and guard $x=1$.
The auxiliary state represents the reset displacement,
$v=x^+-1$, so that the augmented reset is deterministic:
\begin{equation}
    R(1,v)=(1+v,v).
\end{equation}
For numerical simulation, we restrict the auxiliary variable to
$V=[v_{\min},v_{\max}]=[-1.15,-0.55]$ and use reflecting
boundaries.
Consequently, the augmented reset image lies on
$x=1+v$, with $x\in[-0.15,0.45]$.

\paragraph{Construction 1: conditional and joint target densities.}
Let $\kappa(v)$ denote the reset-displacement density and let
$\rho(v)$ denote the Gaussian interior profile.
Before restriction to $V$, the profiles used in the numerical
example are
\begin{equation}
\begin{aligned}
    \kappa_0(v)
    &=
    \frac{1}{3}\sum_{c\in\{0,0.15,0.3\}}
    \mathcal{N}(v;c-1,0.03^2),\\
    \rho_0(v)
    &=\mathcal{N}(v;-0.85,0.2^2).
\end{aligned}
\end{equation}
For the normalized target construction on a bounded auxiliary
domain, these profiles are replaced by
\begin{equation}
    \kappa(v)=
    \frac{\kappa_0(v)}{\int_V\kappa_0(u)\,du},
    \qquad
    \rho(v)=
    \frac{\rho_0(v)}{\int_V\rho_0(u)\,du},
    \qquad v\in V.
\end{equation}
Since $R$ leaves $v$ unchanged, the outgoing and incoming reset
fluxes have the same auxiliary-coordinate law.
The implementation therefore uses the reset-displacement profile
as a local extension near both the guard and the projected
reset band.
This extension is a numerical choice; it does not by itself
establish the joint-density trace on the slanted reset image.

We blend the reset and interior profiles using a twice
continuously differentiable weight.
Define
\begin{equation}
s(u)=
\begin{cases}
0, & u\le0,\\
6u^5-15u^4+10u^3, & 0<u<1,\\
1, & u\ge1,
\end{cases}
\end{equation}
and set
\begin{equation}
\begin{aligned}
a_+(x)&=1-s\left(\frac{x-0.45}{0.20}\right),\\
a_-(x)&=s\left(\frac{x-0.70}{0.30}\right),\\
a(x)&=\max\{a_+(x),a_-(x)\}.
\end{aligned}
\end{equation}
The two transition regions are disjoint.
Thus, $a=1$ on the projected reset band and at the guard,
while $a=0$ for $x\in[0.65,0.70]$.
The conditional target and the augmented target are
\begin{equation}
\label{eq:app-example-target}
    q(v\mid x)=a(x)\kappa(v)+(1-a(x))\rho(v),
    \qquad
    \overline{p}^{\star}(t,x,v)=p^{\star}(t,x)q(v\mid x).
\end{equation}
Normalization of the local profiles gives
$\int_Vq(v\mid x)\,dv=1$, and hence
$\int_V\overline{p}^{\star}(t,x,v)\,dv=p^{\star}(t,x)$.

In the numerical implementation, the time-dependent marginal
used to evaluate the coefficients is obtained from a conservative
finite-volume approximation of the original hybrid
Fokker--Planck equation,
\begin{equation}
    \partial_t p+\partial_x J
    =\alpha(t)\kappa(x-1),
    \qquad
    J=p-D_0\partial_xp,
    \qquad
    D_0=\frac{\sigma^2}{2}.
\end{equation}
The source is supported on the reset band.
The computational interval is $[-0.15,1]$, with an absorbing
right boundary and a no-flux left boundary.
We use 600 spatial cells and ten finite-volume substeps per
particle-integration step.
The tabulated density and its spatial derivative provide the
time-dependent current $J(t,x)$.
Separate Monte Carlo ensembles are used to compare the SHS
and A-SHS marginal distributions.

\paragraph{Construction 2: cross-diffusion and auxiliary drift.}
For normalized profiles, the one-dimensional Neumann problem
reduces to a single cumulative integral,
\begin{equation}
\label{eq:app-example-shape}
    H(v)=\int_{v_{\min}}^v
    \bigl(\kappa(u)-\rho(u)\bigr)\,du.
\end{equation}
It satisfies
$H(v_{\min})=H(v_{\max})=0$ and
$H'(v)=\kappa(v)-\rho(v)$.
Prescribing $J_x^\star=qJ$ gives
\begin{equation}
    \partial_v
    \left(D_{xv}\overline{p}^{\star}\right)
    =-D_0p^{\star}\partial_xq.
\end{equation}
Therefore, a scalar cross-diffusion coefficient is
\begin{equation}
\label{eq:app-example-cross}
    D_{xv}(x,v)
    =-\frac{D_0a'(x)H(v)}{q(v\mid x)}.
\end{equation}
On a source-free region where
$\partial_tp^\star+\partial_xJ=0$, the auxiliary current can
be chosen as
\begin{equation}
\label{eq:app-example-v-current}
    J_v^\star(t,x,v)=-a'(x)J(t,x)H(v).
\end{equation}
Indeed,
$\partial_vJ_v^\star=-J\partial_xq
=-\partial_t\overline{p}^\star-\partial_x(qJ)$
on such a region.

Choose an independent auxiliary-noise amplitude $\sigma_v=0.5$
and write $\eta_v=\sigma_v^2/2$.
The remaining diffusion coefficient and the auxiliary drift are
\begin{equation}
\begin{aligned}
    D_{vv}
    &=\frac{D_{xv}^2}{D_0}+\eta_v,\\
    h_v
    &=
    \frac{
    J_v^\star
    +\partial_x(D_{xv}\overline{p}^\star)
    +\partial_v(D_{vv}\overline{p}^\star)
    }{\overline{p}^\star}.
\end{aligned}
\end{equation}
The corresponding continuous dynamics are
\begin{equation}
\label{eq:app-example-augmented-sde}
\begin{aligned}
    dx &= dt+\sigma\,dw,\\
    dv &= h_v(t,x,v)\,dt
    +\frac{2D_{xv}(x,v)}{\sigma}\,dw
    +\sigma_v\,du,
\end{aligned}
\end{equation}
where $w$ and $u$ are independent Wiener processes.
Sharing $w$ between the two equations realizes the prescribed
cross-diffusion.
At $x=1$, we apply $(x^+,v^+)=(1+v^-,v^-)$ without
resampling the auxiliary state.

In the Gaussian interior, $a=a'=0$, so that
$D_{xv}=J_v^\star=0$.
Away from numerical regularization and auxiliary boundaries,
the $v$-dynamics reduce to
\begin{equation}
    dv=-\lambda(v-\mu)\,dt+\sigma_v\,du,
    \qquad
    \mu=-0.85,\qquad
    \lambda=\frac{\sigma_v^2}{2\delta^2}=3.125,
    \qquad \delta=0.2.
\end{equation}
Thus, the OU process is the interior limit of the construction,
rather than the complete augmented dynamics.
In the transition regions, the shared noise and the
time-dependent current contribute additional terms.

\paragraph{Numerical approximation and scope.}
The displayed simulations use 5000 particles per system,
step size $\Delta t=5\times10^{-4}$, and final time
$T=3$.
Both particle ensembles start at $x(0)=0.25$.
The coefficient table uses a regularized initial density with
standard deviation $0.05$.
The auxiliary integral is tabulated on 4000 grid points,
density denominators are floored at $10^{-3}$, and the
auxiliary drift is clipped to $[-200,200]$.



\section{Details of the Numerical Experiments}
In this appendix, we provide the details of the numerical experiments.
Unless stated otherwise, values are the mean $\pm$ sample standard deviation over five training seeds.
Bold entries indicate the lowest mean among complete numeric entries for each metric within each row; rows with only one available configuration do not provide a comparative ranking.

\subsection{System Setup}
\label{apx:system-setup}

In this section, we consider eleven benchmark settings constructed
from six stochastic hybrid systems.
The \emph{\textbf{Bouncing Ball}}, \emph{\textbf{Torus}},
\emph{\textbf{Klein Bottle}}, and \emph{\textbf{Klein--Torus}}
systems have two-dimensional observed states.
We additionally consider a single planar bouncing ball with a
four-dimensional state and two interacting planar bouncing balls
with an eight-dimensional state.
Except for Klein--Torus, each system is evaluated with both
Gaussian-mixture and uniform reset distributions.
The continuous dynamics are deterministic in these experiments;
randomness enters through the reset mechanism.

For the Torus and Klein Bottle examples, the dynamics evolve on
the fundamental domain $M=[0,1]^2$ with the constant vector field
\begin{equation}
    \dot{x}=c,\qquad
    c=[2,2\sqrt{2}]^\top.
\end{equation}
The guard consists of two neighboring boundaries,
$S=S_1\cup S_2$, where
\begin{equation}
    S_1=\{x\in M:x_1=1\},\qquad
    S_2=\{x\in M:x_2=1\}.
\end{equation}
Let $[a]_1=a-\lfloor a\rfloor$ denote wrapping into $[0,1)$.
The stochastic reset maps are
\begin{equation}
\begin{aligned}
r_{\mathrm{t}}(x,\xi)
&=
\begin{cases}
(0,[x_2+\xi]_1), & x\in S_1,\\
([x_1+\xi]_1,0), & x\in S_2,
\end{cases}\\
r_{\mathrm{k}}(x,\xi)
&=
\begin{cases}
(0,[x_2+\xi]_1), & x\in S_1,\\
([1-x_1+\xi]_1,0), & x\in S_2.
\end{cases}
\end{aligned}
\end{equation}
When $\xi=0$, these maps recover the canonical boundary
identifications of the torus and Klein bottle, respectively.
For the stochastic experiments, the reset perturbation is sampled
independently at each event from
\begin{equation}
\xi\sim
\begin{cases}
\displaystyle
\frac{1}{3}\sum_{\mu\in\{-0.2,0,0.2\}}
\mathcal{N}(\mu,0.01^2), & \text{GMM},\\[4pt]
\mathcal{U}(-0.2,0.2), & \text{Uniform}.
\end{cases}
\end{equation}
Initial states are sampled uniformly from $[0,1]^2$.

The \emph{\textbf{Klein--Torus}} example has the same continuous
dynamics and guard, but randomly selects between the two canonical
reset rules.
The reset on $S_1$ is $(1,x_2)\mapsto(0,x_2)$.
On $S_2$, we independently draw $B\sim\operatorname{Bernoulli}(1/2)$
and apply
\begin{equation}
    r_{\mathrm{kt}}(x,B)
    =
    \begin{cases}
        (x_1,0), & B=0,\\
        (1-x_1,0), & B=1.
    \end{cases}
\end{equation}
There is no additive reset perturbation in this example:
the randomness is entirely due to the choice of reset rule.

The \emph{\textbf{Bouncing Ball}} system has state
$x=[p,v]^\top$, with height $p$ and vertical velocity $v$.
Its continuous dynamics are
\begin{equation}
    \dot{p}=v,\qquad \dot{v}=-g,\qquad g=9.81.
\end{equation}
The guard $S_{\mathrm{b}}=\{(p,v):p=0,\ v<0\}$ represents
contact with the ground.
At each impact, the reset is
\begin{equation}
    p^+=0,\qquad
    v^+=-\alpha v^-+\varepsilon,\qquad
    \varepsilon\sim\mathcal{N}(0,0.05^2),
\end{equation}
where the restitution coefficient is sampled independently from
\begin{equation}
\label{eq:app-restitution}
\alpha\sim
\begin{cases}
\displaystyle
\frac{1}{2}\mathcal{N}(0.5,0.01^2)
+\frac{1}{2}\mathcal{N}(0.9,0.01^2), & \text{GMM},\\[4pt]
\mathcal{U}(0.25,0.90), & \text{Uniform}.
\end{cases}
\end{equation}
The Gaussian components are sampled without truncation.
We initialize $p(0)\sim\mathcal{U}(0.2,1.5)$ and
$v(0)=s u$, where $u\sim\mathcal{U}(5,7)$ and
$s\in\{-1,1\}$ is equiprobable.

For the \emph{\textbf{Planar Bouncing Ball}} examples, we consider
$k\in\{1,2\}$ equal-mass balls with radius $r=0.05$.
Each ball has state
$x_i=[p_{x,i},p_{y,i},v_{x,i},v_{y,i}]^\top$,
and the full observed state is
$x=[x_1^\top,\ldots,x_k^\top]^\top\in\mathbb{R}^{4k}$.
The continuous dynamics are
\begin{equation}
    \dot{p}_{x,i}=v_{x,i},\qquad
    \dot{p}_{y,i}=v_{y,i},\qquad
    \dot{v}_{x,i}=0,\qquad
    \dot{v}_{y,i}=-g.
\end{equation}
The physical side walls are located at $p_x=\pm0.5$,
the floor is located at $p_y=0$, and there is no ceiling.
Consequently, the admissible ball-center positions satisfy
\begin{equation}
    -0.45\le p_{x,i}\le0.45,\qquad p_{y,i}\ge0.05.
\end{equation}
A wall or floor collision is triggered when a ball reaches the
corresponding contact boundary while moving into the surface.
The normal velocity component is reset according to
\begin{equation}
    v_{n,i}^+=-\alpha v_{n,i}^-,
\end{equation}
while the tangential component remains unchanged.
The restitution coefficient follows
\eqref{eq:app-restitution}.
Unlike the one-dimensional bouncing ball, these planar systems
do not include additive velocity noise at impact.

For two balls, contact additionally occurs when
$\|p_i-p_j\|=2r$ and the balls are approaching each other.
Let
\begin{equation}
    n_{ij}=\frac{p_i-p_j}{\|p_i-p_j\|},\qquad
    s_{ij}=(v_i^--v_j^-)^\top n_{ij}<0.
\end{equation}
The equal-mass collision reset is
\begin{equation}
\begin{aligned}
    v_i^+ &= v_i^-
    -\frac{1+\alpha}{2}s_{ij}n_{ij},\\
    v_j^+ &= v_j^-
    +\frac{1+\alpha}{2}s_{ij}n_{ij},
\end{aligned}
\end{equation}
with a newly sampled restitution coefficient.
Initial positions are non-overlapping, and initial velocities
are rescaled to satisfy a total mechanical-energy budget of
at most $kg$, for unit masses.
The implementation corrects numerical overlap and uses a small
inward position margin of $10^{-6}$ after contact.

For each benchmark setting, we generate 4096 training trajectories
and 512 independent test trajectories with a nominal duration of
$5\,\mathrm{s}$ and integration step $\Delta t=0.01\,\mathrm{s}$.
Collision and boundary-crossing events are handled during simulation,
so the stored observation timestamps need not form a uniform grid.
We normalize each coordinate using the training-set mean and standard
deviation and reuse those statistics for testing.
All qualitative trajectory plots are transformed back to the original
physical or fundamental-domain coordinates.

\subsection{Model Structure}
\label{apx:model-structure}

We compare the proposed model with CHyLL
\citep{teng2026embedding}, Latent SDE
\citep{li2020scalable}, SDE Matching
\citep{bartosh2025sde}, and Neural Stochastic Flows
\citep{kiyohara2025neural}.
We additionally evaluate Latent SDE augmented with
$\mathcal{L}_x$ and $\mathcal{L}_{\mathrm{x}}^{\mathrm{traj}}$.
This variant uses the same architecture as Latent SDE and is
denoted by \texttt{Latent SDE + x losses} in the figures.
The stochastic baselines use the authors' implementation cores,
with conditional initial-state adapters where needed for
forecasting from a given observation.

Let $n=d_x$ denote the observed-state dimension.
All latent-state models use $d_z=4n$ in the default comparison.
NSF instead models transitions directly in the observed state
space and does not use a latent SDE of dimension $4n$.
We use the standard network widths for b-ball, Torus, Klein,
Klein--Torus, and the single planar ball.
For the two-ball examples, we double the hidden widths within
each model family.

\begin{table}[t]
\scriptsize
\centering
\caption{Hidden-width settings. The large configuration is used
only for the two planar balls. $H$ is the hidden width for
Proposed and CHyLL; $L$ and $C$ are the hidden and context widths
for Latent SDE; $M$ is the hidden width for SDE Matching;
$F$ is the hidden width for NSF.}
\label{tab:app-network-widths}
\begin{tabular}{lccccc}
\toprule
Configuration & $H$ & $L$ & $C$ & $M$ & $F$\\
\midrule
Standard & 256 & 128 & 64  & 100 & 64\\
Large    & 512 & 256 & 128 & 200 & 128\\
\bottomrule
\end{tabular}
\end{table}

The \emph{\textbf{Proposed}} model uses a conditional diffusion
sampler to generate latent initial states from the observed
initial condition.
Its score network has two hidden layers of width $H$ with
SiLU activations.
We use 20 DDIM sampling steps and a scalar diffusion-time input.
The discrete noise schedule is linearly spaced between
$0.1/20$ and $4/20$.
The latent drift and diffusion networks each contain two
hidden layers of width $H$, and the decoder contains three
hidden layers of width $H$.
These networks use ReLU hidden activations and linear outputs.

The \emph{\textbf{CHyLL}} baseline
\citep{teng2026embedding} uses a deterministic MLP encoder,
a latent ODE, and an MLP decoder.
Their hidden-layer widths are $[H]\times2$,
$[H]\times2$, and $[H]\times3$, respectively.
The encoder, vector field, and decoder use ReLU hidden activations.
This baseline is trained with its deterministic reconstruction
and dynamics objectives, rather than with the stochastic
distribution-matching objectives.
For a fixed initial observation, repeated predictions coincide.


The \emph{\textbf{Latent SDE}} baseline
\citep{li2020scalable} uses a GRU with hidden width $L$,
followed by a linear projection to a context of dimension $C$.
The initial posterior parameters are obtained from this context
through a linear layer.
Both the posterior and prior drift networks contain two
Softplus hidden layers of width $L$.
The diffusion is diagonal: each latent coordinate has a
scalar-input network with one Softplus hidden layer of width $L$
and a sigmoid output.
A linear projection maps latent states to observations.
For conditional forecasting, an additional MLP with two
Softplus hidden layers of width $L$ maps the initial observation
to the mean and log standard deviation of a diagonal Gaussian
initial prior.
The observation likelihood has fixed standard deviation $0.01$
in normalized coordinates.
The KL weight is annealed to one over the first 1000 updates.
The \texttt{Latent SDE + x losses} variant adds
$\mathcal{L}_x$ and $\mathcal{L}_{\mathrm{x}}^{\mathrm{traj}}$
with unit weights, without changing these networks.

The \emph{\textbf{SDE Matching}} baseline
\citep{bartosh2025sde} uses a GRU of width $M$ and a
time-conditioned MLP with two SiLU hidden layers of width $M$
to parameterize its Gaussian posterior process.
The prior drift has one Softplus hidden layer of width $M$.
Its diagonal diffusion uses one scalar network per latent
coordinate, each with one Softplus hidden layer of width $M$
and a sigmoid output.
The observation map is linear, with fixed likelihood standard
deviation $0.01$.
We condition the initial Gaussian prior on the initial
observation using an MLP with one Softplus hidden layer of
width $M$.
Training uses the official simulation-free objective.
The implementation uses the fixed time conversion
$\tilde{t}=t/0.31$ and returns forecasts in the original time units.
At evaluation, the observation mean is used without adding
likelihood noise.

The \emph{\textbf{NSF}} baseline
\citep{kiyohara2025neural} uses the official autonomous
affine-coupling stochastic-flow and bridge models.
The Gaussian-parameter and coupling-conditioner networks each
have two hidden layers of width $F$ with SiLU activations,
and each flow uses four affine-coupling layers.
Training alternates five bridge updates with one flow update,
using the official consistency objective.
Predictive trajectories are generated recursively from the
conditional transition kernels, rather than by independently
sampling each time marginal.

\begin{table}[t]
\scriptsize
\centering
\caption{Network components used in the baseline comparison.
$[h]\times \ell$ denotes $\ell$ hidden layers of width $h$;
output layers are not counted.
Widths are specified in \Cref{tab:app-network-widths}.
Latent SDE with additional losses has the same architecture
as Latent SDE.}
\label{tab:app-model-components}
\resizebox{\linewidth}{!}{%
\begin{tabular}{lccccc}
\toprule
Component & Proposed & CHyLL & Latent SDE
& SDE Matching & NSF\\
\midrule
Initial-state network
& Score: $[H]\times2$
& $[H]\times2$
& $[L]\times2$
& $[M]\times1$
& Conditional flow\\
Posterior context
& N/A
& N/A
& GRU($L$), context $C$
& GRU($M$), $[M]\times2$
& Bridge model\\
Prior drift
& $[H]\times2$
& $[H]\times2$
& $[L]\times2$
& $[M]\times1$
& N/A\\
Diffusion
& $[H]\times2$, matrix
& N/A
& $[L]\times1$, per coordinate
& $[M]\times1$, per coordinate
& N/A\\
Observation decoder
& $[H]\times3$
& $[H]\times3$
& Linear
& Linear
& Observed-state flow\\
Flow conditioner
& N/A
& N/A
& N/A
& N/A
& $[F]\times2$\\
\bottomrule
\end{tabular}%
}
\end{table}

We use a learning rate of $10^{-3}$, batch size 256, and
training windows of 32 stored observations.
Each run uses 10,000 training steps; for NSF, these are
10,000 flow updates with the additional alternating bridge updates.
The proposed model uses ten conditional samples during training.
All methods are evaluated over five training seeds,
$\{1101,2202,3303,4404,5505\}$.

The latent ODE/SDE forecasts use Euler or Euler--Maruyama
integration on the supplied prediction grid.
For a 32-point grid, this gives 31 integration steps.
Predictions are interpolated to the recorded observation times
when needed.
NSF instead recursively samples its transition kernels.
Forecasting conditions only on the initial observation;
posterior access to a training window is not used to provide
future observations during testing.

For the horizon-based comparisons, we evaluate the saved
checkpoints on all 512 test trajectories using ten predictions
per initial condition.
We report results at $1\,\mathrm{s}$, $3\,\mathrm{s}$,
and the full stored horizon.
For a fixed physical horizon, we retain observations up to the
time cutoff and pad shorter retained sequences by repeating
their final retained state and timestamp.
The same protocol is applied across methods.
Metrics are computed in normalized coordinates, while the
qualitative figures display unnormalized trajectories.
\subsection{Comparative Studies}
\label{appx:baselines}
\subsubsection{Horizon 1s}
\Cref{tab:baseline-conditional-energy-1s,tab:baseline-trajectory-energy-1s} report the conditional and unconditional path distribution losses, respectively, both based on energy distance.
Evaluation uses 512 test trajectories in normalized state space, 32-point windows, and ten predictive samples per condition.
Latent SDE~\citep{li2020scalable} and SDE Matching~\citep{bartosh2025sde} are our task-adapted reimplementations; the $\star$ variant of Latent SDE additionally uses the state distribution loss $\mathcal L_x$ and the unconditional path distribution loss $\mathcal L_{\mathrm{x}}^{\mathrm{traj}}$.
NSF~\citep{kiyohara2025neural} uses the official implementation and recursively samples trajectories from conditional transition kernels.
Diverged denotes a numerical failure or either test-set path distribution loss exceeding $10^3$ in any seed.
The same criterion applies to CHyLL; surviving seeds are not averaged separately.
\TableBaselineConditionalEnergyHorizonOneSecond
\TableBaselineTrajectoryEnergyHorizonOneSecond

\subsubsection{Horizon 3s}
\Cref{tab:baseline-conditional-energy-3s,tab:baseline-trajectory-energy-3s} report the conditional and unconditional path distribution losses over $3\,\mathrm{s}$ from the first stored observation.
Our method yields complete numeric results and the lowest mean losses for both metrics across all 11 benchmark settings.
SDE Matching triggers the divergence criterion in all settings, while both Latent SDE variants diverge on all five topological benchmarks.

\TableBaselineConditionalEnergyHorizonThreeSeconds
\TableBaselineTrajectoryEnergyHorizonThreeSeconds

\subsubsection{Horizon around 5s}
\Cref{tab:baseline-conditional-energy-full,tab:baseline-trajectory-energy-full} report the losses over the complete stored trajectory, approximately $5\,\mathrm{s}$.
Our method yields complete numeric results in all 11 settings and achieves the lowest mean losses wherever a competing method also has complete results.
On Torus (GMM), 1-ball (GMM), and 1-ball (Uniform), all competing methods trigger the divergence criterion.

\TableBaselineConditionalEnergyHorizonFull
\TableBaselineTrajectoryEnergyHorizonFull

\subsection{Ablations}
\label{appx:ablations}
Percentage changes in the ablation summaries are computed per benchmark relative to the default model and then averaged over the 11 settings; unconditional results are given in parentheses.
In the loss-function ablations, the indicated objectives are removed by setting their weights to zero; other settings are unchanged.

\subsubsection{Horizon 1s}

\textbf{Latent Dimension:} \Cref{tab:ablation-split-latent-conditional-energy-loss-1s,tab:ablation-split-latent-x-trajectory-loss-1s} report the conditional and unconditional path distribution losses for $d_z\in\{d_x,2d_x,3d_x,4d_x\}$.
The diffusion noise dimension follows $d_z$ for the 1D ball and topological systems and is fixed at $d_x/2$ for the ball systems; other settings are unchanged.

\textbf{Stochastic Encoder:} \Cref{tab:ablation-split-sampler-conditional-energy-loss-1s} compares the conditional diffusion sampler with a deterministic MLP encoder using both path distribution losses.
Both variants use ten SDE trajectories per condition; other settings are unchanged.

\textbf{Loss Function Design:} \Cref{tab:ablation-split-weights-conditional-energy-loss-1s,tab:ablation-split-weights-x-trajectory-loss-1s} report the conditional and unconditional path distribution losses for each benchmark under the loss-function ablations.
As summarized in \Cref{tab:ablation-summary-1s}, removing $\mathcal L_z$ increases the mean conditional (unconditional) loss by $19.5\%$ ($246.6\%$), while removing both path distribution losses increases them by $23.3\%$ ($54.7\%$).
Both modifications worsen both metrics in all 11 settings.


\TableLatentConditionalEnergyHorizonOneSecond

\TableLatentTrajectoryEnergyHorizonOneSecond

\TableSamplerAblationHorizonOneSecond

\TableLossAblationConditionalEnergyHorizonOneSecond

\TableLossAblationUnconditionalEnergyHorizonOneSecond

\subsubsection{Horizon 3s}

\textbf{Latent Dimension:} \Cref{tab:ablation-split-latent-conditional-energy-loss-3s,tab:ablation-split-latent-x-trajectory-loss-3s} report the latent-dimension ablation over $3\,\mathrm{s}$.
As summarized in \Cref{tab:ablation-summary-3s}, using $d_z=d_x$ increases the mean conditional (unconditional) loss by $25.9\%$ ($417.1\%$), with both metrics worsening in all 11 settings.
Using $d_z=2d_x$ or $3d_x$ gives smaller mean increases of $1.5\%$ ($15.0\%$) or $0.7\%$ ($14.4\%$), respectively.

\textbf{Stochastic Encoder:} \Cref{tab:ablation-split-sampler-conditional-energy-loss-3s} reports the sampler ablation over $3\,\mathrm{s}$.
Replacing the stochastic encoder with a deterministic MLP increases the mean losses by $2.9\%$ ($13.9\%$), with degradation in $7/11$ ($9/11$) settings.

\textbf{Loss Function Design:} \Cref{tab:ablation-split-weights-conditional-energy-loss-3s,tab:ablation-split-weights-x-trajectory-loss-3s} report the conditional and unconditional path distribution losses for each benchmark under the loss-function ablations.
As summarized in \Cref{tab:ablation-summary-3s}, removing $\mathcal L_z$ increases the mean losses by $63.8\%$ ($1611.9\%$), while removing both path distribution losses increases them by $12.5\%$ ($24.6\%$).
Both modifications worsen both metrics in all 11 settings.

\TableAblationSummaryHorizonThreeSeconds

\TableLatentConditionalEnergyHorizonThreeSeconds

\TableLatentTrajectoryEnergyHorizonThreeSeconds

\TableSamplerAblationHorizonThreeSeconds

\TableLossAblationConditionalEnergyHorizonThreeSeconds

\TableLossAblationUnconditionalEnergyHorizonThreeSeconds

\subsubsection{Horizon around 5s}

\textbf{Latent Dimension:} \Cref{tab:ablation-split-latent-conditional-energy-loss-full,tab:ablation-split-latent-x-trajectory-loss-full} report the latent-dimension ablation over the complete stored trajectory.
Using $d_z=d_x$ triggers the divergence criterion on b-ball (Uniform), while $d_z=2d_x$ triggers it on 2-balls (GMM).
Using $d_z=3d_x$ increases the mean losses by $20.9\%$ ($396.2\%$), largely driven by 2-balls (GMM), where the unconditional loss reaches $82.6$, compared with $1.89$ for the default $d_z=4d_x$.
These results show that comparable short-horizon performance does not necessarily imply comparable performance over the full trajectory.

\textbf{Stochastic Encoder:} \Cref{tab:ablation-split-sampler-conditional-energy-loss-full} reports the sampler ablation over the complete stored trajectory.
Replacing the stochastic encoder with a deterministic MLP increases the mean losses by $3.1\%$ ($14.2\%$), with degradation in $7/11$ ($10/11$) settings.

\textbf{Loss Function Design:} \Cref{tab:ablation-split-weights-conditional-energy-loss-full,tab:ablation-split-weights-x-trajectory-loss-full} report the conditional and unconditional path distribution losses for each benchmark under the loss-function ablations.
Removing $\mathcal L_z$ triggers the divergence criterion on 2-balls (GMM).
As summarized in \Cref{tab:ablation-summary-full}, removing both path distribution losses increases the mean losses by $9.9\%$ ($17.4\%$), with both metrics worsening in all 11 settings.

\TableAblationSummaryHorizonFull

\TableLatentConditionalEnergyHorizonFull

\TableLatentTrajectoryEnergyHorizonFull

\TableSamplerAblationHorizonFull

\TableLossAblationConditionalEnergyHorizonFull

\TableLossAblationUnconditionalEnergyHorizonFull

\clearpage

\subsection{Visualizations}
\label{appx:visualizations}
We provide qualitative comparisons on all eleven datasets in
Figures~\ref{fig:app-bb1-gmm}--\ref{fig:app-bb2-two-uniform}.
The rows follow the method names shown in the legends:
\texttt{Proposed} denotes our method;
\texttt{CHyLL} denotes the deterministic latent ODE framework
of \citet{teng2026embedding};
\texttt{Latent SDE} denotes the variational latent SDE baseline
\citep{li2020scalable};
\texttt{Latent SDE + x losses} denotes the same baseline augmented
with $\mathcal{L}_x$ and $\mathcal{L}_{\mathrm{x}}^{\mathrm{traj}}$;
\texttt{SDE Matching} denotes the simulation-free variational
baseline \citep{bartosh2025sde}; and
\texttt{NSF} denotes the conditional stochastic-flow baseline
\citep{kiyohara2025neural}.

For each dataset, all methods are conditioned on the same initial
observation from a fixed test trajectory.
Blue curves show the ground-truth (GT) trajectory, and orange curves
show five predictive rollouts.
One rollout is emphasized for readability without selection by
prediction error.
The five CHyLL predictions coincide because this baseline is deterministic.
We use training seed 1101 and test trajectory index 0 throughout.
Predictions are generated over the full recorded horizon
(approximately five seconds), without subsequent conditioning on GT.
All signals are displayed in their original, unnormalized coordinates.
These plots assess full-horizon forecasting behavior rather than
directly visualizing the short-window errors reported in the tables.
Since the systems are stochastic, individual predictive samples are
not expected to reproduce the particular GT realization exactly.

For two-coordinate systems, the two columns show the state coordinates
as functions of time.
For the planar bouncing-ball systems, each row additionally includes
spatial trajectories: the four time-series panels show
$(x,y,v_x,v_y)$ for the single ball or the first ball,
while the right-hand panels show the spatial paths of all balls.
The boundary lines indicate ball-center contact locations,
$x=\pm0.45$ and $y=0.05$, accounting for the ball radius $r=0.05$.

Time-series limits are initially determined by the GT range with
20\% padding on each side; spatial panels initially show the
collision region with additional margins.
These limits are retained whenever at least one predicted point
is visible in the corresponding panel.
Only when no predicted point is visible are the limits expanded
to include the actual signal range.
Thus, predictions may leave the displayed region without being
numerically non-finite.
The label \texttt{Non-finite} instead indicates that a complete
finite rollout could not be obtained; only GT is shown in that row.

\begin{figure}[p]
    \centering
    \includegraphics[width=\linewidth]{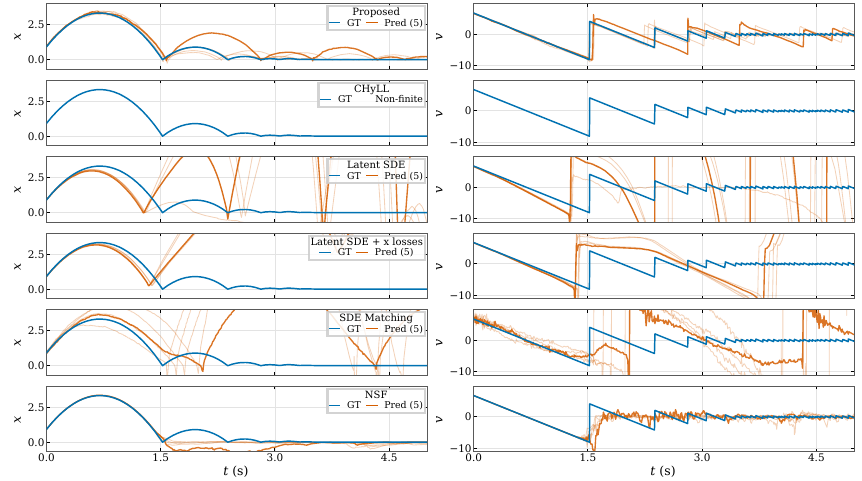}
    \caption{b-ball (GMM): one-dimensional bouncing ball with
    GMM-distributed restitution coefficients. The two columns
    show position and velocity over time.}
    \label{fig:app-bb1-gmm}
\end{figure}

\begin{figure}[p]
    \centering
    \includegraphics[width=\linewidth]{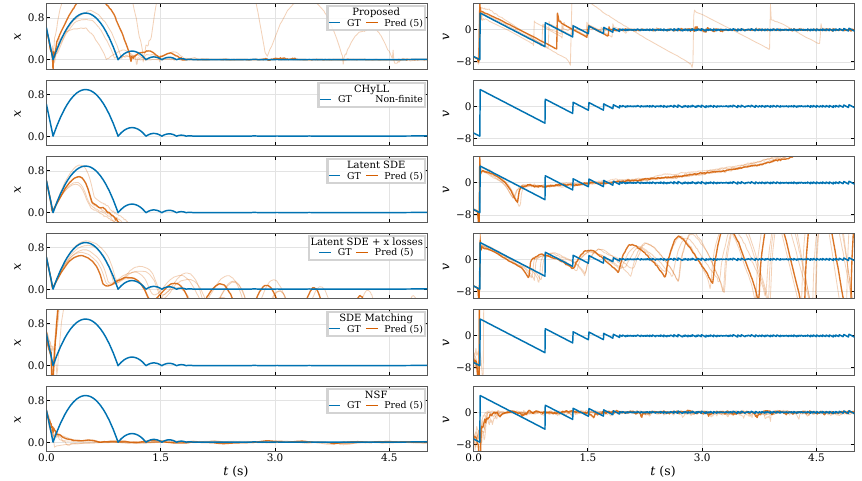}
    \caption{b-ball (Uniform): one-dimensional bouncing ball with
    uniformly distributed restitution coefficients. The two
    columns show position and velocity over time.}
    \label{fig:app-bb1-uniform}
\end{figure}

\begin{figure}[p]
    \centering
    \includegraphics[width=\linewidth]{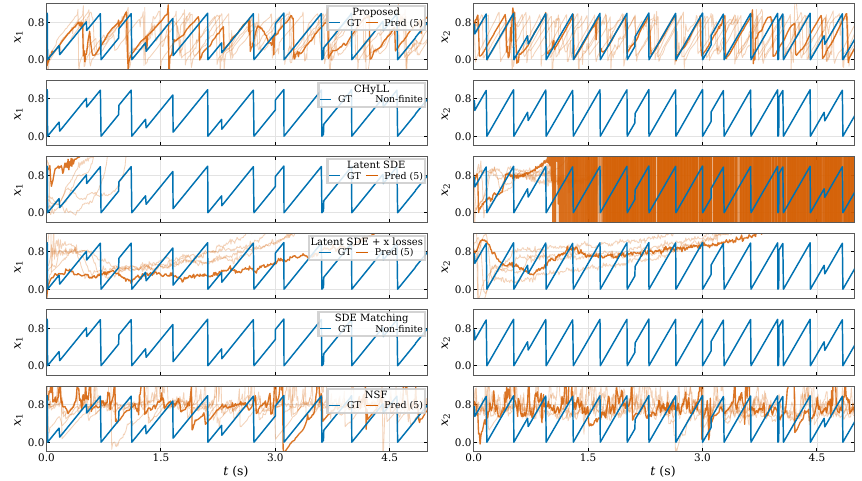}
    \caption{Torus (GMM): coordinate trajectories in the fundamental
    domain with GMM-distributed reset perturbations.}
    \label{fig:app-torus-gmm}
\end{figure}

\begin{figure}[p]
    \centering
    \includegraphics[width=\linewidth]{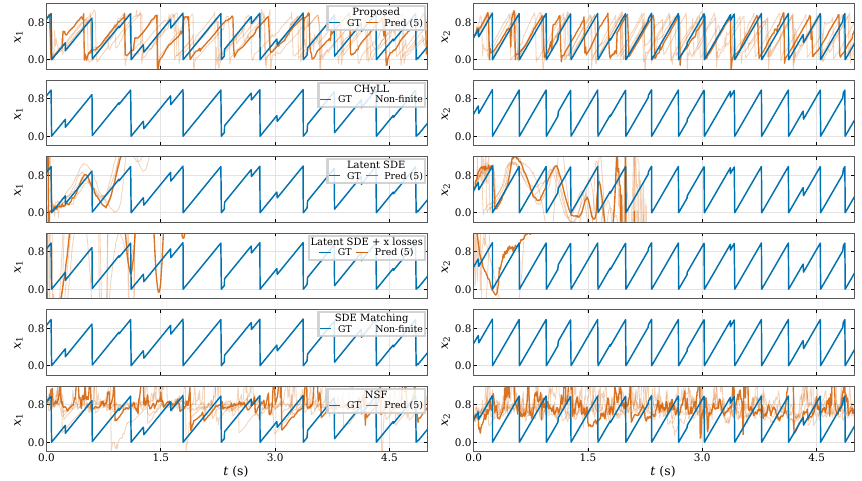}
    \caption{Torus (Uniform): coordinate trajectories in the fundamental
    domain with uniformly distributed reset perturbations.}
    \label{fig:app-torus-uniform}
\end{figure}

\begin{figure}[p]
    \centering
    \includegraphics[width=\linewidth]{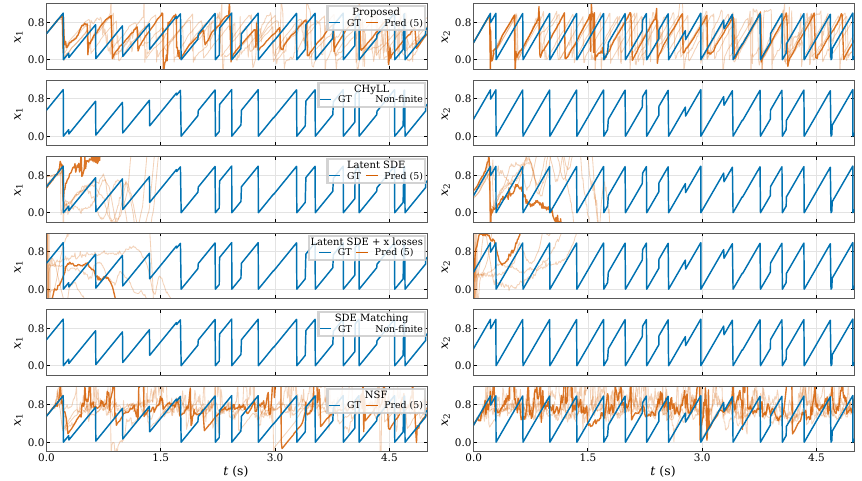}
    \caption{Klein (GMM): coordinate trajectories in the fundamental
    domain with GMM-distributed reset perturbations and
    Klein-bottle boundary gluing.}
    \label{fig:app-klein-gmm}
\end{figure}

\begin{figure}[p]
    \centering
    \includegraphics[width=\linewidth]{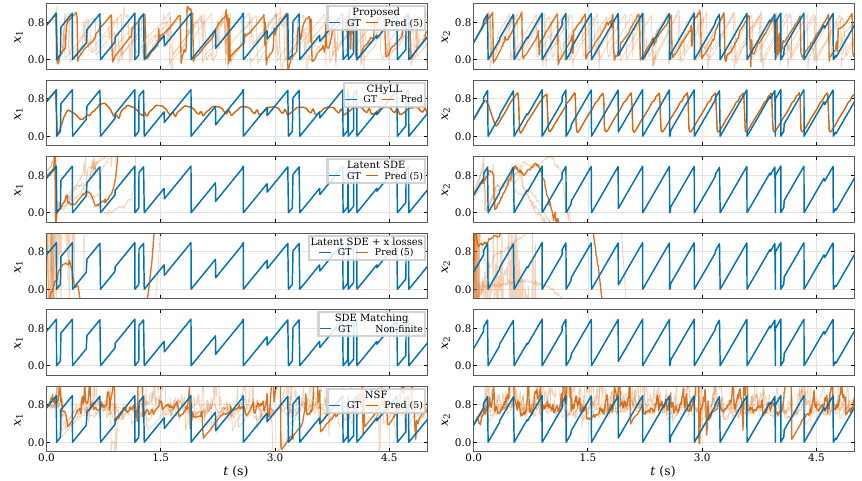}
    \caption{Klein (Uniform): coordinate trajectories in the fundamental
    domain with uniformly distributed reset perturbations and
    Klein-bottle boundary gluing.}
    \label{fig:app-klein-uniform}
\end{figure}

\begin{figure}[p]
    \centering
    \includegraphics[width=\linewidth]{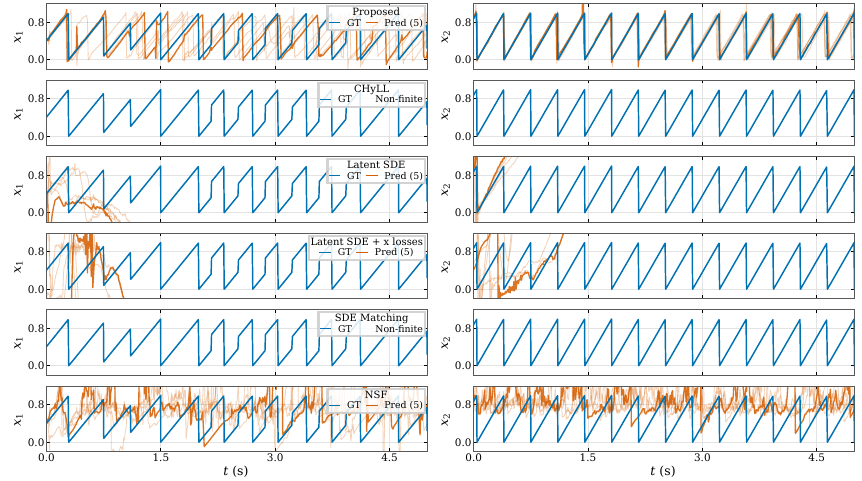}
    \caption{Klein--Torus: coordinate trajectories with random
    selection between canonical Klein-bottle and torus resets
    at the top boundary, each with probability $1/2$.}
    \label{fig:app-kt}
\end{figure}

\begin{figure}[p]
    \centering
    \includegraphics[width=\linewidth]{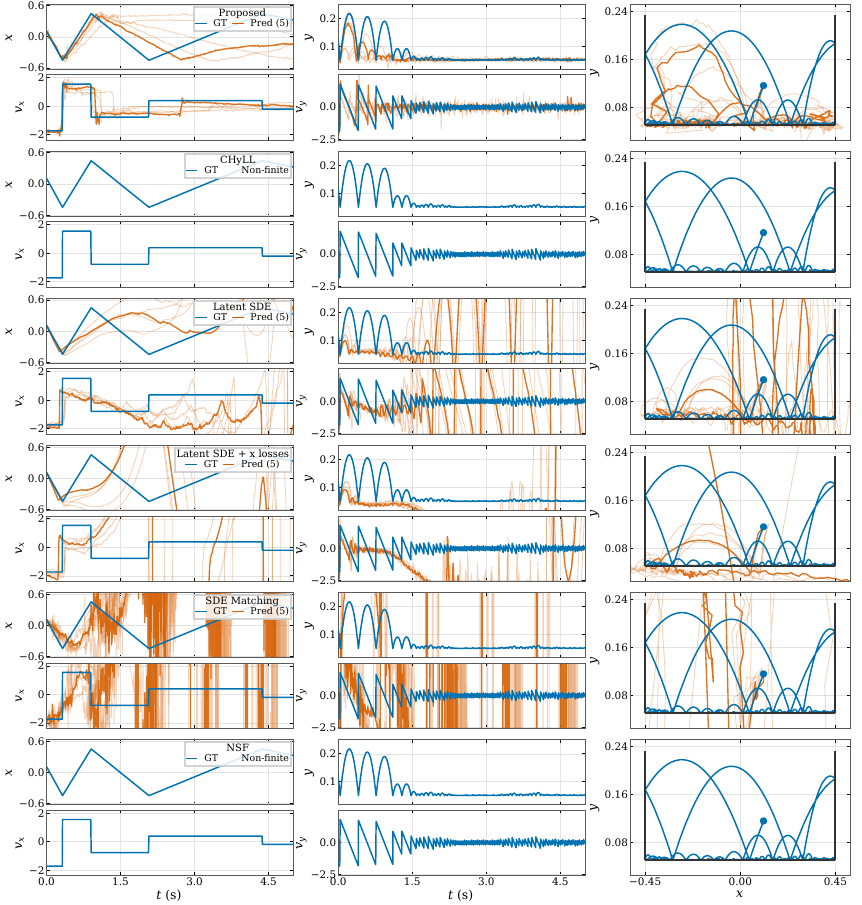}
    \caption{1-ball (GMM): a single planar bouncing ball with
    GMM-distributed wall and floor restitution coefficients.
    The left $2\times2$ panels show $(x,y)$ above $(v_x,v_y)$;
    the right panel shows the spatial path with radius-adjusted
    ball-center contact boundaries.}
    \label{fig:app-bb2-one-gmm}
\end{figure}

\begin{figure}[p]
    \centering
    \includegraphics[width=\linewidth]{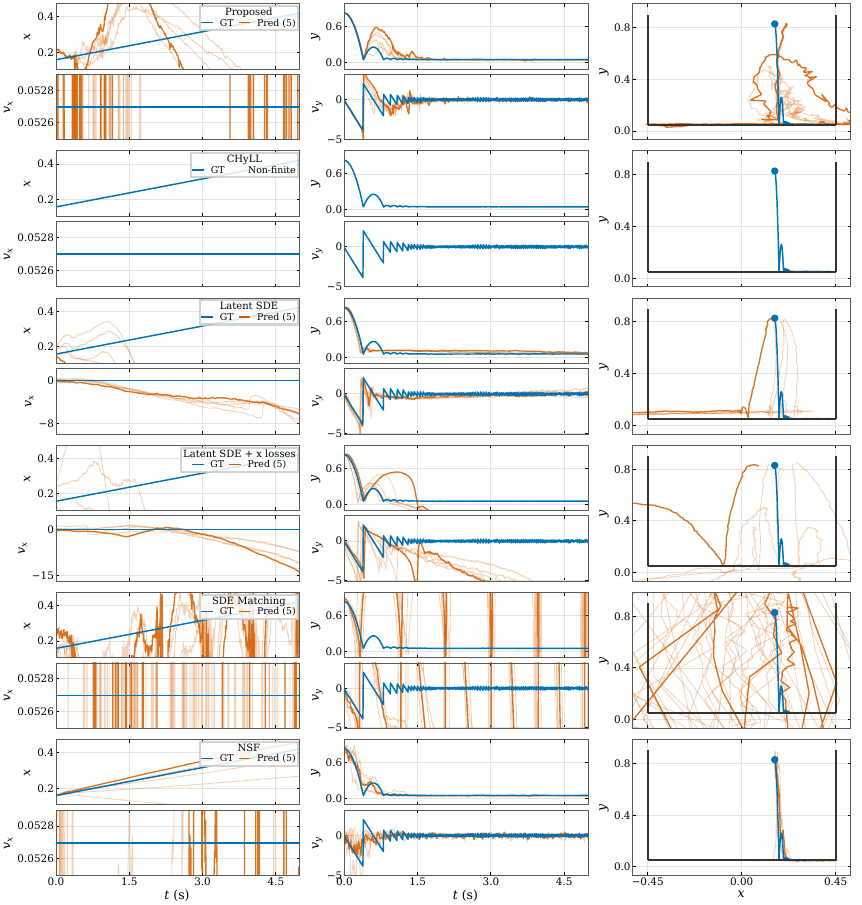}
    \caption{1-ball (Uniform): a single planar bouncing ball with
    uniformly distributed wall and floor restitution coefficients.
    The left panels show the four state components over time;
    the right panel shows the spatial path and ball-center
    contact boundaries.}
    \label{fig:app-bb2-one-uniform}
\end{figure}

\begin{figure}[p]
    \centering
    \includegraphics[width=\linewidth]{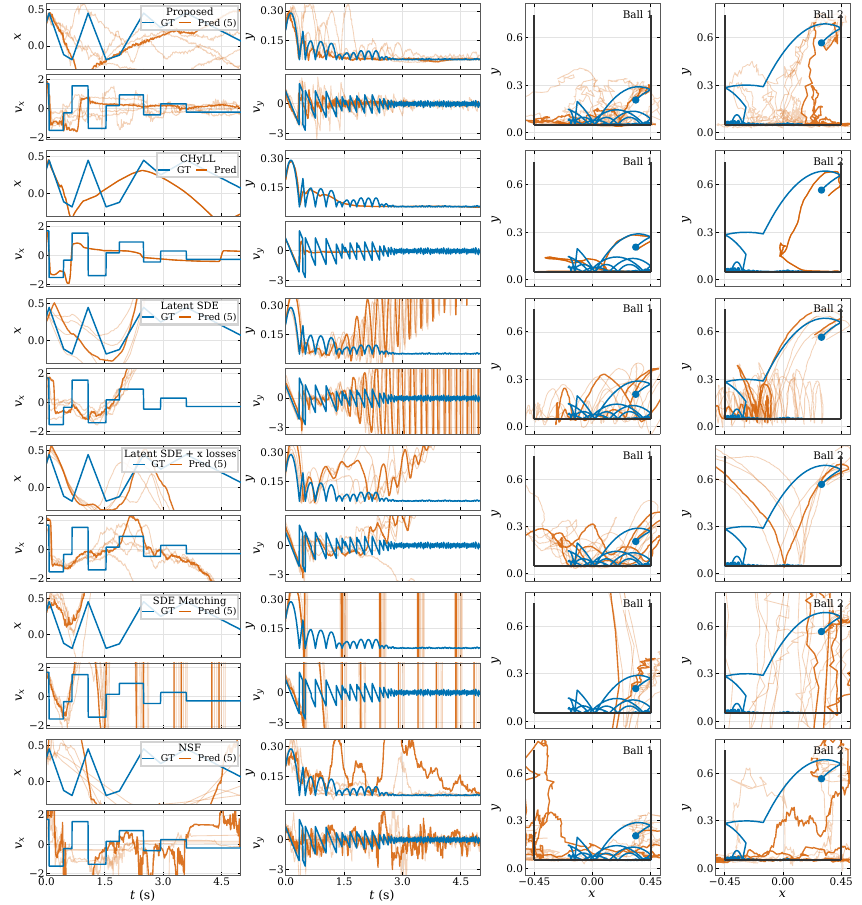}
    \caption{2-balls (GMM): two interacting planar bouncing balls
    with GMM-distributed wall and floor restitution coefficients.
    The left panels show the first ball's four state components;
    the two right panels show the spatial paths of both balls
    and their ball-center contact boundaries.}
    \label{fig:app-bb2-two-gmm}
\end{figure}

\begin{figure}[p]
    \centering
    \includegraphics[width=\linewidth]{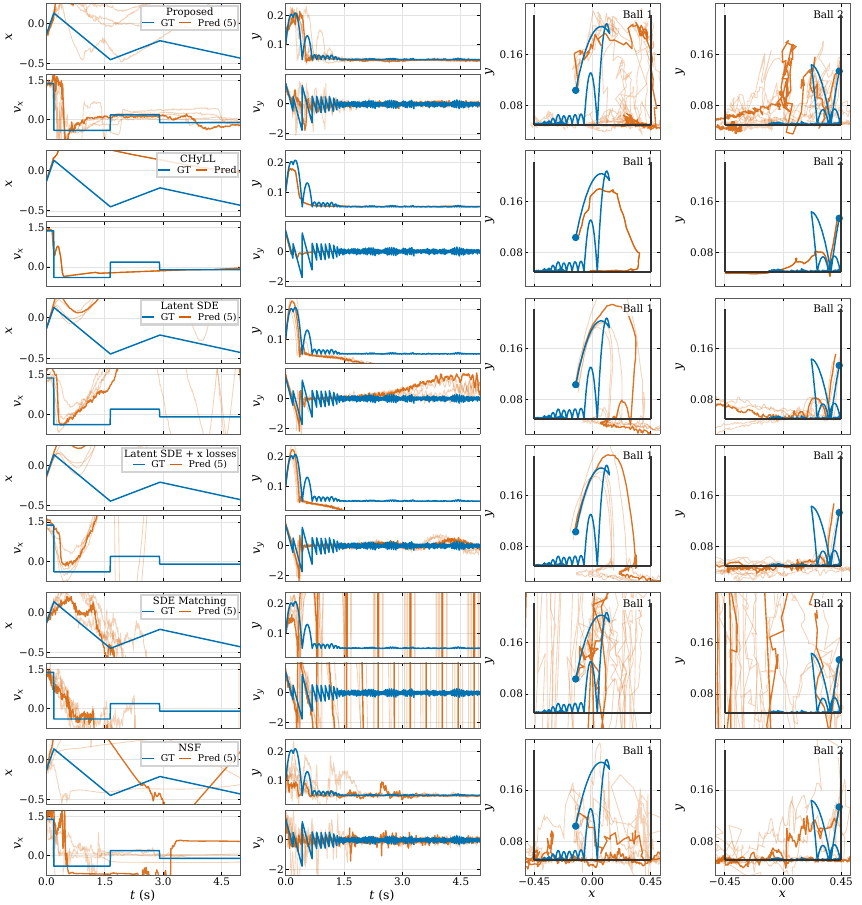}
    \caption{2-balls (Uniform): two interacting planar bouncing balls
    with uniformly distributed wall and floor restitution coefficients.
    The left panels show the first ball's four state components;
    the two right panels show the spatial paths of both balls
    and their ball-center contact boundaries.}
    \label{fig:app-bb2-two-uniform}
\end{figure}

\end{document}